
\documentclass[9.5pt,journal,compsoc]{IEEEtran}

%
\ifCLASSOPTIONcompsoc
  \usepackage[nocompress]{cite}
\else
  \usepackage{cite}
\fi

\usepackage[ruled]{algorithm2e} 

\usepackage{epsfig}
\usepackage{graphicx}
\usepackage{amssymb,amsmath,amsthm}
\usepackage{algorithmicx}
\usepackage{algpseudocode}
\usepackage{caption}
\usepackage{subcaption}
\usepackage{epstopdf}
\usepackage{dsfont}
\usepackage{float}
\usepackage{nccmath}
\usepackage{tikz}
\usepackage{url}
\usetikzlibrary{positioning,shapes}

\hyphenation{op-tical net-works semi-conduc-tor}

\graphicspath{ {figures/} }

\newtheorem{theorem}{Theorem}
\begin{document}
%
\title{Outlier Detection for Robust Multi-dimensional Scaling}

\author{Leonid~Blouvshtein,
        Daniel~Cohen-Or
        }
        
\newcommand{\lb}[1]{{\color{blue}\textbf{ }#1}\normalfont}
\newcommand{\dc}[1]{{\color{magenta}\textbf{ }#1}\normalfont}
\newcommand{\nf}[1]{{\color{red}\textbf{ }#1}\normalfont}

%



\IEEEtitleabstractindextext{%
\begin{abstract}
	Multi-dimensional scaling (MDS) plays a central role in data-exploration, 
dimensionality reduction and visualization. 
State-of-the-art MDS algorithms are not robust to outliers, yielding significant errors in the  embedding even when only a handful of outliers are present.
	In this paper, we introduce a technique to detect and filter outliers based on geometric reasoning. We test the validity of triangles formed by three points, and mark a triangle as broken if its triangle inequality does not hold. 
	The premise of our work is that unlike inliers, outlier distances tend to break many triangles. 
	Our method is tested and its performance is evaluated on various datasets and distributions of outliers. We demonstrate that for a reasonable amount of outliers, e.g., under $20\%$, our method is effective, and leads to a high embedding quality.
	
\end{abstract}
}

\maketitle

\section{Introduction}
\IEEEPARstart{M}ulti-dimensional Scaling (MDS) is a fundamental problem in data analysis and information visualization. MDS takes as input a 
distance matrix $D$, containing all $\binom{N}{2}$ pairs of distances between elements $x_i$, and embeds the elements in 
$d$ dimensional space such that the pairwise distances $D_{ij}$ are preserved as much as possible by $|| x_i - x_j ||$ in the embedded space.
When the distance data is outlier-free, state-of-the-art methods (e.g., SMACOF) provide satisfactory solutions 
\cite{de1988convergence,leeuw2008multidimensional}. These solutions are based on an optimization of the so-called \textbf{stress function},
which is a sum of squared errors between the dissimilarities $D_{ij}$ and their corresponding embedding inter-vector distances:
\begin{equation*}\label{eq:stress}
Stress_D(x_1,x_2,...x_N)  = \sum_{i \neq j}^{} (D_{ij} - || x_i - x_j ||)^2.
\end{equation*}

In many real-world scenarios, input distances may be noisy or contain outliers, due to
malicious acts, system faults, or erroneous measures.
Many MDS techniques deal with noisy data \cite{de2004sparse,chan2009efficient}, but little attention has been given to outliers 
\cite{spence1989robust,forero2012sparsity,cayton2006robust}. 
We refer to outliers as opposed to noise, as distances that are significantly different than their corresponding true values.

\begin{figure}[t]
	\begin{center}
		\setlength\fboxsep{0pt}
		\setlength\fboxrule{0pt}
		\fbox{\rule{0pt}{0in}
			\includegraphics[trim={1cm 0 2cm 0},clip,width=0.5\linewidth]{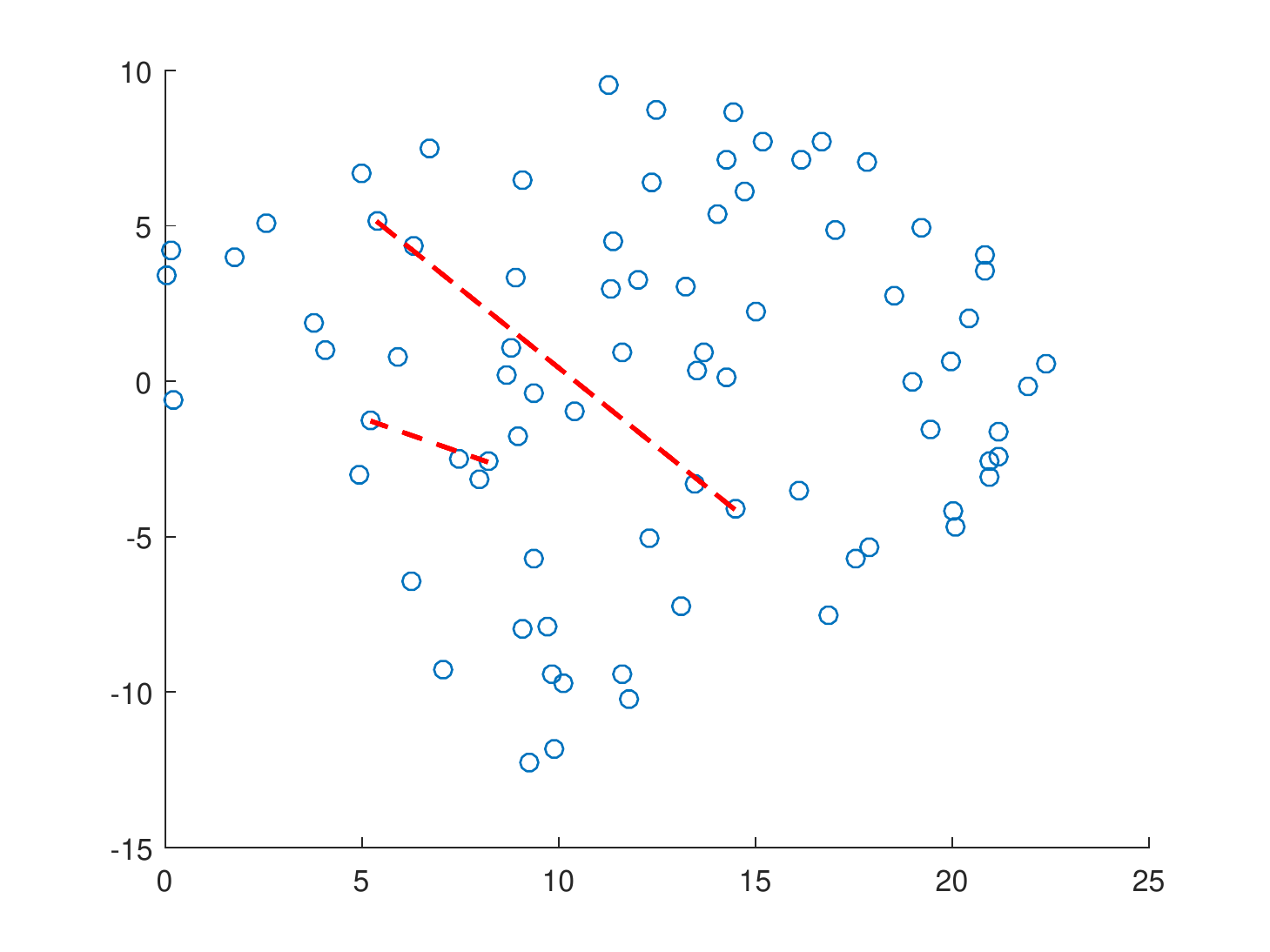} \includegraphics[trim={1cm 0 2cm 0},clip,width=0.5\linewidth]{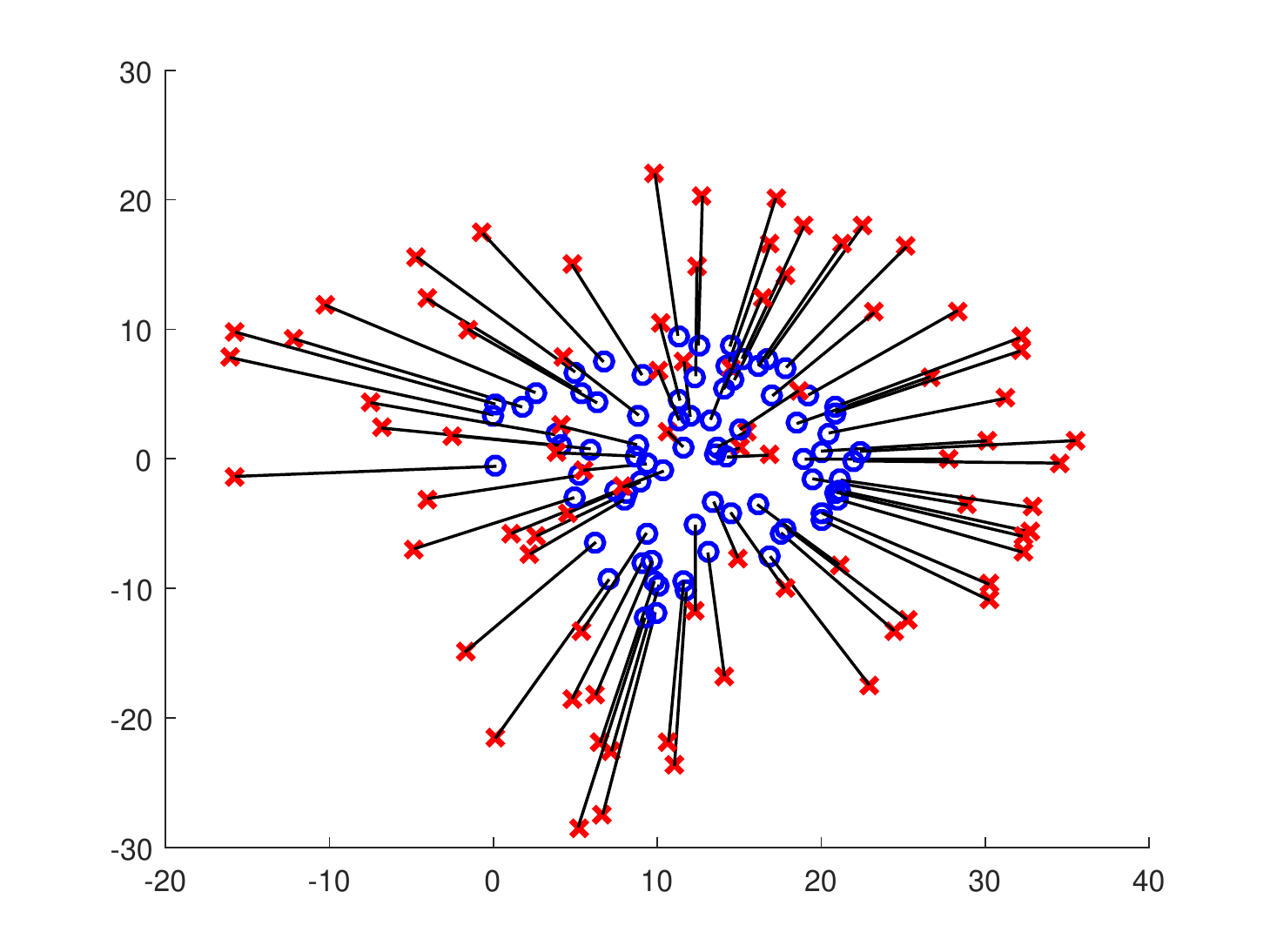}
		}		
	\end{center}
	\caption{Two outlier distances (marked in dashed lines on the left) lead to a significant distortion in the embedding,
       as reflected by the large offsets between ground-truth and embedded positions, shown on the right.
        }
	\label{fig:1}
\end{figure}

Developing a robust MDS is challenging since even a small portion of outliers can lead to significant errors.  This sensitivity of 
MDS to outliers is demonstrated in Figure \ref{fig:1}, where only two pairwise distances (out of 435 pairs) are erroneous (colored in red in (a)), cause a strong distortion in the embedding (b).
To highlight the embedding errors, we draw lines connecting between the ground truth positions and the embedded positions.

In this paper, we introduce a robust MDS technique, which detects and removes outliers from the data and hence provides a better embedding. 



Our approach is based on geometric reasoning with which the outliers can be identified by analyzing the given input distances. This approach follows the well-known idiom "prevention is better than a cure". 
That is, instead of recovering from the damage that the outliers cause to the MDS optimization, we prevent them in the first place, 
by detecting and filtering them out.

We treat the distances as a complete graph of $\binom{N}{2}$ edges. Each edge is associated with its corresponding distance and 
forms $N-2$ triangles with the rest of the $N-2$ nodes.
The premise of our work is that an outlier distance tends to \textbf{break} many triangles. We refer to a triangle as \textbf{broken} 
if its triangle inequality does not hold. As we shall show, while inlier edges participate in a rather small number of broken triangles, 
outlier edges participate in many. This allows us to set a conservative threshold and classify the edges and their associated distances 
as inliers and outliers (See Figure \ref{fig:bad_edge_visualization}).

Generally speaking, MDS is an overdetermined problem since the distance matrix contains many more distances than necessary to solve the 
problem correctly.
Hence, the idea is to detect distances that are suspected to be outliers and remove them before applying the MDS. In the following, we denote and refer to our robust MDS method with TMDS.
As we shall show, our technique succeeds in detecting and removing most of the outliers without any parameters, while incurring a rather small number of false positives, to facilitate a more accurate embedding. We tested, analyzed and evaluated our method on a large number 
of datasets with varying portions of outliers from various distributions.

\begin{figure*}[h]
	\centering
	\begin{subfigure}{0.21\textheight}
		\includegraphics[trim={1cm 0cm 2cm 0},clip,width=\linewidth]{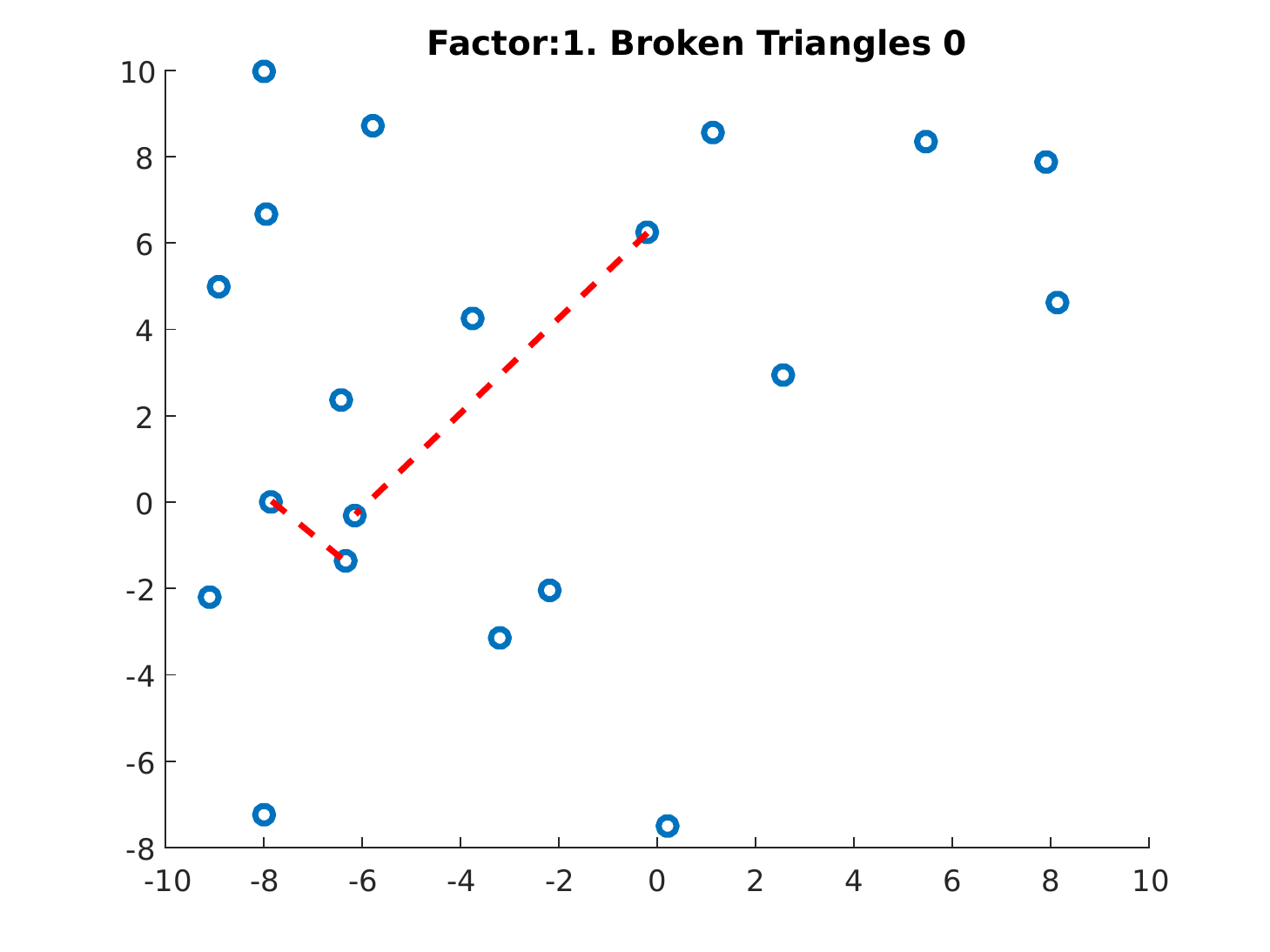}
		\caption{}
	\end{subfigure}
	\begin{subfigure}{0.21\textheight}
		\includegraphics[trim={1cm 0cm 2cm 0},clip,width=\linewidth]{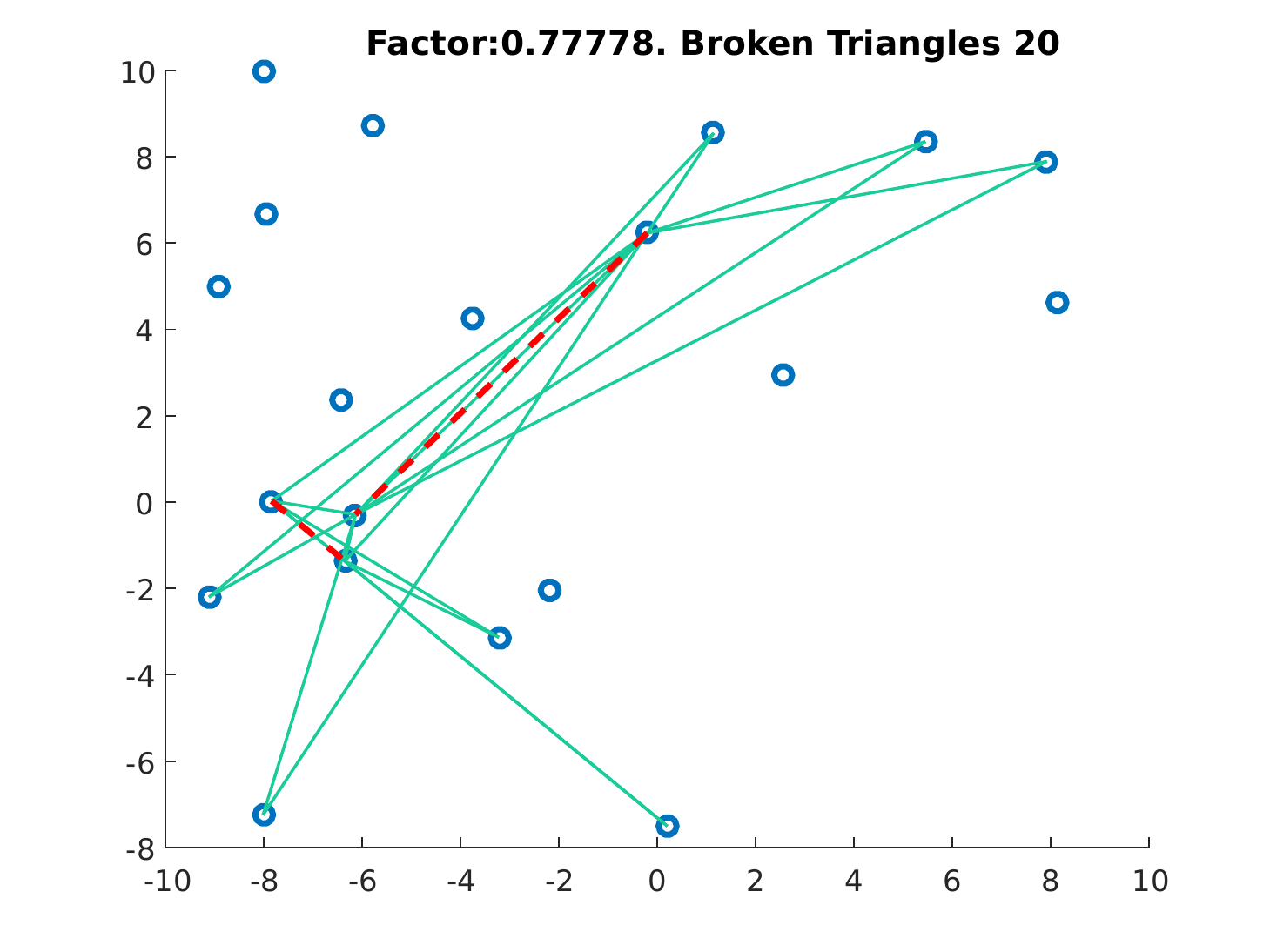}
		\caption{}
	\end{subfigure}
	\begin{subfigure}{0.21\textheight}
		\includegraphics[trim={1cm 0cm 2cm 0},clip,width=\linewidth]{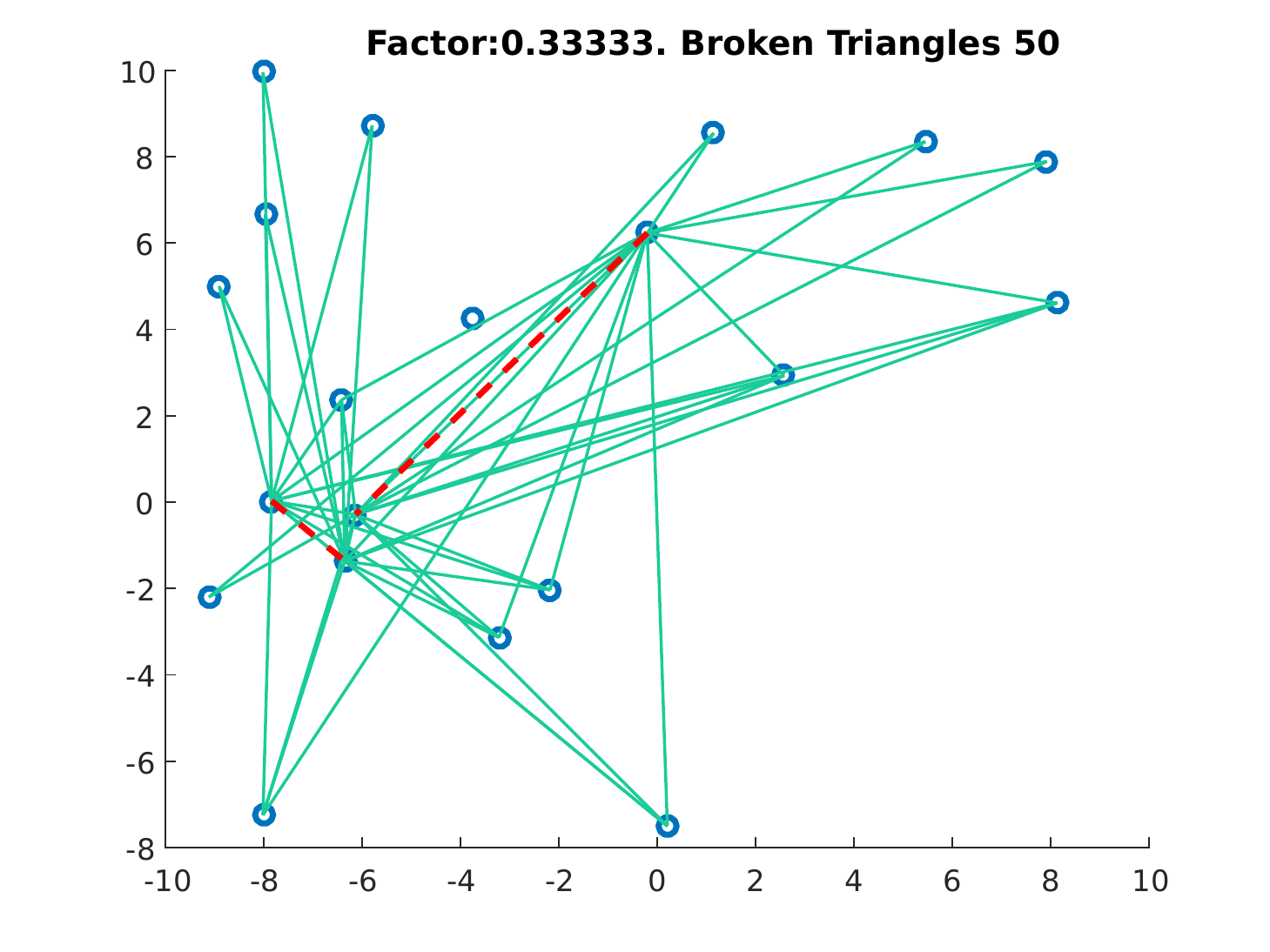}
		\caption{}
	\end{subfigure}

	\caption{The effect of distorting two distances, marked by red dashed lines (a). The green lines represent edges that violate 
        the triangle inequality (b). A stronger distortion leads to a larger number of broken triangles (c)}
	\label{fig:bad_edge_visualization}
\end{figure*}

\begin{figure}[b]
	\centering
	\begin{subfigure}[b]{0.11\textheight}
		\includegraphics[trim={1cm 0 2cm 0},clip,width=\linewidth]{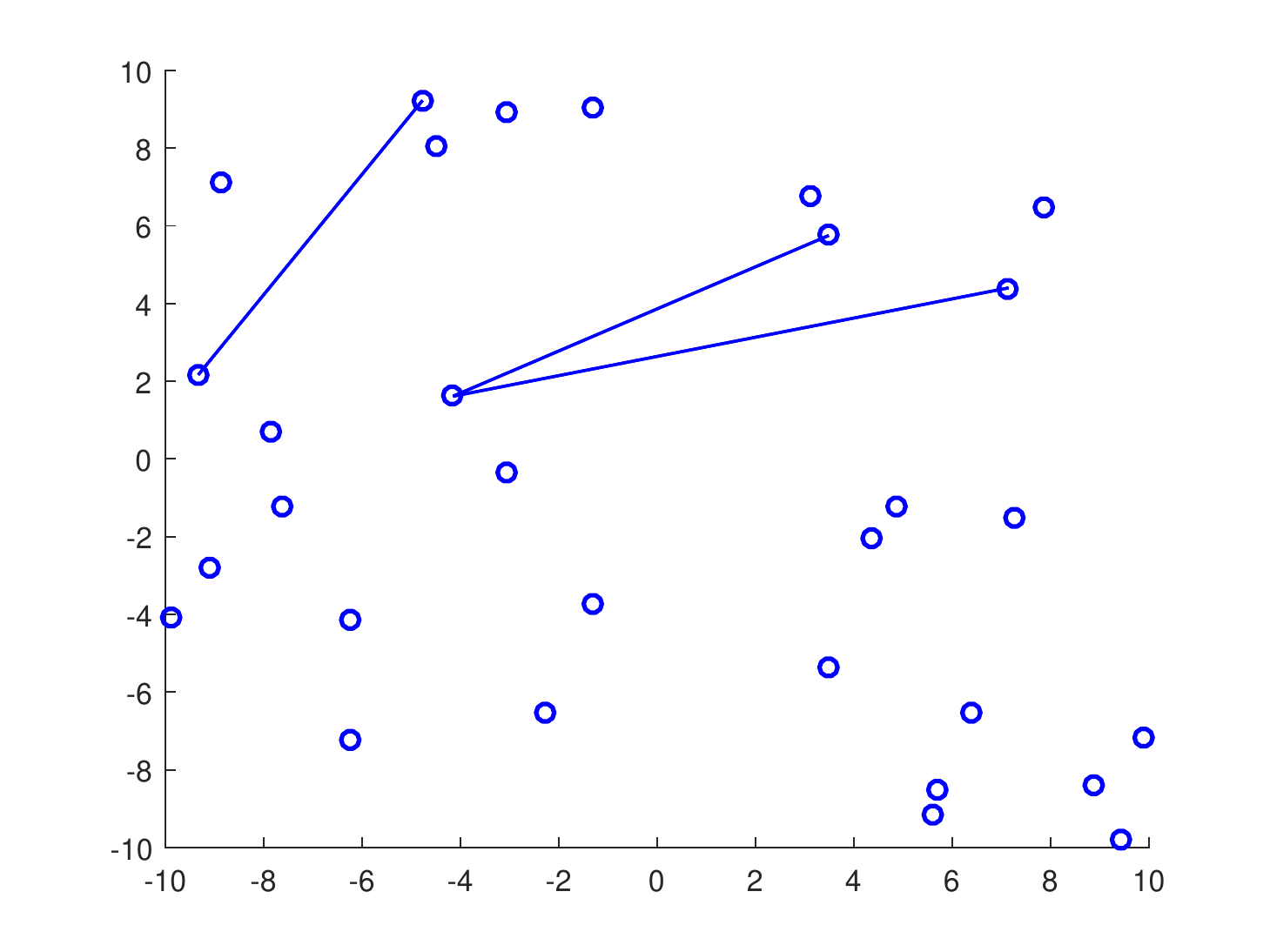}
		\caption{}
	\end{subfigure}
	\begin{subfigure}[b]{0.11\textheight}
		\includegraphics[trim={1cm 0 2cm 0},clip,width=\linewidth]{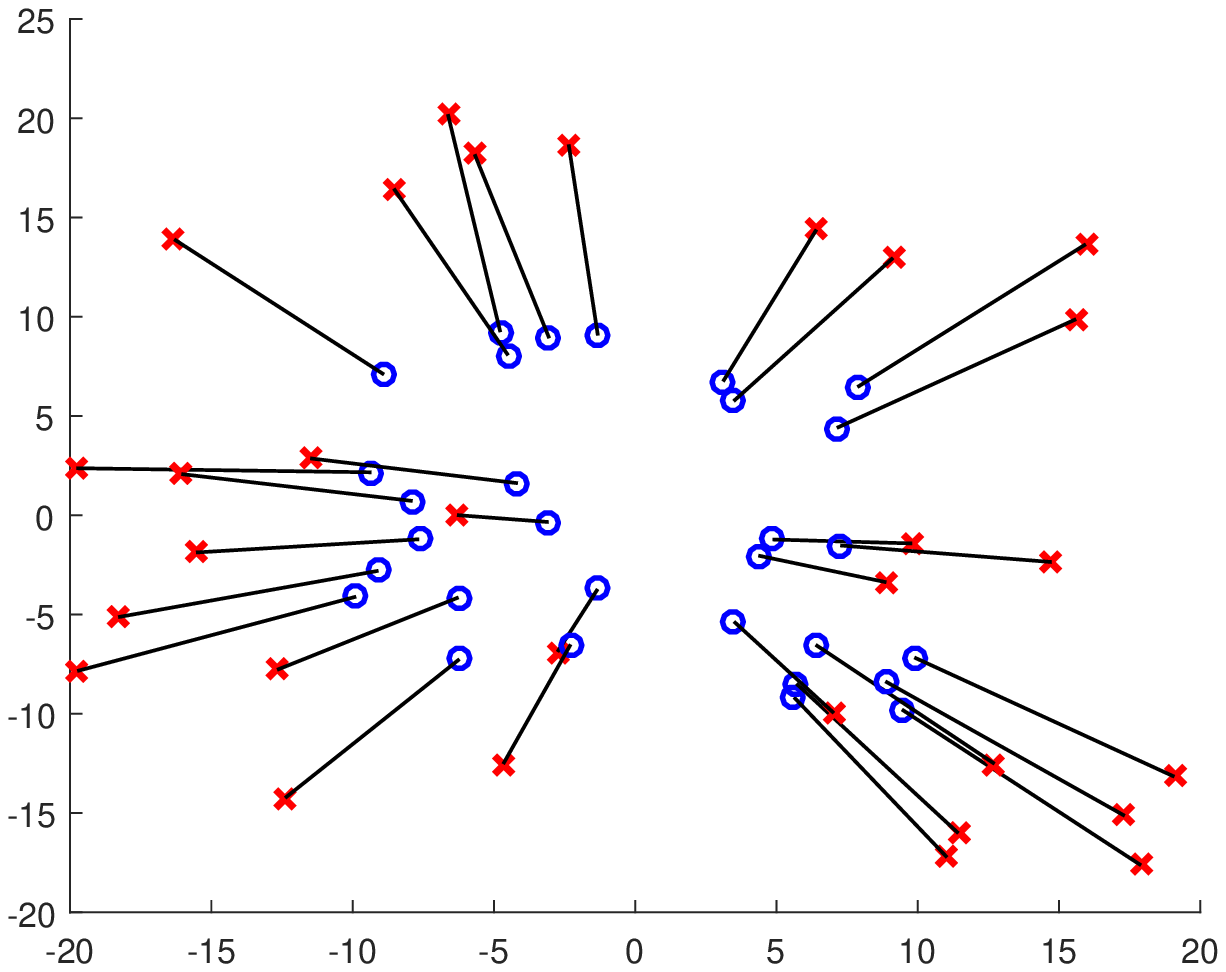}
		\caption{}
	\end{subfigure}
	\begin{subfigure}[b]{0.11\textheight}
		\includegraphics[trim={1cm 0 2cm 0},clip,width=\linewidth]{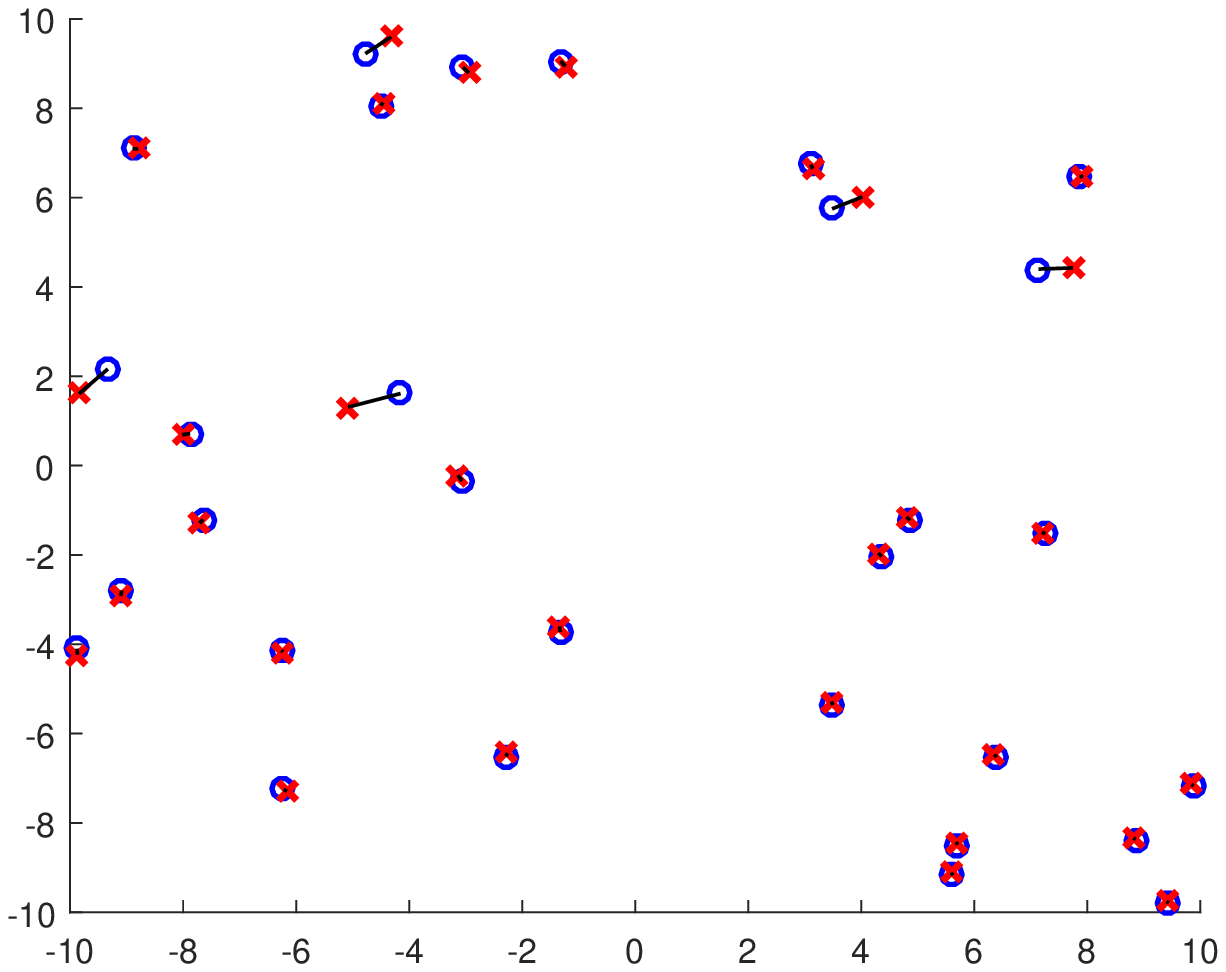}
		\caption{}
	\end{subfigure}
	
	\caption{(a) The blue lines represent the enlarged distances. (b) embedding with SMACOF. (c) Sammon embedding.}
	\label{fig:sammon_large}
\end{figure}

\begin{figure}[b]
	\centering
	\begin{subfigure}[b]{0.11\textheight}
		\includegraphics[trim={1cm 0 2cm 0},clip,width=\linewidth]{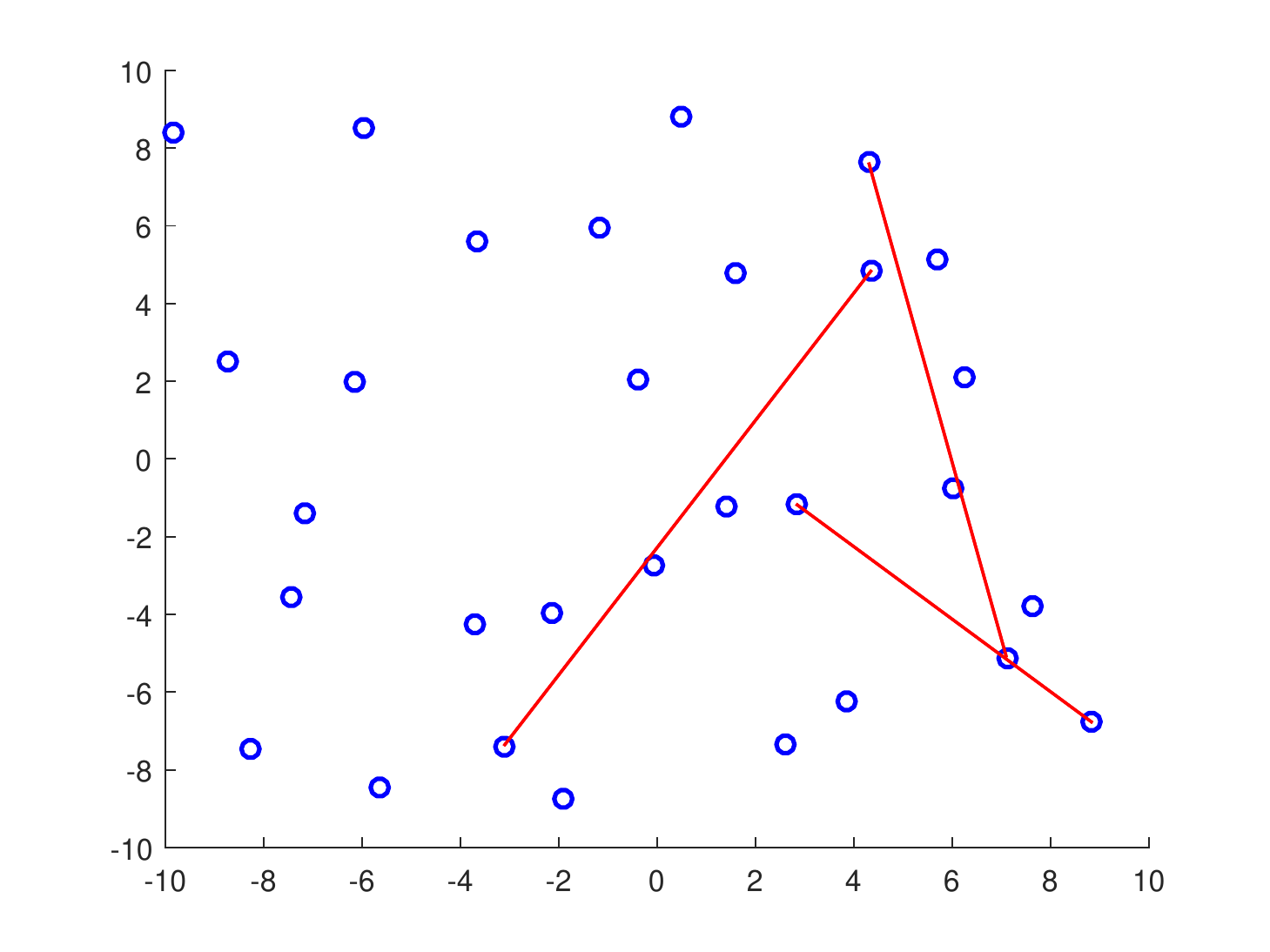}
		\caption{}
	\end{subfigure}
	\begin{subfigure}[b]{0.11\textheight}
		\includegraphics[trim={1cm 0 2cm 0},clip,width=\linewidth]{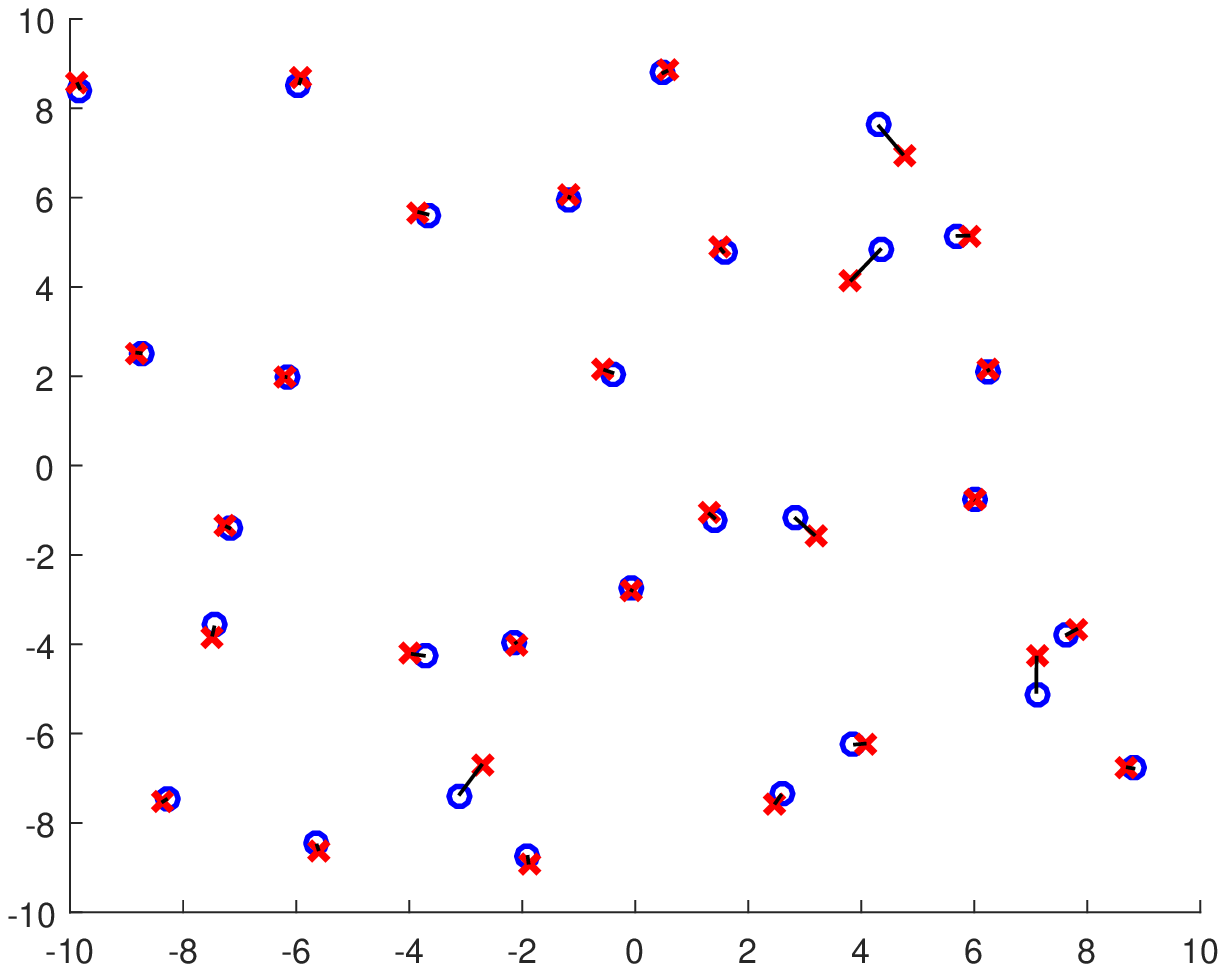}
		\caption{}
	\end{subfigure}
	\begin{subfigure}[b]{0.11\textheight}
		\includegraphics[trim={1cm 0 2cm 0},clip,width=\linewidth]{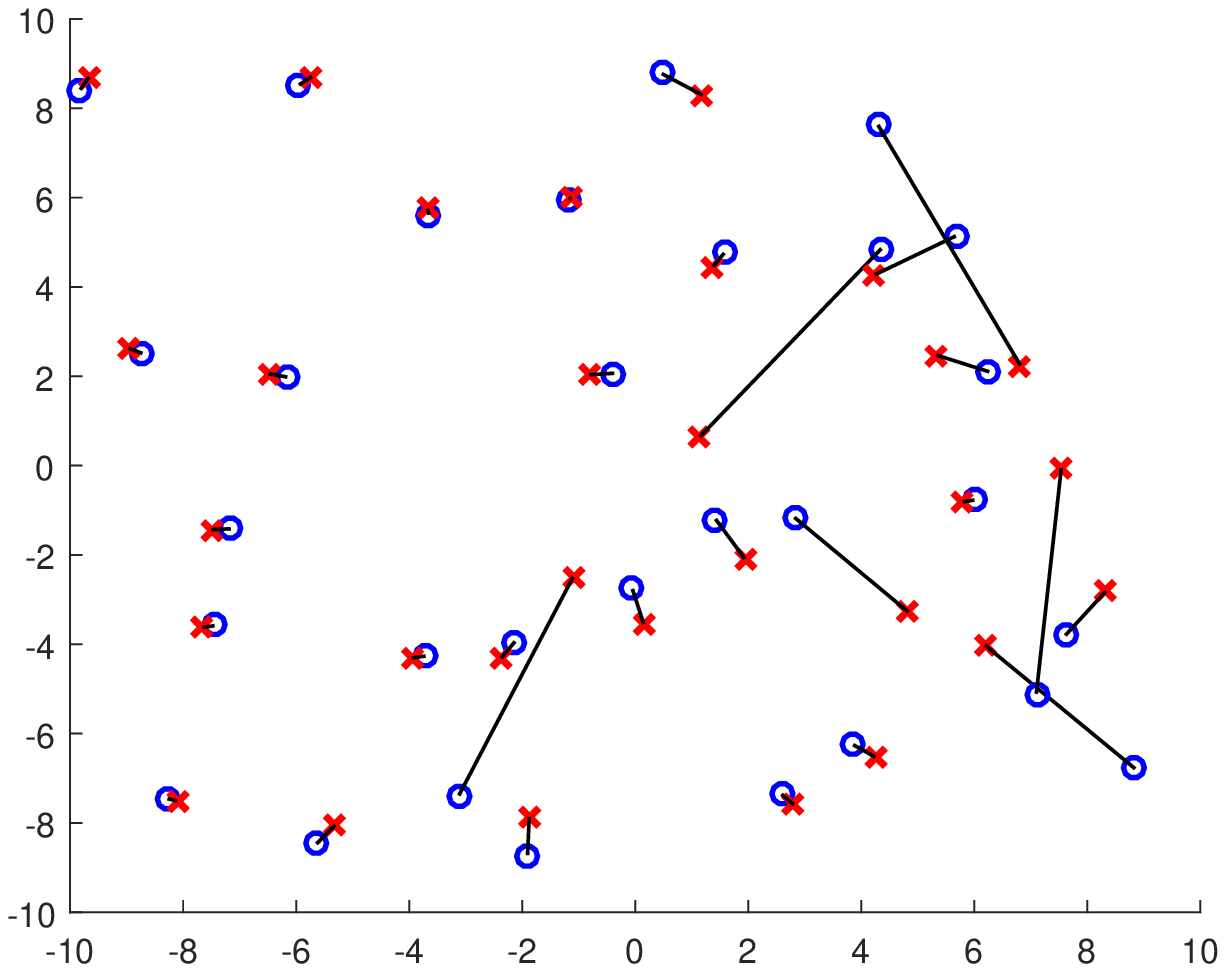}
		\caption{}
	\end{subfigure}

	\caption{(a) The blue lines represent the shortened distances. (b) embedding with SMACOF. (c) Sammon embedding.}
	\label{fig:sammon_short}
\end{figure}

\section{Background}

MDS was originally used and developed in the field of Psychology, as a means to visualize perceptual relations among 
objects\cite{Krusk64_0,Shep62_0}.
Nowadays, MDS is used in a wide variety of fields, such as marketing \cite{Market_0}, and graph embedding \cite{shepard1980multidimensional}.
Most notably, MDS plays a central role in data exploration \cite{borg2005modern, buja2008data,buja2002visualization}, and computer graphics applications like texture mapping \cite{zigelman2002texture}, shape classification and retrieval \cite{chen20103d,pickup2014shrec,Li:2015} and more.



Several methods were suggested to handle outliers in the data (e.g., \cite{spence1989robust,cayton2006robust}).
Using Sammon weighting \cite{sammon1969nonlinear} leads to the following stress function:
\begin{equation*}\label{eq:sammon_stress}
\sum_{i \neq j}^{} \dfrac{(D_{ij} - || x_i - x_j ||)^2} {D_{ij}}.
\end{equation*}

This objective function can effectively be considered as robust to elongated distances since it decreases
the weights of long distances.
We differentiate between two types of outliers: \textbf{larger} and \textbf{shorter} outliers (colored in blue and red, respectively, 
in Figures \ref{fig:sammon_large} and \ref{fig:sammon_short}), since their characteristics and effects are different and may thus require 
different treatments. In (a) we show 2D data elements, where the outliers are marked in red and blue. In (b) we show their positions recovered by applying a state-of-the-art MDS (i.e., SMACOF) and in (c) the results of Sammon
method \cite{sammon1969nonlinear}.
As can be observed, Sammon method can deal well with elongated distances, by assigning them with low weights. However,
shortened distances are not dealt with well, as they are assigned larger weights which lead to a distorted embedding.

The most related work to ours is the method presented by Forero and Giannakis \cite{forero2012sparsity}, hereafter referred to as FG12.
They use an objective function $F(X,O)$ that aims to find an embedding $X$ and an outliers matrix $O$ that minimize the following:
\begin{equation*}\label{eq:sammon_stress}
	\sum_{i < j}^{} (D_{ij} - || x_i - x_j || - O_{ij})^2 + \lambda \sum_{i < j}^{} \mathds{1} (O_{ij} \neq 0),
\end{equation*}
where $\lambda$  regulates the number of non-zero values in $O$ that represent outliers.  


Setting the size of $\lambda$ to control the sparsity of $O_{ij}$ is not easy. If $\lambda$ is too big, too few outliers are detected; 
if it is too small, too many edges are treated as outliers. As we shall show, close values of $\lambda$ can lead to different results. 
This phenomenon is shown in Figure \ref{fig:lambda_experiments} (a-c). Thus, careful tuning of $\lambda$ is required to achieve good
results. This is an overly complex process, since, as well shall show, the algorithm is also sensitive to the initial guess.


Note that $X$ has $d \times N$ unknown variables and $O$ has $\binom{N}{2}$ variables. 
This amounts to a considerable increase in the number of parameters, and hence it is significantly harder to optimize FG12 compared to
SMACOF.
This is evident in Figure ~\ref{fig:lambda_experiments} where we show (d-f) that with the \textbf{same} $\lambda$ applied to the
\textbf{same} dataset, but with different initial guesses. As shown, the three initial guesses yield a different number of edges
that are considered as outliers. This behavior of the FG12 method can also be observed in Figure \ref{fig:initial_guess}. Note that for
the yellow curve, $\lambda = 1.8$ is the value that detects the correct number of outliers. However, for the same value of $\lambda$ 
the blue plot detects most of the edges as outliers. This highlights the sensitivity of the FG12 method and emphasizes that we cannot set
the value of $\lambda$ even when we have a good estimation of the number of outliers in the system.

\begin{figure}[t]
	\centering
	\begin{subfigure}[b]{0.11\textheight}
		\includegraphics[trim={1cm 0 2cm 0},clip,width=\linewidth]{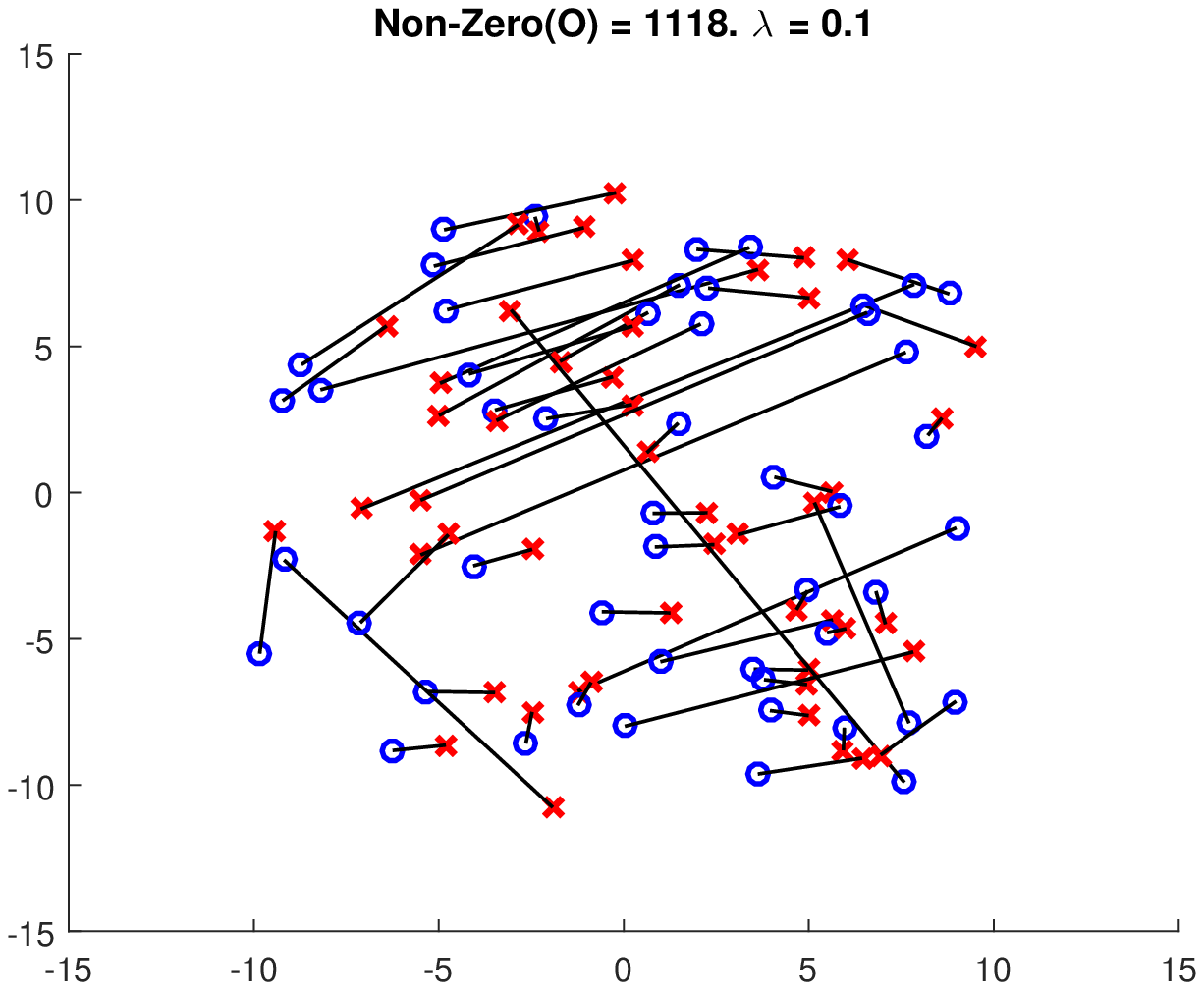}
		\caption{}
	\end{subfigure} %
    \begin{subfigure}[b]{0.11\textheight}
       	\includegraphics[trim={1cm 0 2cm 0},clip,width=\linewidth]{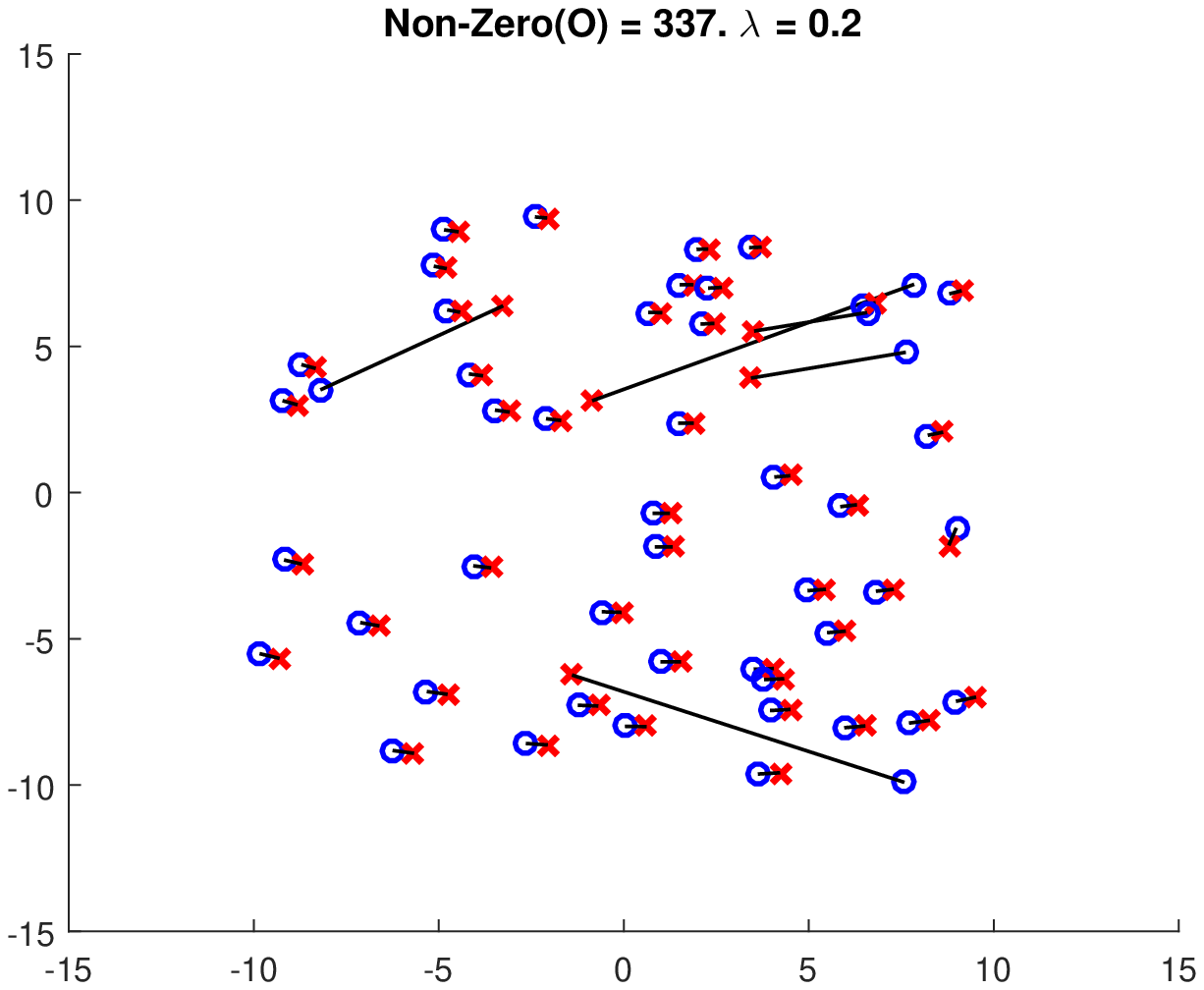}
       	\caption{}
    \end{subfigure} %
    \begin{subfigure}[b]{0.11\textheight}
	 	\includegraphics[trim={1cm 0 2cm 0},clip,width=\linewidth]{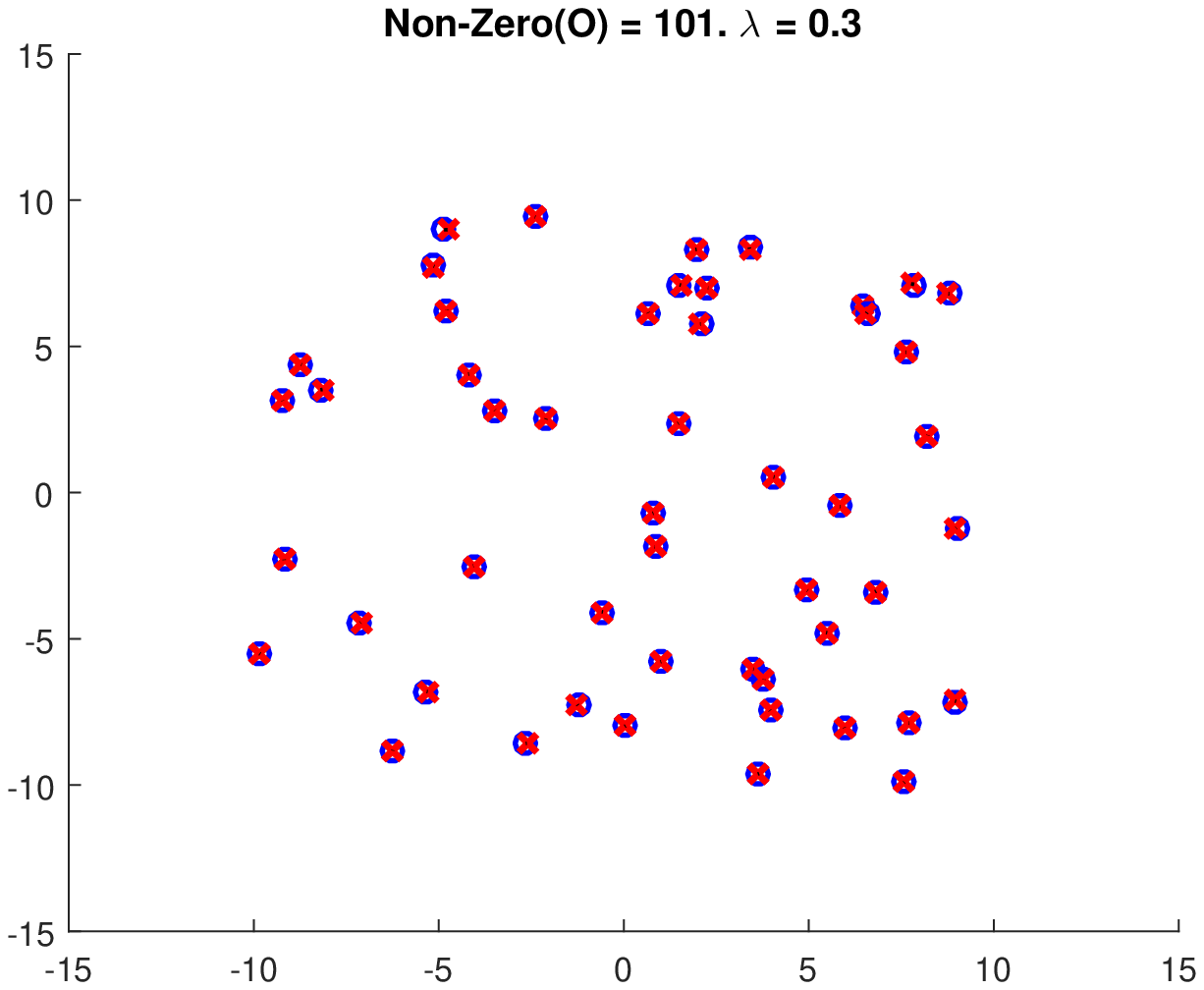} 
	 	\caption{}
	\end{subfigure}
	\begin{subfigure}[b]{0.11\textheight}
		\includegraphics[trim={1cm 0 2cm 0},clip,width=\linewidth]{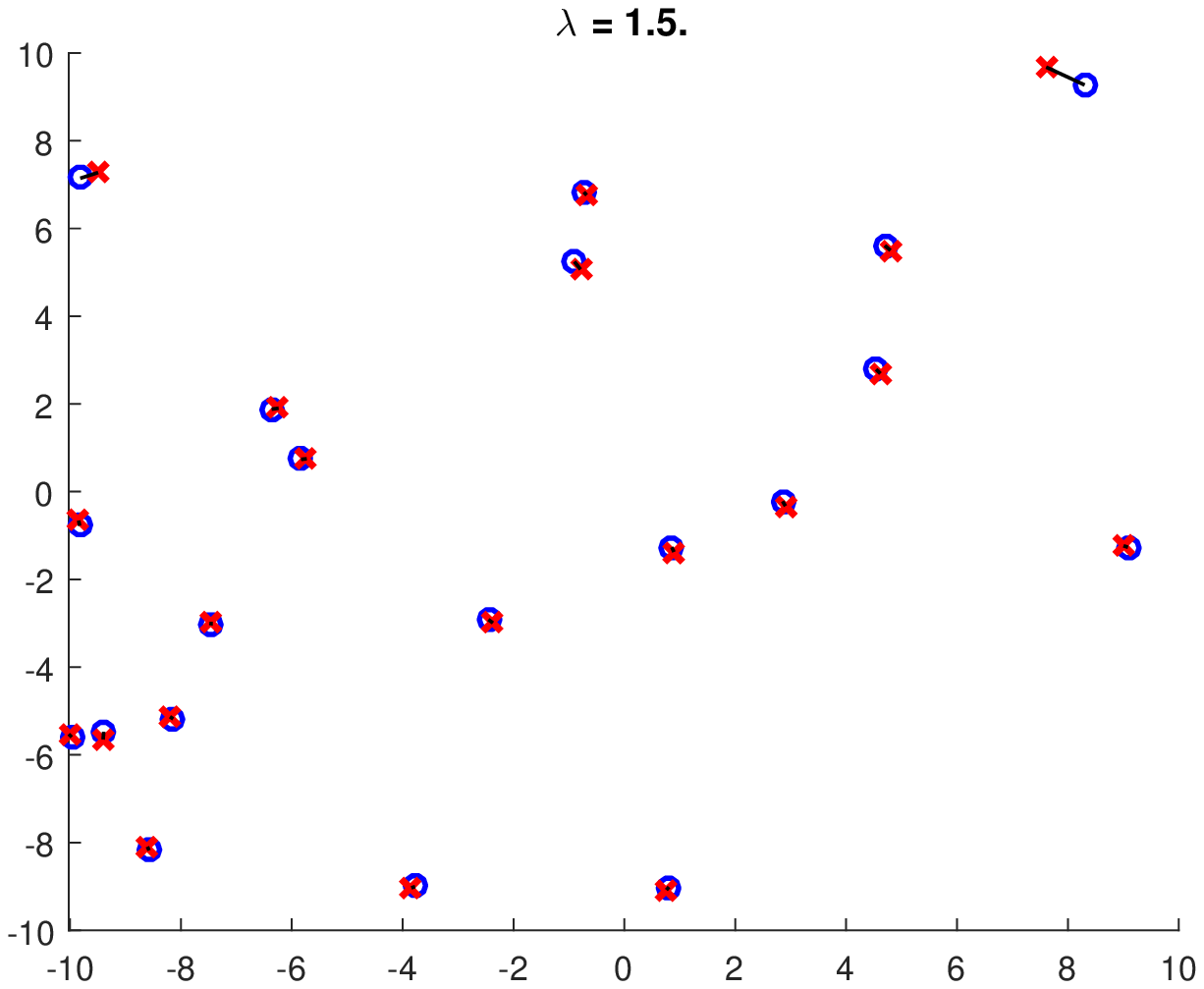}
		\caption{}
	\end{subfigure}
	\begin{subfigure}[b]{0.11\textheight}
		\includegraphics[trim={1cm 0 2cm 0},clip,width=\linewidth]{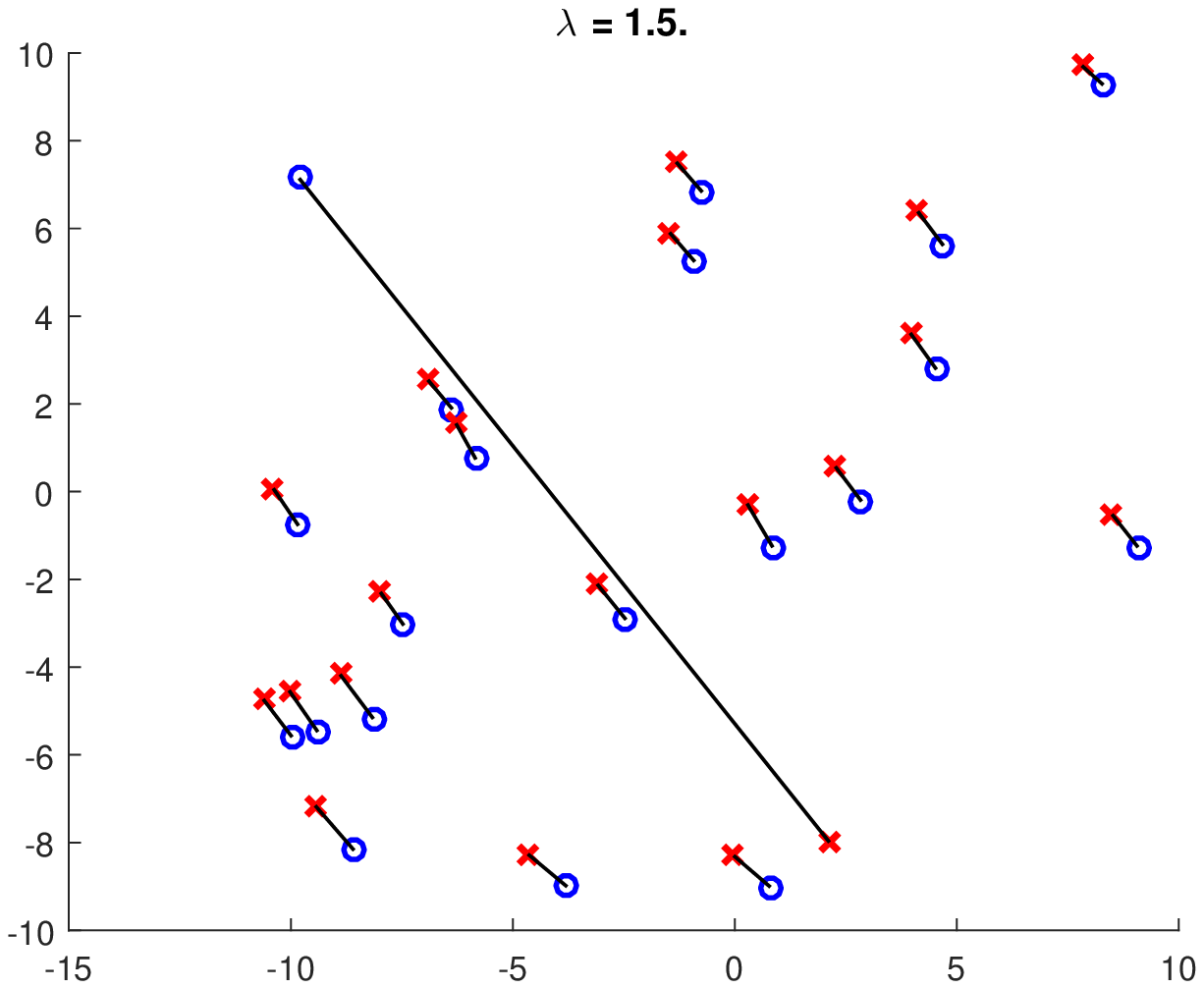}
		\caption{}
	\end{subfigure}
	\begin{subfigure}[b]{0.11\textheight}
		\includegraphics[trim={1cm 0 2cm 0},clip,width=\linewidth]{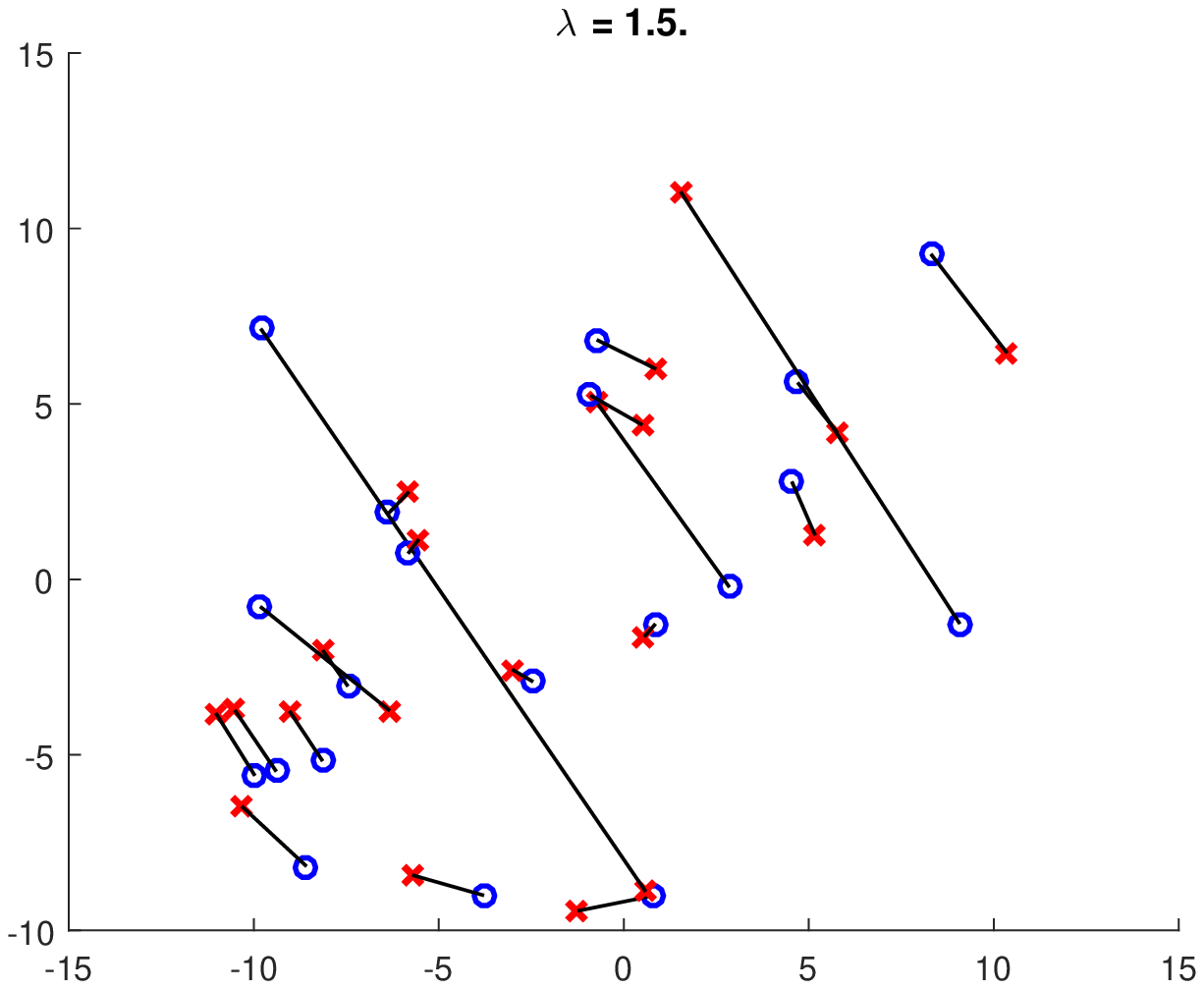}
		\caption{}
	\end{subfigure}

	\caption{(a-c)  Different $\lambda$ applied to the same dataset with the same initial guess, leads to different embedding 
        qualities. (d-f) Same $\lambda$ applied to the same datasets with different initial guesses, yields different embedding qualities.}

	\label{fig:lambda_experiments}
\end{figure}

\begin{figure}[b]
	\begin{center}
		\setlength\fboxsep{0pt}
		\setlength\fboxrule{0pt}
		\fbox{\rule{0pt}{0in}
			\includegraphics[width=\linewidth]{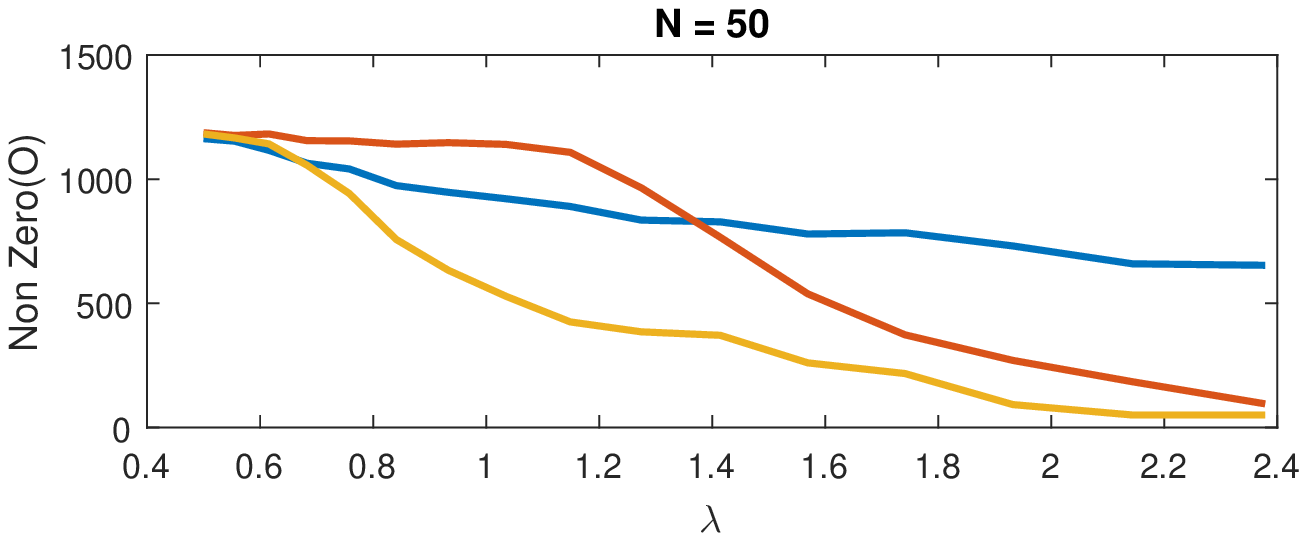}
			
		}		
	\end{center}
	\caption{This graph presents the number of non-zero elements in O (which represent outliers) as a function of $\lambda$. 
    The three plots were generated using different initial guesses that were uniformly sampled. This suggests that the FG12 method is overly sensitive to the initial guess.
	For this experiment we used $N = 50$ ($1225$ edges) and $100$ outliers}
	\label{fig:initial_guess}
\end{figure}

\section{Detecting outliers}

Our technique estimates the likelihood of each distance to be an outlier.
We treat the $\binom{N}{2}$ distances as a complete graph of $\binom{N}{2}$ edges connecting the $N$ vertices.
Each edge is associated with its corresponding distance and forms $N-2$ triangles with the rest of the $N-2$ elements. The key idea is that inlier edges participate in a rather small number of broken triangles, while outlier edges participate in many. As we shall see, by analyzing 
the histogram of broken triangles, we can set a conservative threshold and classify the edges and their associated distances as 
inliers and outliers. See Figure \ref{fig:bad_edge_visualization}.

Let $D$ represent the pairwise distances among graph vertices. In the presence of outliers, some of the edges do not represent a
correct Euclidean relation. In particular, an erroneous edge length tends to break a triangle formed by the edge and its two endpoint 
vertices, and a third vertex from among the rest of the $N-2$ vertices.
Recall that here, a broken triangle is one for which the triangle inequality does not apply.

We can easily identify all the broken triangles by traversing all triangles in the graph. For any triangle with edges of length
$(d_1, d_2, d_3)$ where $d_1 \leq d_2 \leq d_3$, we test: $ d_1 + d_2 < d_3 $ 
We then count for each of the edges in the graph, the
number of broken triangles it participates in.
This yields a histogram $H$, where $H(b)$ counts the number of edges that participate
in $b$ broken triangles. Figure \ref{fig:histogram} depicts such a typical histogram. As can be observed, most of the edges participate
in a small number of broken triangles. The long tail of the histogram is associated with outliers.

It should be noted that an outlier edge does not necessarily break all its triangles, but in large numbers it stands out, and as we 
shall see, is likely to be detected.

 \begin{figure}[t]
 	\begin{center}
 		\setlength\fboxsep{0pt}
 		\setlength\fboxrule{0pt}
 		\fbox{\rule{0pt}{0in}
 			\includegraphics[width=\linewidth]{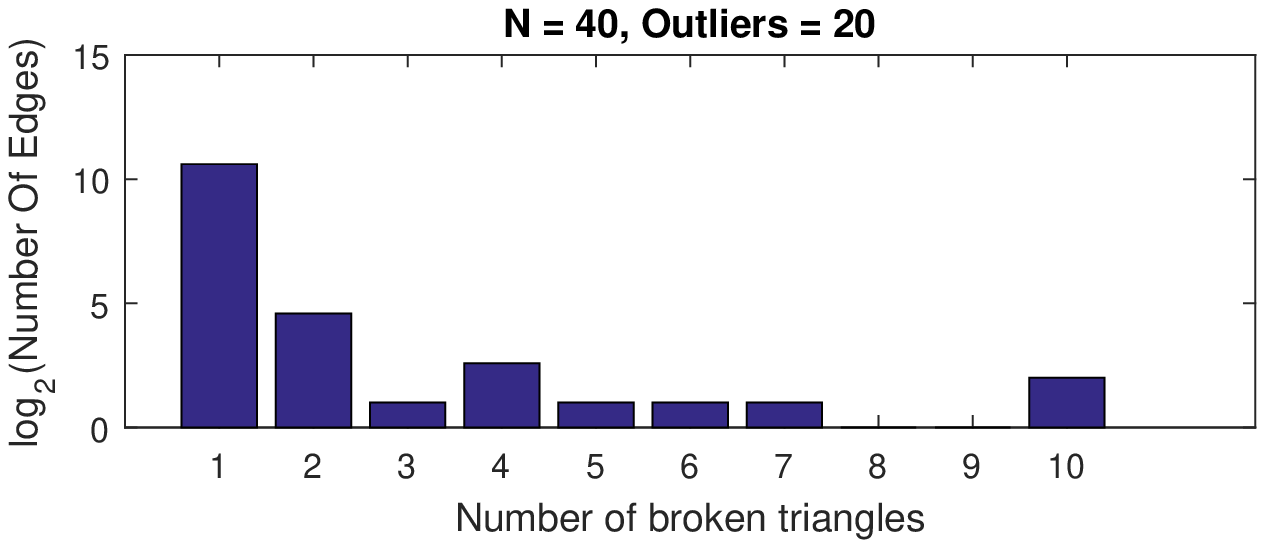} 			
 		}
 	\end{center}
 	\caption{Histogram H(b) counts the number of edges that break 'b' triangles. It can be seen that most of the edges break
         only a few triangles. The tail of the histogram is associated with outliers. The y-axis is logarithmic to better perceive the
         variance.}
 	\label{fig:histogram}
 \end{figure}
 
We wish to determine a threshold $\phi$ to classify the outlier. That is,
an edge which participates in more than $\phi$ broken triangles is classified as an ourlier.
The threshold $\phi$ cannot be set to accurately classify the inlier/outlier edges. Instead, we propose a simple way to determine
this threshold by analyzing the histogram $H$.
We set $\phi$ to be the smallest value that satisfies the following two requirements:

 \begin{enumerate}
 	\item\label{eq:phi_select1} $ \sum_{b=1}^{\phi} H(b) \geq |E| / 2 $
 	\item\label{eq:phi_select2} $ H(\phi + 1) > H (\phi) $

 \end{enumerate}

The first requirement assures that most of the edges are not considered as outliers. This assumption holds in most cases,
but can be adjusted according to the problem setting.
The second requirement corresponds to the observation that outlier edges tend to form a high bin along the tail of the histogram $H$ 
(Figure \ref{fig:histogram}). This simple heuristic performs well empirically (see Section \ref{sec:analysis}).

After the threshold is selected, we can remove the associated distances from the data,
and use the remaining distances to compute an embedding using MDS. A high-level pseudo code of TMDS is described in Algorithm \ref{alg:tmds}.

\begin{algorithm}
    \emph{Let $D$ be a dissimilarity $NxN$ matrix.}\;

    1. Calculate $Count_{ij}$ - the number of broken triangles where the edge $D_{ij}$ participates.

    2. Calculate the histogram $H(b)$, where the bin $H(b)$ counts the number of edges that participate in exactly $b$ broken triangles.

     3. Find the threshold $\phi$ according to \ref{eq:phi_select1} and \ref{eq:phi_select2}.
     
     4. Let $F$ be an $N\times N$ matrix where:
    $
    F_{ij} = \left.
  \begin{cases}
    0, & Count_{ij} > \phi \\
    1, & otherwise 
  \end{cases}
  \right\}
    $
    
    5. Execute a weighted MDS with an associated weight matrix $F$.

\caption{TMDS}
\label{alg:tmds}
\end{algorithm}

\section{Analysis}
\label{sec:analysis}

\subsection{Algorithm complexity}

Testing all the triangles to identify the broken ones, amounts to a time complexity of $O(N^3)$, an order of magnitude larger than that
of SMACOF ($O(N^2)$).
To avoid increasing the total time complexity of the MDS method, we can subsample $O(N^2)$ triangles and build the histogram based only
on them. We can use uniform sampling, where, for every edge $D_{ij}$, we sample a constant number of points to
form a constant number of triangles.
As can be observed in Figure \ref{fig:sampling_triangles}, testing too few triangles, may impair the detection rate.
It can be seen that 45 triangles per edge are enough to detect most of the outliers, and that it scales well with N. 
Empirically, we observed that sampling twice as many triangles as the expected number of outliers is an effective rule-of-thumb. 
In practice, for $N = 100$  TMDS without any sub-sampling takes only $2$ seconds using a non-optimized
implementation in Matlab, while computing the embedding itself using SMACOF takes $1.9$ seconds. This suggests that the filtering step does not incur significant overhead. See Table \ref{table:benchmark}.

\begin{table}[t]
\centering
\begin{tabular}{llll}
\hline
N &  SMACOF    & TMDS  &  FG12  \\ 
\hline
150   & 2      & 4.84 & 15.9  \\
300   & 3.5    & 9.35 & 59.4 \\
450   & 5.5    & 19.6 & 105.5  \\
600   & 8.9    & 32.2 & 207.8  \\
750   & 13.5   & 46.9 &  309.4  \\
900   & 14.8   & 59.9 & 491.4  \\

\hline
\end{tabular}
\caption{CPU time in seconds of three embedding algorithms of data sampled uniformly from a 2d unit hypercube with a various number ($N$) of objects. The distance matrix was contaminated with $10\%$ outliers. All the methods were implemented in Matlab and tested on a single core of i3-6200U processor. For TMDS we sampled 100 triangles per edge, and for FG12 we used $\lambda=2$. 
TMDS is faster than FG12 and compared to SMACOF has a close to constant multiplicative overhead, as expected.
}
\label{table:benchmark}
\end{table}

\begin{figure}[h]
	\begin{center}
		\setlength\fboxsep{0pt}
		\setlength\fboxrule{0pt}
		\fbox{\rule{0pt}{0in}
			\includegraphics[width=\linewidth]{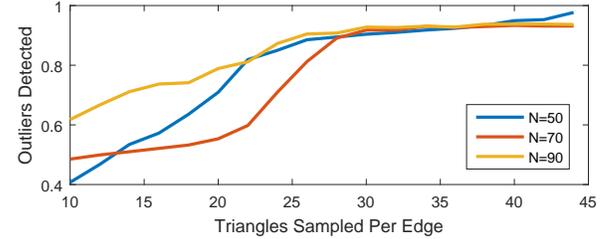}
			
		}
	\end{center}
	\caption{The number of outliers detected as a function of the number of subsampled per-edge triangles. 
        The data consists of 15 outliers. }
	\label{fig:sampling_triangles}
\end{figure}

\subsection{Evaluation}

To evaluate TMDS, we use synthesized data of various magnitudes, dimensions, and portions of outliers. We measure
the precision-recall performance of detecting the outliers, and the quality of the embedding with and without our outlier filtering.
More precisely, we synthesize ground-truth data by randomly sampling $N$ points in a $d$ dimensional hypercube,
and compute the pairwise distances $D$ between them. We randomly pick $M$ elements and replace them with a random element from
the distance matrix.

\vspace{0.2cm}

\noindent\textbf{Qualitative evaluation.} We use Shepard Diagrams to visually display the classification of data elements as either 
outliers or inliers.
In the diagram in Figure \ref{fig:shepard_tmds}, each point represents a distance. The X-axis represents the input distances and the
Y-axis represents the distance in the embedding result. Points on the main diagonal are inliers representing the distances that are
correctly preserved in the embedding. The red circles represent the distances that TMDS detects as outliers. The blue
off-diagonal points are the false negatives, and the red dots on the diagonal are false positive distances. The number of false
positives and negatives increases as the number of outliers increases.

\begin{figure}[b]
	\centering
	\begin{subfigure}[b]{0.17\textheight}
		\includegraphics[trim={1cm 0 1cm 0},clip,width=\linewidth]{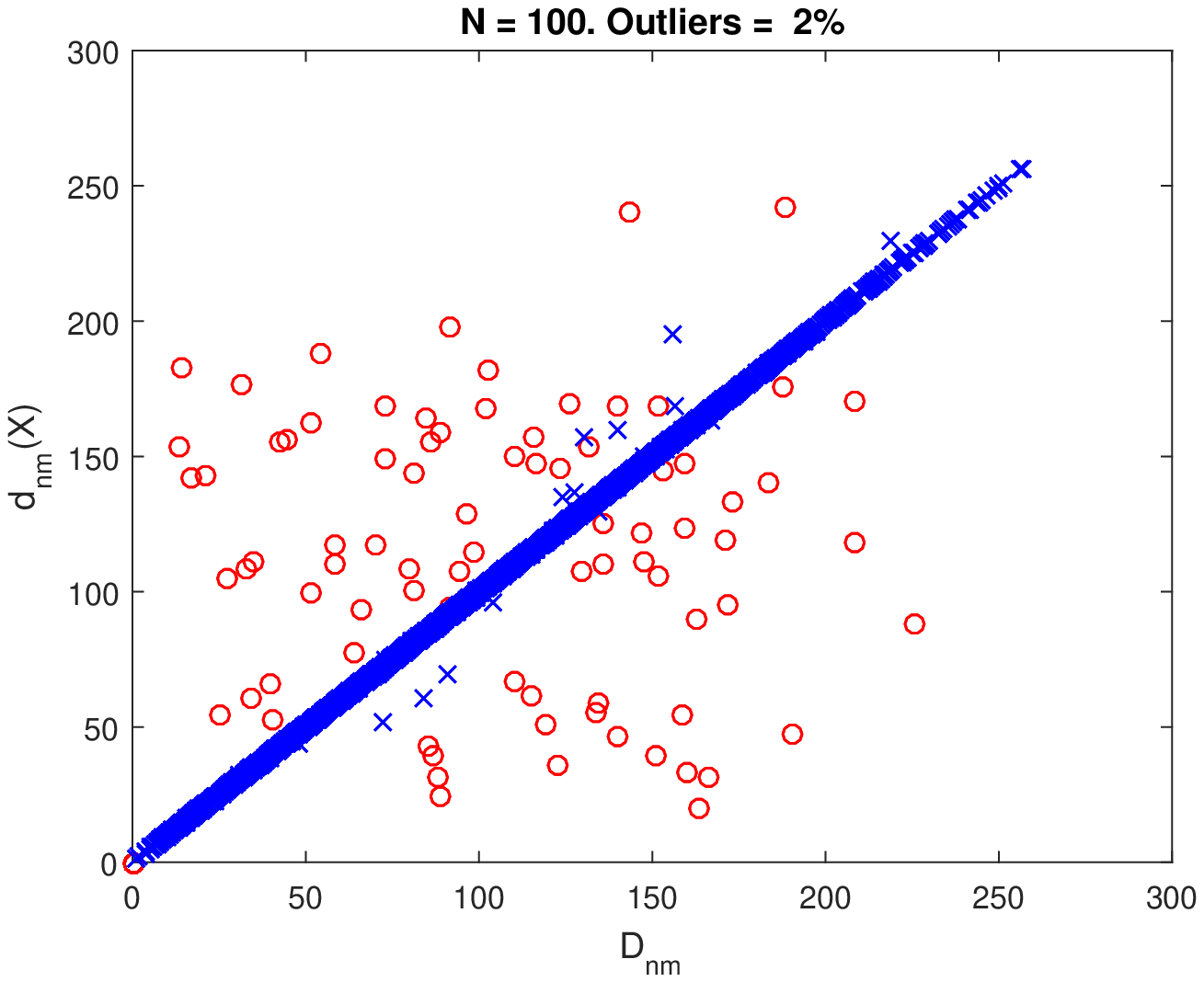}
		\caption{}
	\end{subfigure}
	\begin{subfigure}[b]{0.17\textheight}
		\includegraphics[trim={1cm 0 1cm 0},clip,width=\linewidth]{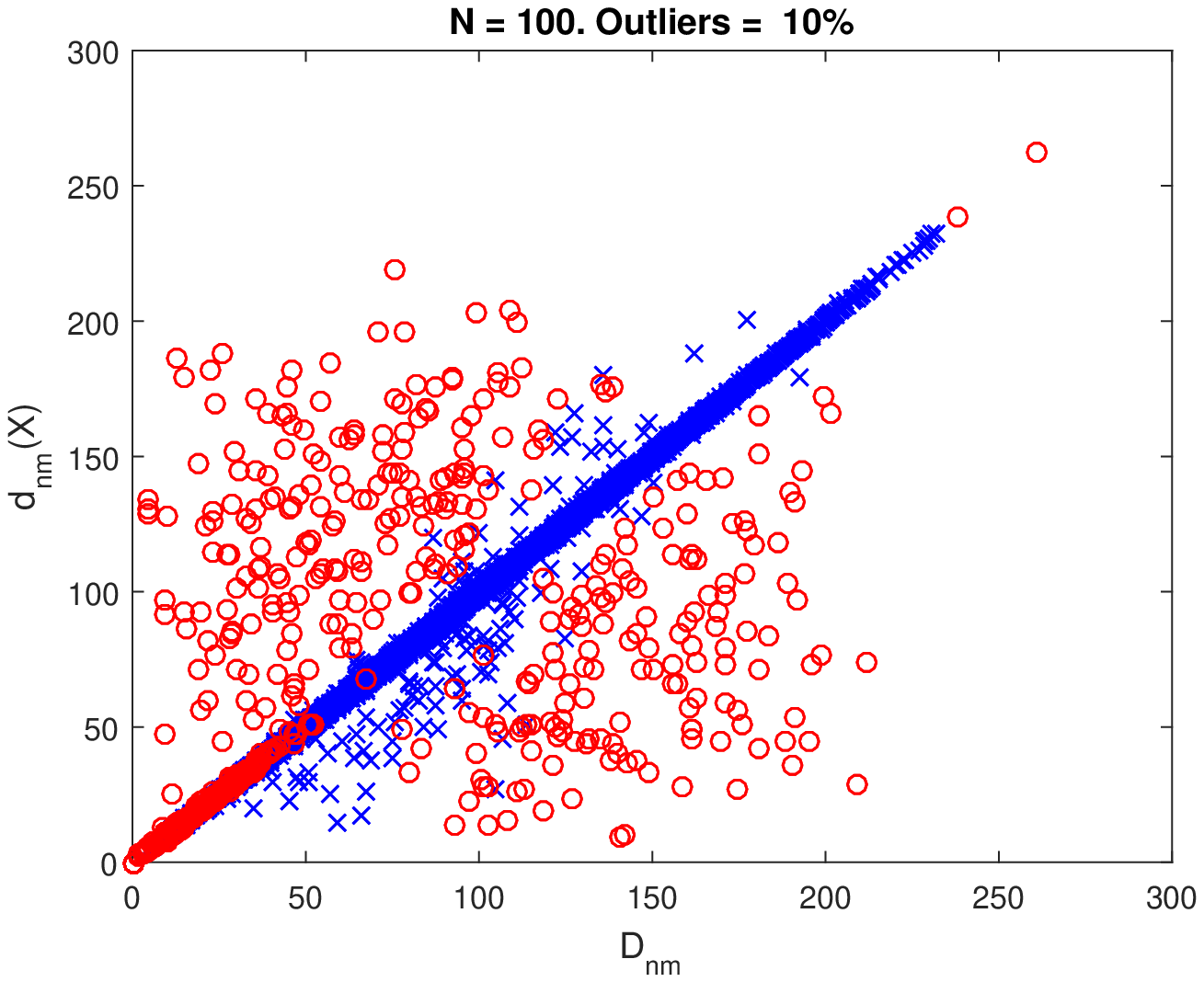}
		\caption{}
	\end{subfigure}

	\caption{Shepard Diagram. Each point represents a distance. The X-axis represents the input distances and the Y-axis represents 
        the distance in the embedding result. The red circles represent the edges that are considered as outliers. 
        (a) $2\%$ outliers. (b) $10\%$ outliers.}
	\label{fig:shepard_tmds}
\end{figure}

\vspace{0.2cm}

\noindent\textbf{Quantitative evaluation.}
We employ two models to quantitatively evaluate TMDS.
In the first, we select outliers at random, while in the second all edges are distorted by a log-normal distribution.

First, we measure the accuracy of outlier detection using precision-recall. Each distance is classified as either inlier or outlier, and
the detection can thus be regarded as a retrieval process, where precision is the fraction of retrieved outliers that are true
positives, and recall is the fraction of true positives that are detected. As can be observed in Figure \ref{fig:precision_recall},
our precision and recall are high, where the first moderately decreases with the number of outliers and the latter increases.
The precision is larger than $75\%$, which implies that a non-negligible portion of the filtered distances are false positives. However,
the excess of filtering is not destructive since MDS is an over-determined problem. Yet, at some point, filtering too many distances
impairs the embedding, as will be demonstrated in a separate experiment below.

\begin{figure}[h]
	\begin{center}
		\setlength\fboxsep{0pt}
		\setlength\fboxrule{0pt}
		\fbox{\rule{0pt}{0in}
			\includegraphics[width=\linewidth]{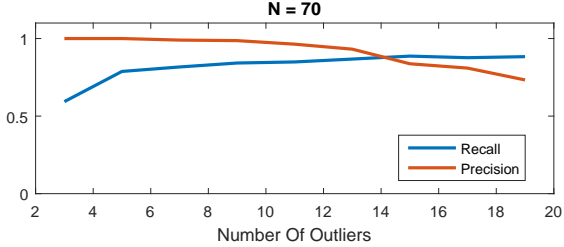}
			
		}
	\end{center}
	\caption{Precision and Recall plot. N=70.}
	\label{fig:precision_recall}
\end{figure}

\vspace{0.2cm}

\noindent\textbf{The detection probability.} 
Figure \ref{fig:drastic_edge_dim2} shows the probability of an outlier to be detected as a function of its error magnitude, which is 
measured by the ratio between the actual distance $d_{out}$ and the true distance $d_{GT}$.
As can be observed, edges that are strongly deformed (either squeezed or enlarged) are likely to be detected. This holds also for 
higher dimensions.

\begin{figure}[h]
	\begin{center}
		\setlength\fboxsep{0pt}
		\setlength\fboxrule{0pt}
		\fbox{\rule{0pt}{0in}
			\includegraphics[width=\linewidth]{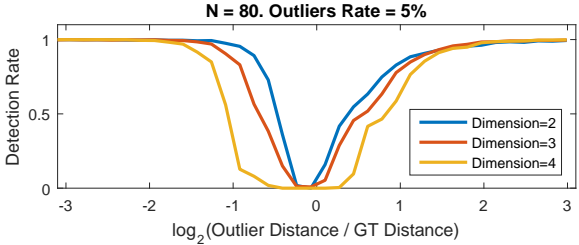}			
		}
	\end{center}
	\caption{The outlier detection rate as a function of the shrinkage \ enlargement of the outliers relative to the ground-truth value. 
    Edges that are strongly deformed (either squeezed or enlarged) are likely to be detected. Note that the X-axis is logarithmic: $ log_2 ({d_{out}}/{d_{GT}})$.}
	\label{fig:drastic_edge_dim2}
\end{figure}

\vspace{0.2cm}

\noindent\textbf{Embedding evaluation.} 
To obtain insight about the embedding performance of points $X_1,...,X_N$, we used the following score:
$$ S{ij} = \biggm\lvert log\frac{|| X_i - X_j ||}{D_{ij}} \biggm\lvert $$
Then, we take the average of $S_{ij}$ as the score for the embedding.
This scoring treats shrinkage and enlargements equally, where a low score implies a better embedding.
The results of the evaluation are displayed in Figure \ref{fig:mds_perf}. The plot shows that when the portion of outliers is less 
than $ 22\% $, our pre-filtering performs better than applying SMACOF MDS directly. A larger amount of outliers causes TMDS 
to filter out too many inliers and impairs the embedding.

\begin{figure}[b]
	\begin{center}
		\setlength\fboxsep{0pt}
		\setlength\fboxrule{0pt}
		\fbox{\rule{0pt}{0in}
			\includegraphics[width=\linewidth]{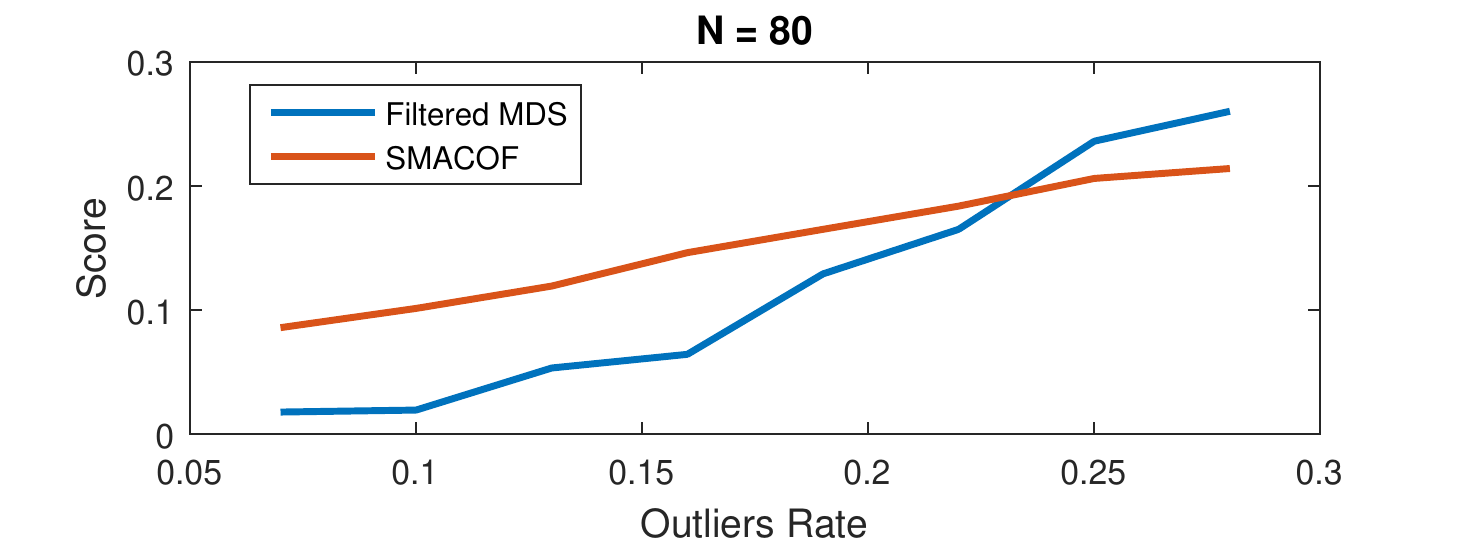}
			
		}
	\end{center}
	\caption{A comparison between SMACOF and TMDS as a function of outlier rate. Up to $ 22\% $ TMDS has better 
        performance.}
	\label{fig:mds_perf}
\end{figure}

\noindent\textbf{Log-normal distribution.}
We generated synthetic data that mimics realistic data characteristics, by sampling $N$ data points uniformly in a $d$-dimensional 
hypercube, and forming the respective distance matrix $D$. We distorted the distances using factors of log-normal distribution. 
That is, every distance $D_{ij}$ is multiplied by a factor sampled from a log-normal distribution, where the log-normal mean is 1. 
Note that these distorted distances include both noise and outliers. The results of those simulations with different log standard 
deviation $\sigma$ are presented in Figure \ref{fig:lognormal_1}. Note that for larger $\sigma$ values, which signify larger errors,
the effectiveness of TMDS is more significant.

\begin{figure}[t]
	\begin{center}
		\setlength\fboxsep{0pt}
		\setlength\fboxrule{0pt}
		\fbox{\rule{0pt}{0in}
			\includegraphics[width=\linewidth]{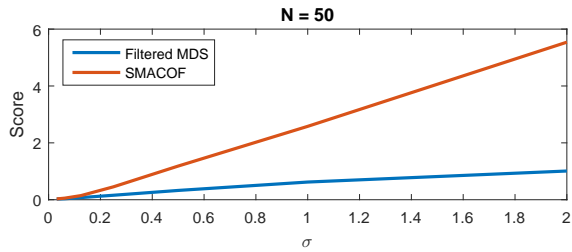}
			
		}
	\end{center}
	\caption{A comparison between SMACOF and TMDS for various distributions defined as a function of $\sigma$.}
	\label{fig:lognormal_1}
\end{figure}

\noindent\textbf{Non-uniform Distributions.}
We further evaluated TMDS on non-uniform structured data.
The embedding of data sets with clear structures are shown in Figure \ref{fig:plus}.
As we shall discuss below, one of the limitations of TMDS is handling straight lines. This is evident in Figure \ref{fig:plus},
where our embedding is imperfect (b), albeit better than without filtering (a).
In (c-d), the structured data is numerically easier for filtering. As demonstrated, the accuracy of the embedding of the data with $15\%$ outliers is high.

\begin{figure}[h]
	\centering
	\begin{subfigure}[b]{0.17\textheight}
		\includegraphics[trim={1cm 0 1cm 0},clip,width=\linewidth]{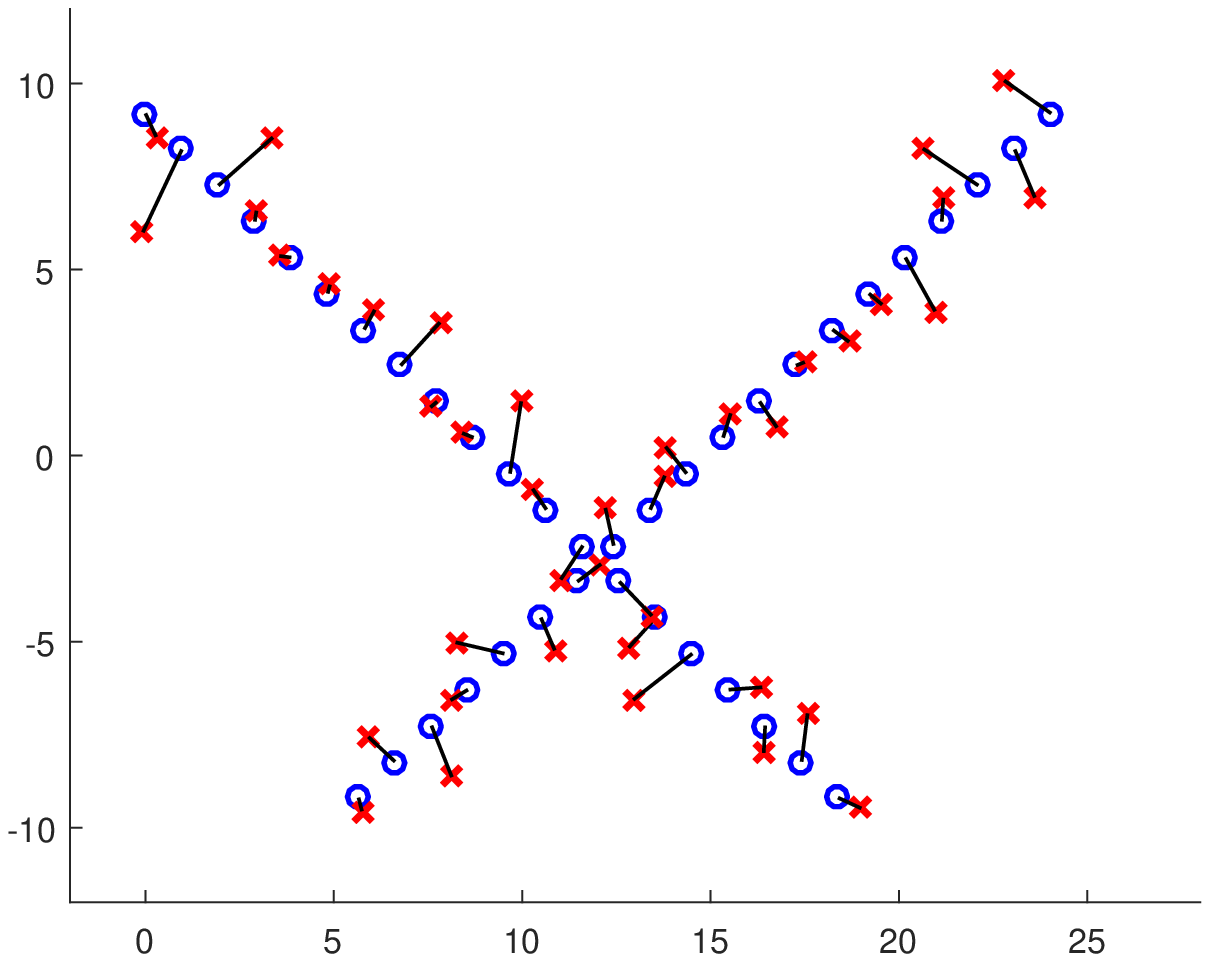}
        \caption*{SMACOF}
	\end{subfigure}
	\begin{subfigure}[b]{0.17\textheight}
		\includegraphics[trim={1cm 0 1cm 0},clip,width=\linewidth]{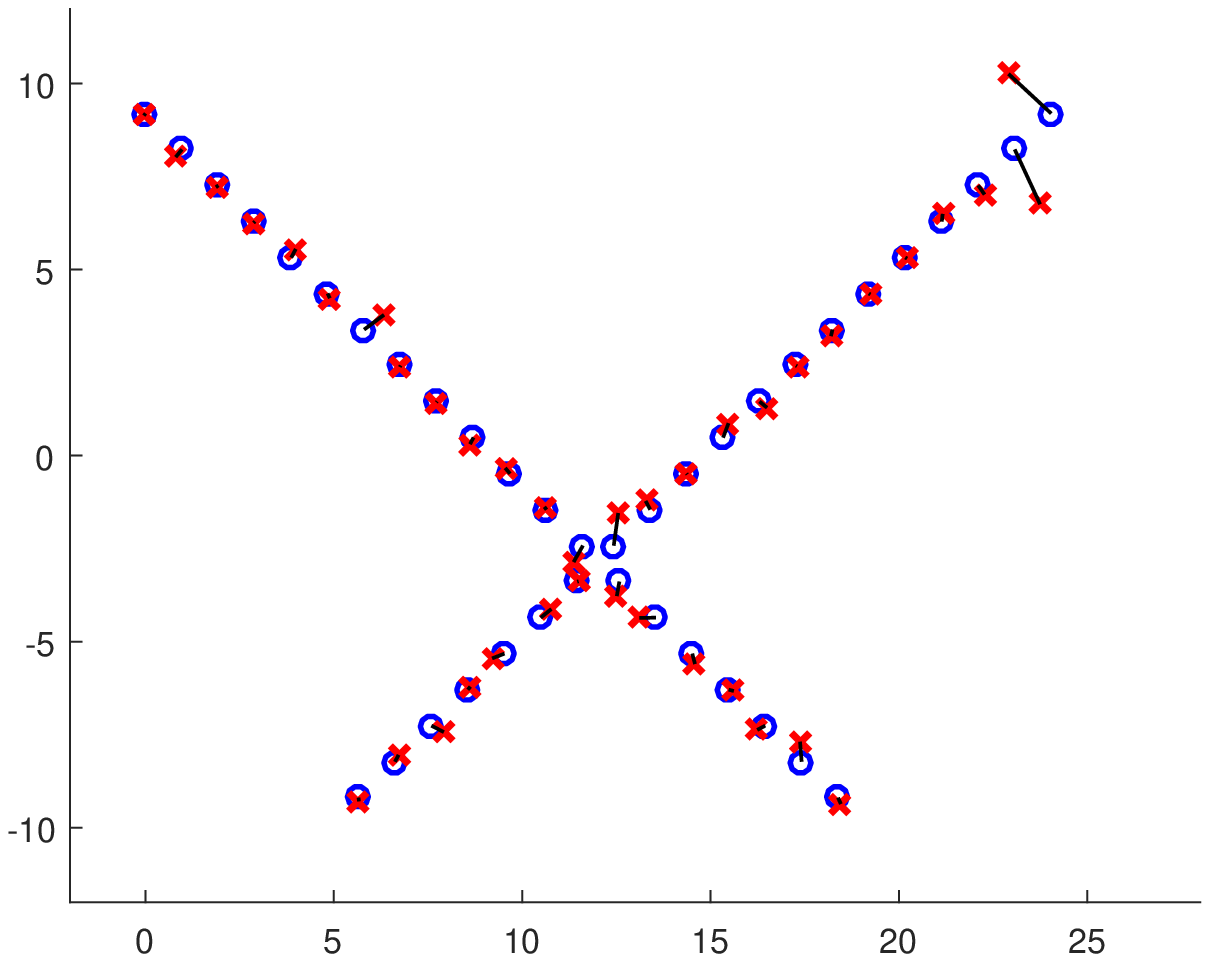}
        \caption*{TMDS}
	\end{subfigure}
\begin{subfigure}[b]{0.17\textheight}
		\includegraphics[trim={1cm 0 1cm 0},clip,width=\linewidth]{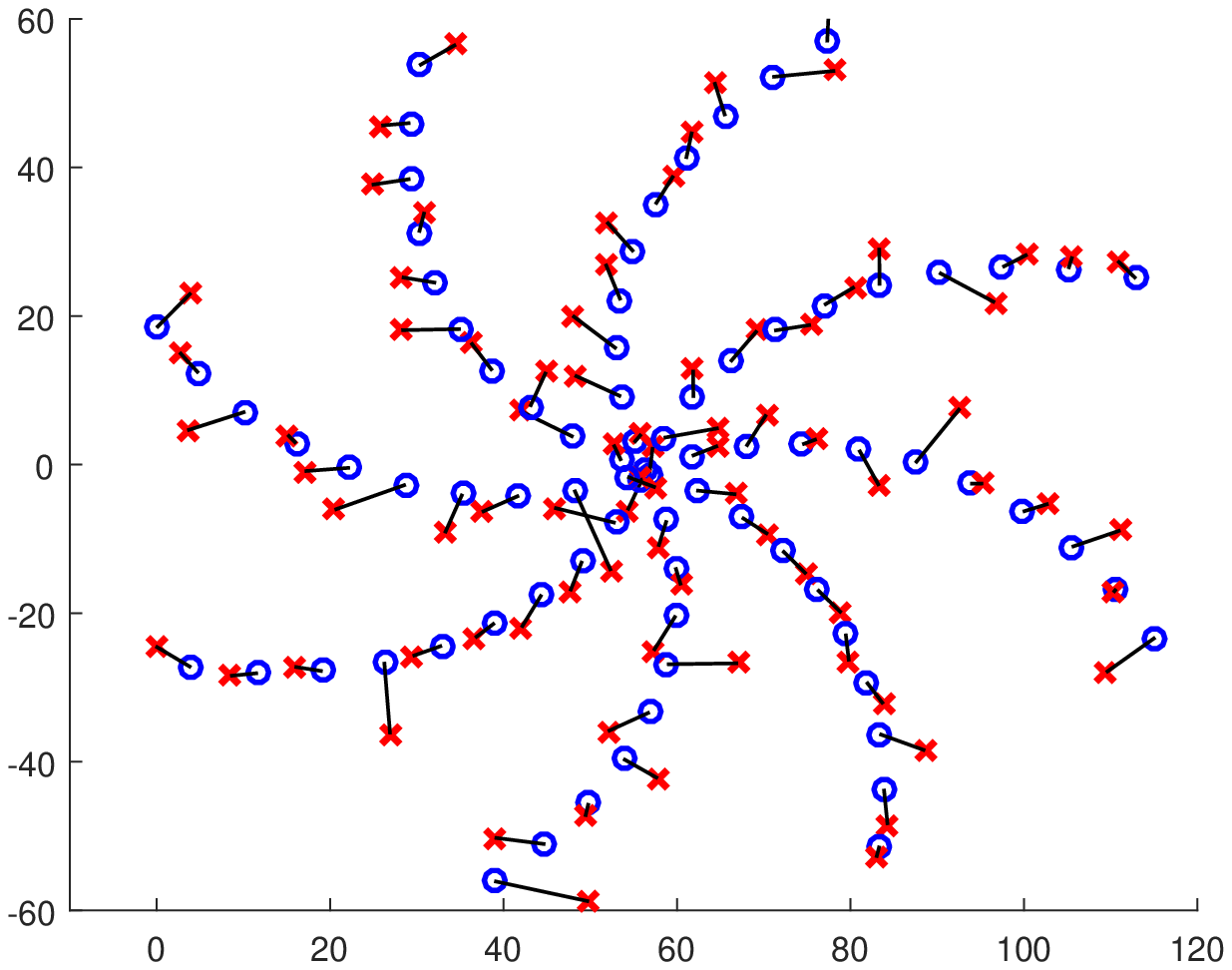}
        \caption*{SMACOF}
	\end{subfigure}
	\begin{subfigure}[b]{0.17\textheight}
		\includegraphics[trim={1cm 0 1cm 0},clip,width=\linewidth]{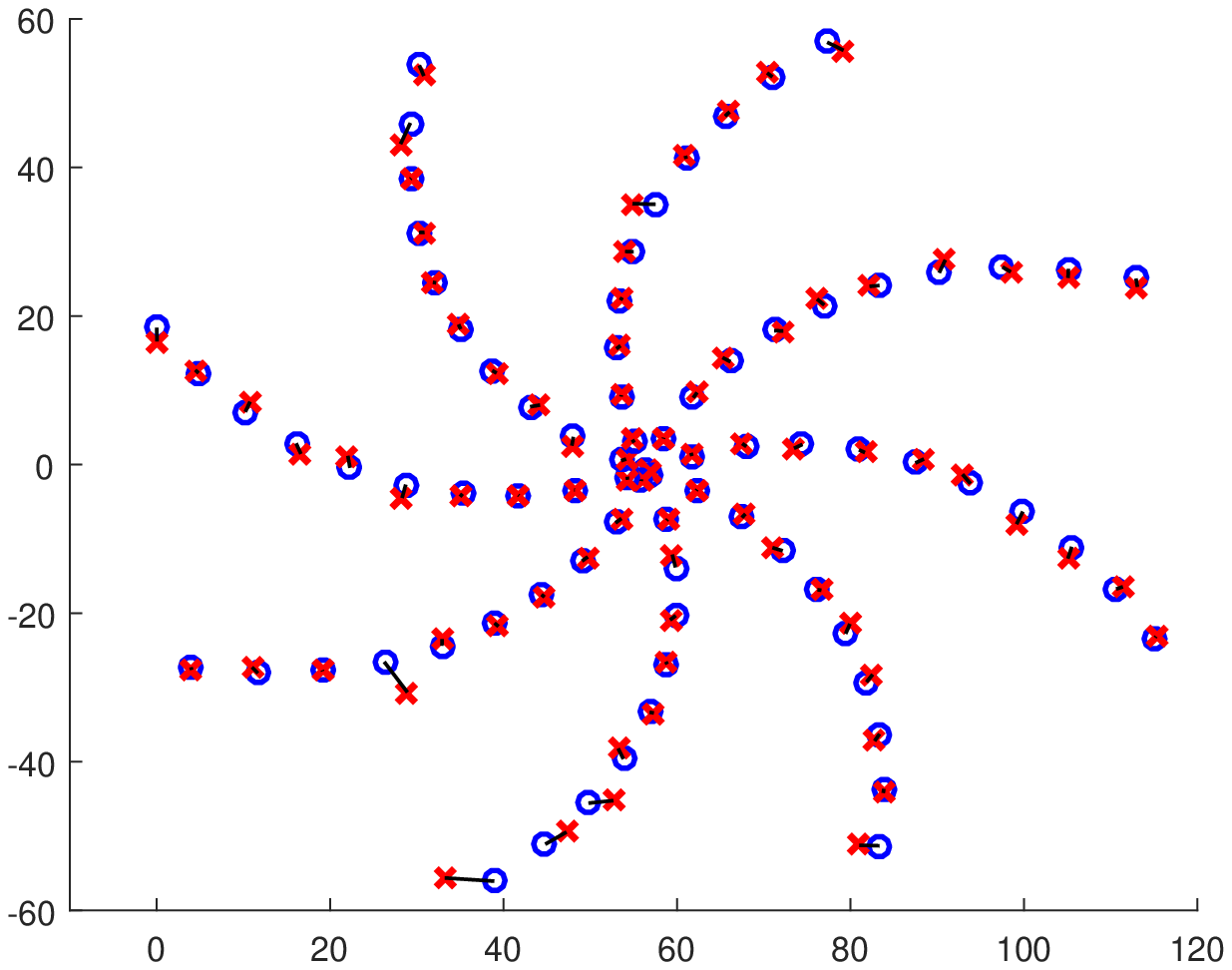}
        \caption*{TMDS}
	\end{subfigure}
	
	\caption{The embedding of a 'PLUS' shaped dataset (upper row) with $10\%$ outliers, and a 'SPIRAL' shaped dataset (bottom row) 
        with $15\%$ outliers. Each embedding point is connected to its ground-truth point.}
	\label{fig:plus}
\end{figure}

\subsection{Limitations}
As described, our algorithm consists of two parts - building the histogram of broken triangles and setting the threshold.
Each part has it own limitations. The analysis of the histogram assumes that outlier edges break many triangles, while inliers do not.
This assumption may not hold when a large number of triangles are broken also by inlier distances. This may happen when
$D_{ij} = D_{ik} + D_{kj}$, but due to numerical issues, $D_{ij} > D_{ik} + D_{kj}$, for example, when many points reside along 
straight lines.
Setting the threshold $\phi$ is merely a heuristic that may fail when there are too many outliers, or too few that have a special
distribution that confuses the heuristics and causes half of the data to be considered as outliers.

\section{Experiments}
A Matlab implementation is available at \url{goo.gl/399buj}.


In the previous sections, we have tested and evaluated our method (TMDS) using synthetic data for which we have the ground truth distances. We have shown that the MDS technique of Forero and Giannakis \cite{forero2012sparsity} is sensitive to the initial guess and its performance is dependent on a user-selected parameter. In the following, we evaluate our method on data for which the ground truth is not available. To evaluate the performance, we compare TMDS with a SMACOF implementation of MDS using numerous common measures that assume ground truth labels, but not distances. The labels are used for evaluation only. First, we test TMDS on two real datasets available on the Web, for which the ground truth distances are available. Next, we evaluate TMDS on three datasets for which ground truth distances are unavailable, but classes labels exist.

\textbf{SGB128 dataset}
This dataset consists of the locations of 128 cities in North America \cite{CitiesDataset}. We compute the distances among the cities, and introduce $15\%$ outliers.
Figure \ref{fig:US128} shows the ground-truth locations, the embedding obtained by SMACOF on the outlier data and the result of 
SMACOF after applying our filtering technique. As can be seen, applying the filtering prior to computing the embedding yields a significant improvement.

\begin{figure}[b]
	\centering
	\begin{subfigure}[b]{0.17\textheight}
		\includegraphics[trim={1cm 0 1cm 0},clip,width=\linewidth]{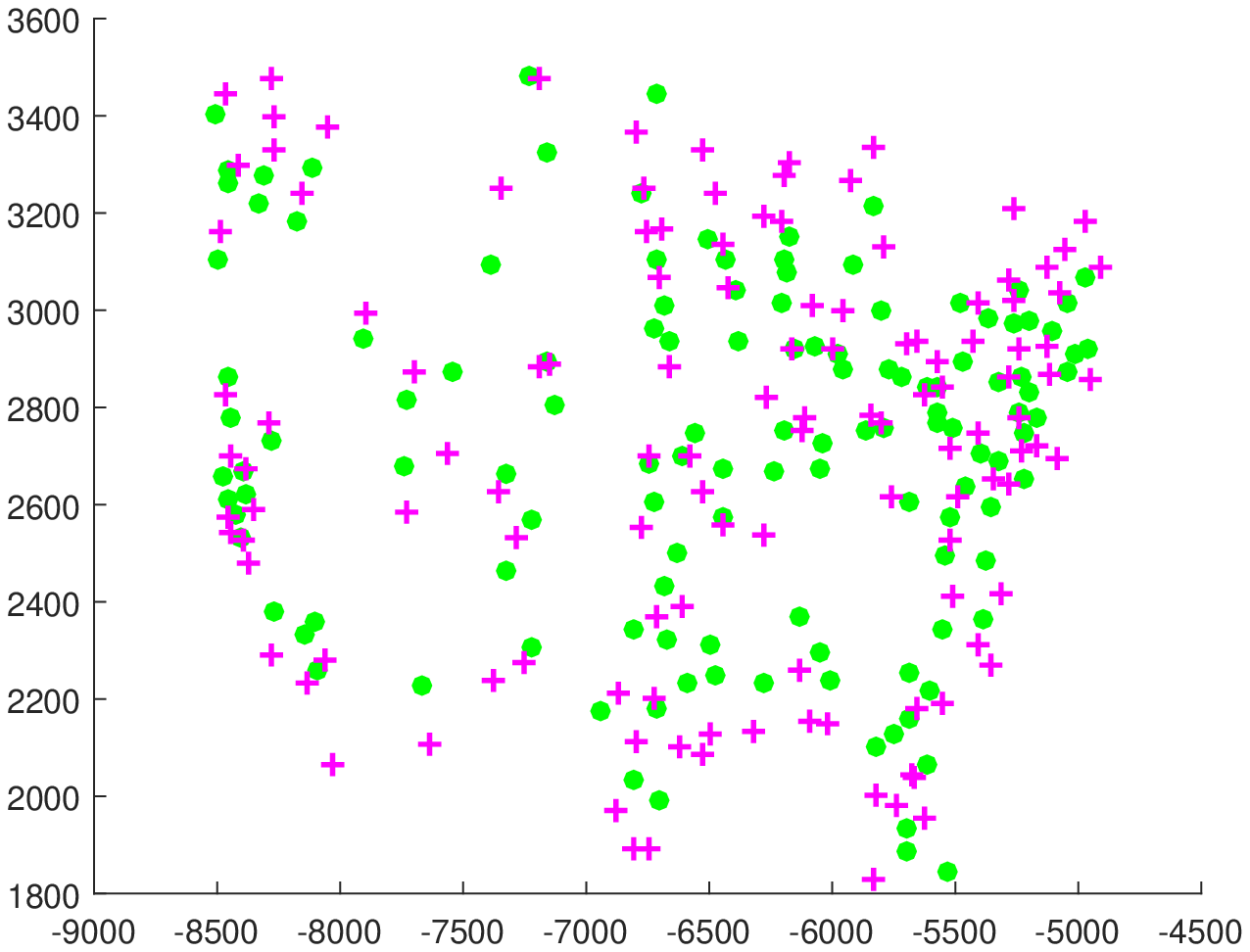}
		\caption*{SMACOF}
	\end{subfigure}
	\begin{subfigure}[b]{0.17\textheight}
		\includegraphics[trim={1cm 0 1cm 0},clip,width=\linewidth]{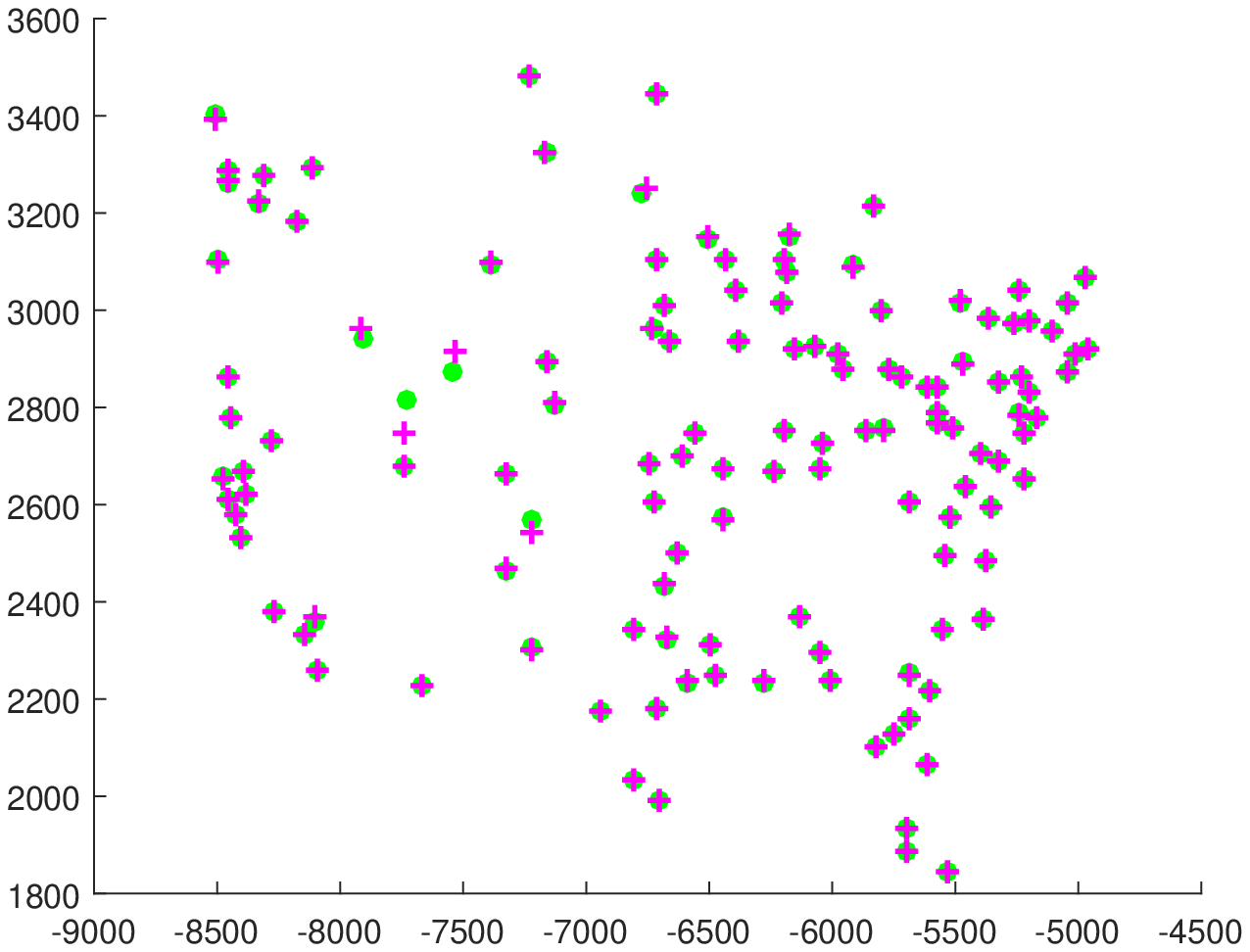}
		\caption*{TMDS}
	\end{subfigure}
	\caption{Two-dimensional embedding of SGB128 distances with $10\%$ outliers.
		The green dots are the ground-truth locations and the magenta dots represent the embedded points.}
	\label{fig:US128}
\end{figure}

\textbf{Protein dataset} 
This dataset consists of a proximity matrix derived from the structural comparison of 213 protein sequences 
\cite{denoeux2004evclus, graepel1999classification}.These dissimilarity values do not form a metric. Each of these proteins is associated with one of four classes: hemoglobin-α (HA), hemoglobin-β (HB), myoglobin (M) and heterogeneous globins (G). We embed the dissimilarities using SMACOF with and without 
our filtering. We then perform $k$-means clustering and evaluate the clusters with respect to the ground-truth classes. The results
are shown in Figure \ref{fig:protein}. The clustering results are evaluated using four common measures: ARI, RI and MOC. As can be seen,
TMDS outperforms a direct application of SMACOF by all the measures.

\begin{figure}[t]
    \centering
	\includegraphics[trim={1cm 0 1cm 0},clip,width=0.9\linewidth]{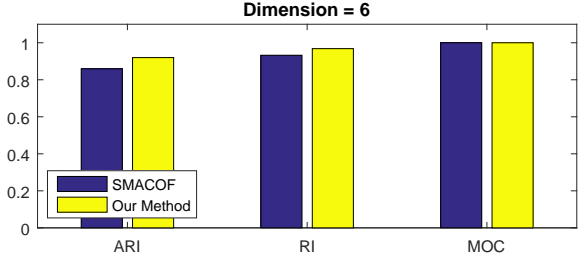}

	\caption{Average cluster index value of 10 executions. The embedding dimension is set to 6, since for lower dimensions
        SMACOF fails due to co-located points.}
	\label{fig:protein}
\end{figure}

\section{Graphical Datasets}

In the following, we conduct experiments to show the effectiveness of our robust MDS for visualization of graphical elements. Specifically, we use three datasets, one of 2D shapes taken from the MPEG7 dataset \cite{ling2005using} and 1070db \cite{1070shapes} shape dataset, one of motion data (CMU Motion Capture Database \url{mocap.cs.cmu.edu}) and one of 3D shapes \cite{Chen2009ABF}. 
For each of the sets we use an appropriate distance measure to generate an all-pairs distance matrix, and then create a map by embedding the elements using SMACOF MDS and our robust MDS. More specifically, we use inner distance to measure distances between 2D shapes, LMA-features \cite{Aristidou:2015_CGF} for measuring distances between sequences, and SHED \cite{kleiman2015shed} to measure the distances among 3D shapes.

We evaluate the embedding qualitatively and quantitatively. Since the ground truth is unavailable, we use the known classes for the evaluation. We expect a good embedding to separate the classes into tight clusters, and specifically avoid intersections among them. Qualitatively, this can be observed in the three side-by-side comparisons of the SMACOF vs. TMDS embedding in Figures \ref{fig:mpeg7}, \ref{fig:shed_mds}, \ref{fig:mpeg7_imperfect},  \ref{fig:mpeg7_with_lines}, and \ref{fig:1070db_10class}.
To measure it quantitatively, we use two common measures, namely Silhouette Index \cite{rousseeuw1987silhouettes} and Calinski Harabasz \cite{calinski1974dendrite}, which analyze how well the different classes are separated from each other. We also use common measures to analyze how well a clustering technique succeeds in clustering the data. We apply K-means and measure its success using AMI\cite{vinh2010information}, NMI \cite{strehl2002cluster}, and Completeness and Homogeneity measures \cite{rosenberg2007v}. As can be noted in Table \ref{table:andreas_score}, our TMDS method outperforms SMACOF in all of these measures.

\begin{figure*}[t]
	\centering
	\begin{subfigure}[b]{0.35\textheight}
		\includegraphics[trim={1cm 0cm 2cm 1cm},clip,width=\linewidth]{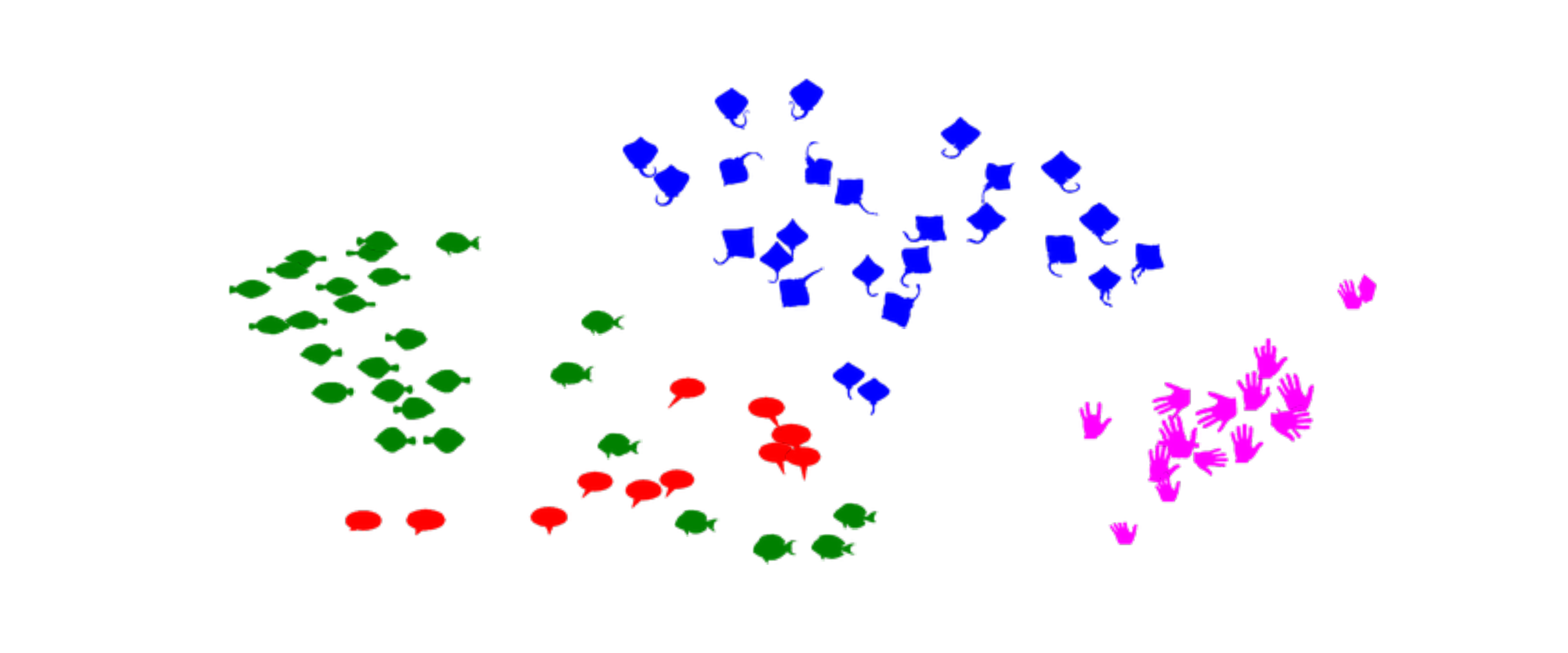}
		\caption{}
	\end{subfigure}
    \begin{subfigure}[b]{0.35\textheight}
	    \includegraphics[trim={1cm 0cm 2cm 1cm},clip,width=\linewidth]{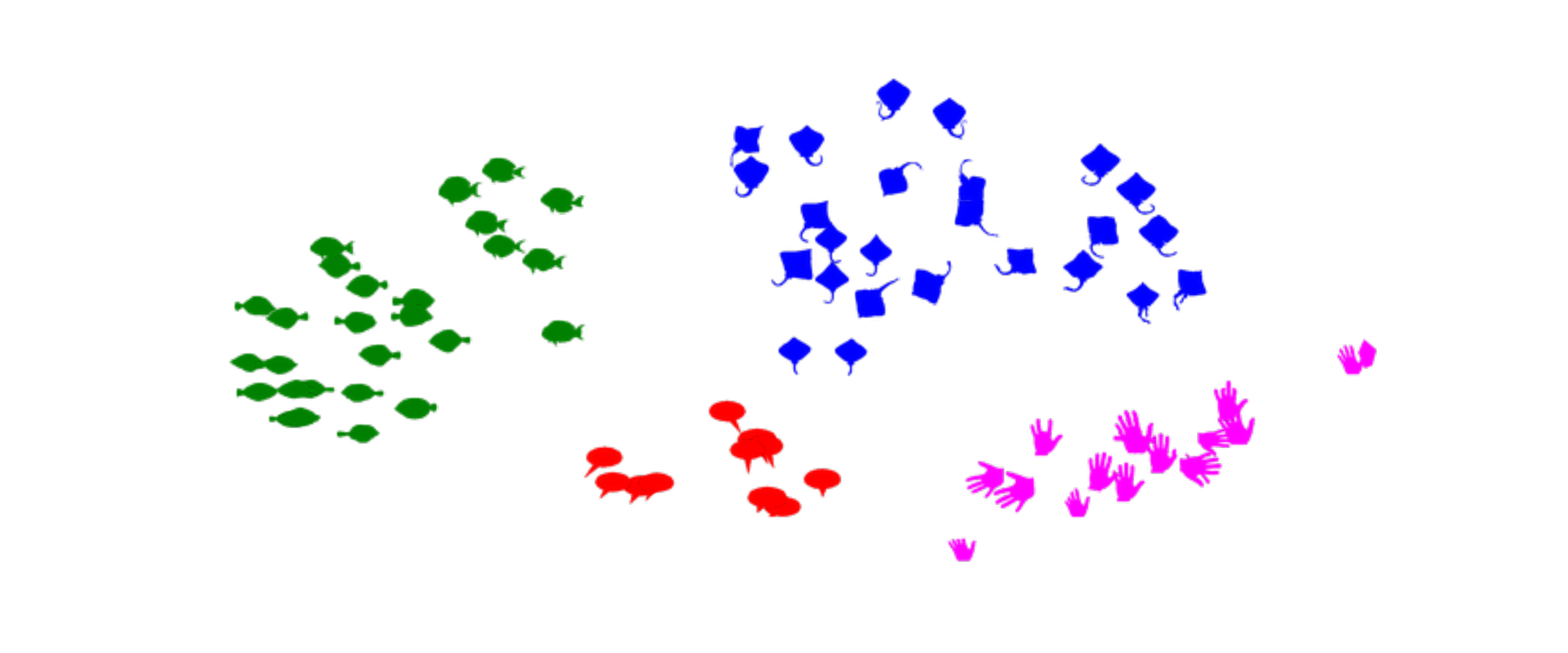}
     	\caption{}
    \end{subfigure}

	\caption{Embedding of shapes from using (a) SMACOF MDS and our Robust MDS (b). As can be seen, the four classes of shapes are better separated using our method. This is also qualitatively supported by the silhouette  measure as we shall elaborate in Section 5.}
	\label{fig:mpeg7}
\end{figure*}

\begin{figure*}[t]
	\centering
	\begin{subfigure}{0.36\textheight}
		\includegraphics[trim={1cm 1cm 1cm 1cm },clip,width=\linewidth]{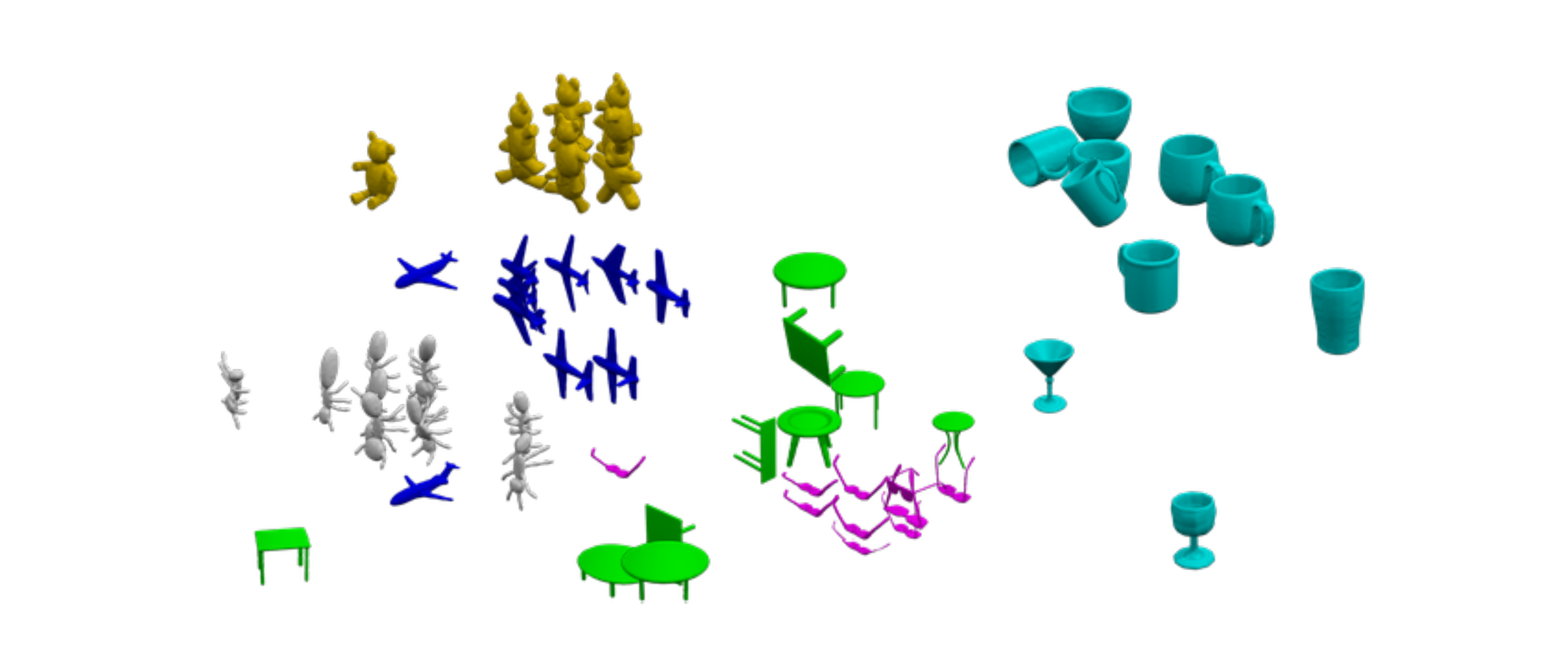}
		\caption*{SMACOF}
	\end{subfigure}
    \begin{subfigure}{0.36\textheight}
		\includegraphics[trim={1cm 1cm 1cm 1cm },clip,width=\linewidth]{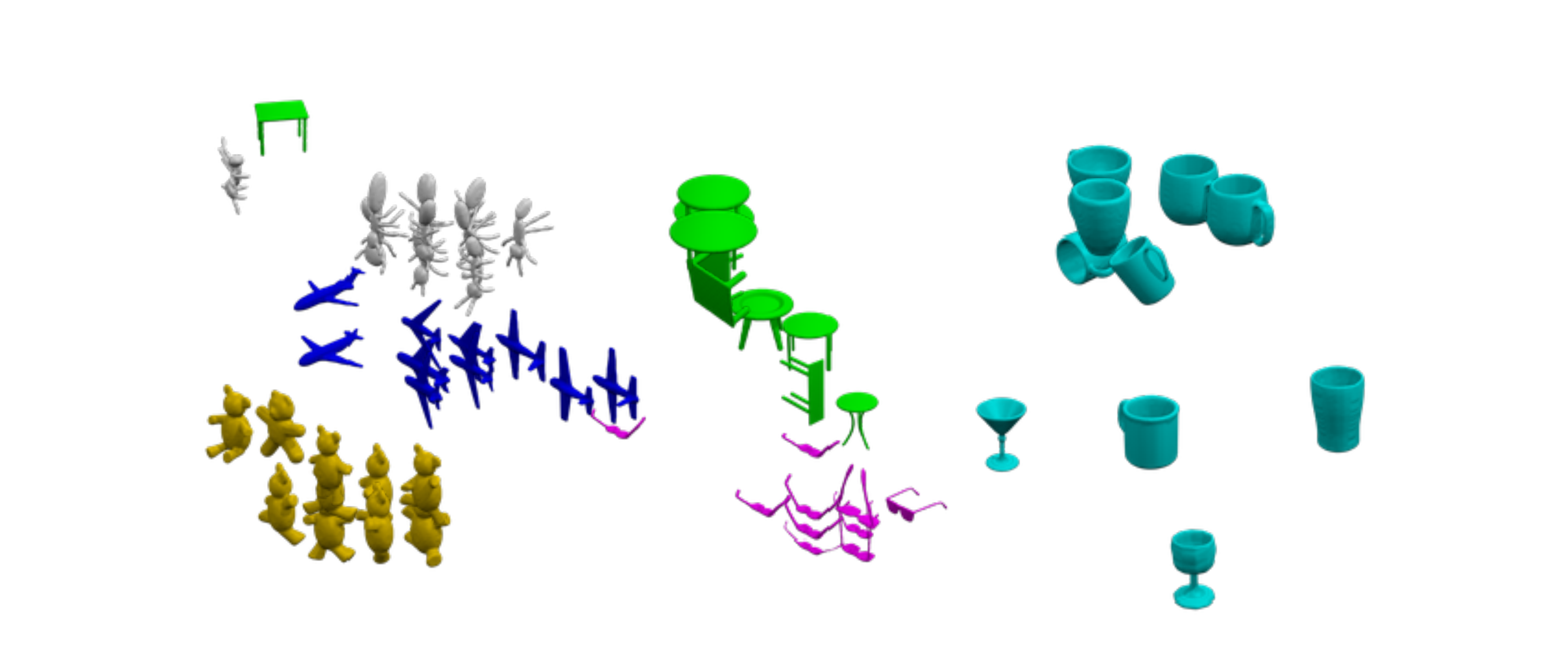}
		\caption*{TMDS}
	\end{subfigure}
	
	\caption{Embedding of 3D shapes dissimilarity matrix. As can be seen, the six classes of shapes are better separated using TMDS. This is also qualitatively supported in Table \ref{table:andreas_score}. }
	\label{fig:shed_mds}
\end{figure*}

The dissimilarity measures we use in all the experiments are not a metric and thus the meaning of outliers should be understood accordingly. The removal of outliers in such a context merely enhances the distances to better conform with a metric. Nevertheless, as can be clearly seen, in all cases, TMDS enhances the maps and provides a better separation of the classes. Still, as can be observed in Figures \ref{fig:mpeg7_imperfect}, \ref{fig:1070db_10class} and \ref{fig:mpeg7_with_lines}, the embedded results are not necessarily perfect, in the sense that there are some elements that are not embedded close enough to the rest of the elements in their respective classes. Note that for completeness, we add results obtained with the method of Forero and Giannakis \cite{forero2012sparsity}, for which we set $\lambda = 2$.

\begin{figure*}[h]
	\centering
	\begin{subfigure}[b]{0.36\textheight}
		\includegraphics[trim={1.5cm 1cm 1cm 1cm },clip,width=\linewidth]{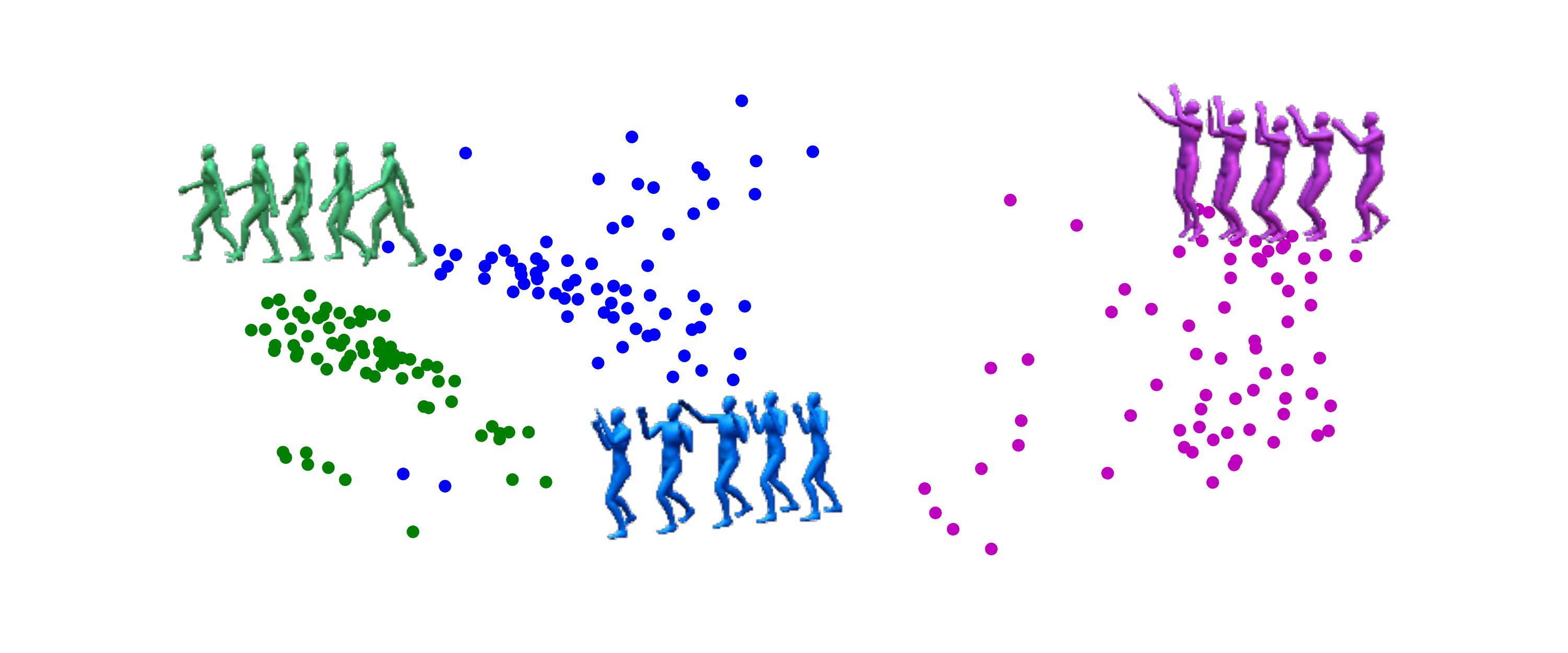}
		\caption*{SMACOF}
	\end{subfigure}
    \begin{subfigure}[b]{0.36\textheight}
		\includegraphics[trim={1.5cm 1cm 1cm 1cm },clip,width=\linewidth]{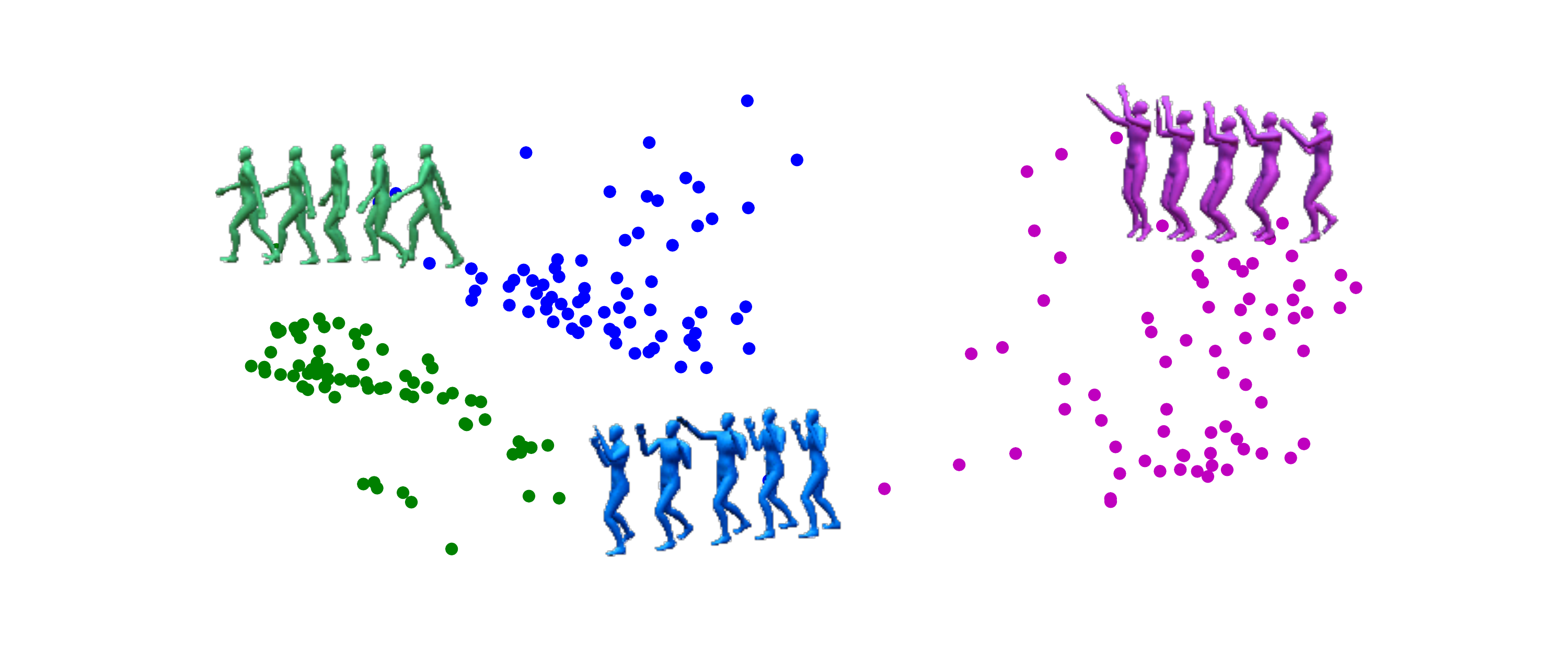}
		\caption*{TMDS}
	\end{subfigure}
	
	\caption{Embedding of the motion data of three classes into 2D space. Note the blue data points that are mixed up with the green ones on the left (SMACOF), versus the better separation on the right (TMDS).} 
	\label{fig:andreas_small_mds}
\end{figure*}

\begin{figure*}[h]
	\centering
	\begin{subfigure}[b]{0.36\textheight}
        \begin{tikzpicture}
          \node(a) at (0,0,0){\includegraphics[trim={1.5cm 0.5cm 1cm 1cm},clip,width=\linewidth]{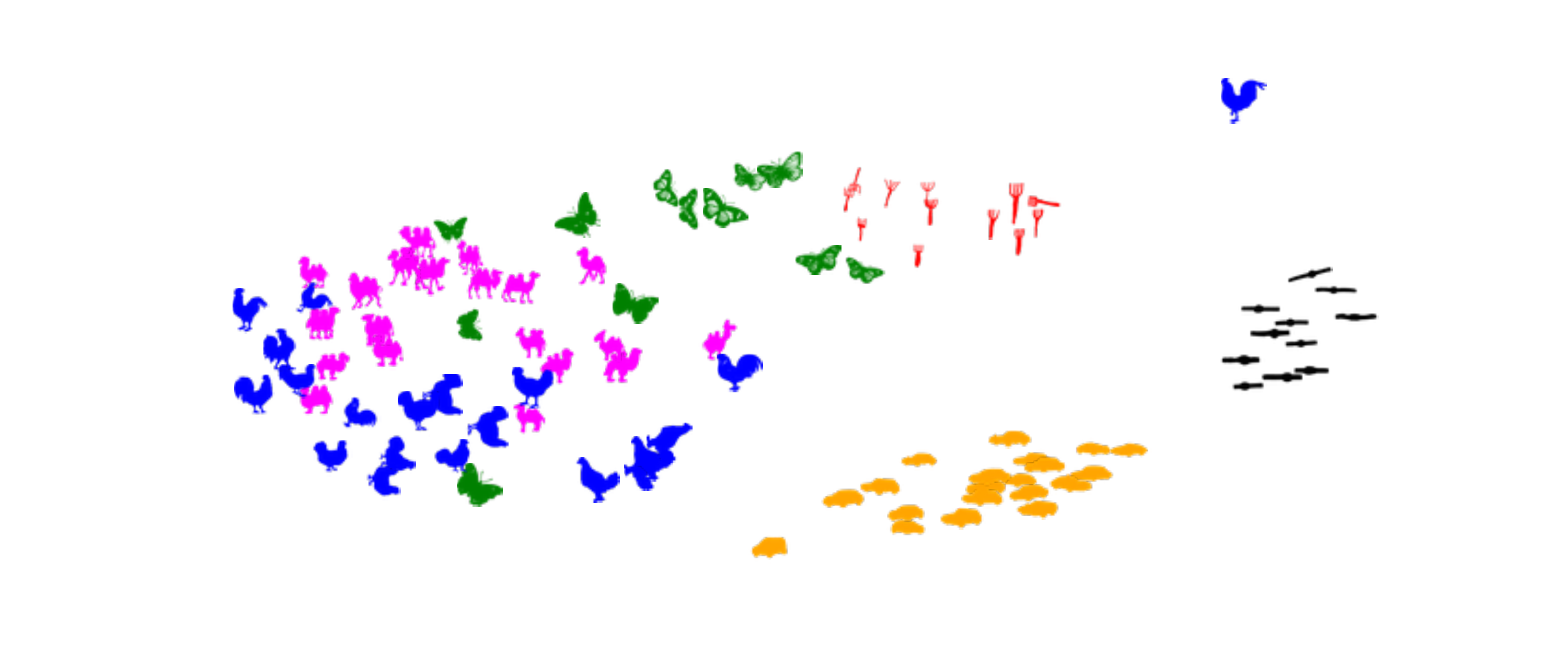}};
          \node at(a.center)[draw, red,line width=1pt,ellipse, minimum width=20pt, minimum height=20pt,xshift=77pt, yshift=39pt]{};
          \node at(a.center)[draw, red,line width=1pt,ellipse, minimum width=20pt, minimum height=20pt,xshift=-55pt, yshift=-28pt]{};
        \end{tikzpicture}
		\caption*{SMACOF}
	\end{subfigure}
    \begin{subfigure}[b]{0.36\textheight}
		\includegraphics[trim={1.5cm 0.5cm 1cm 1cm},clip,width=\linewidth]{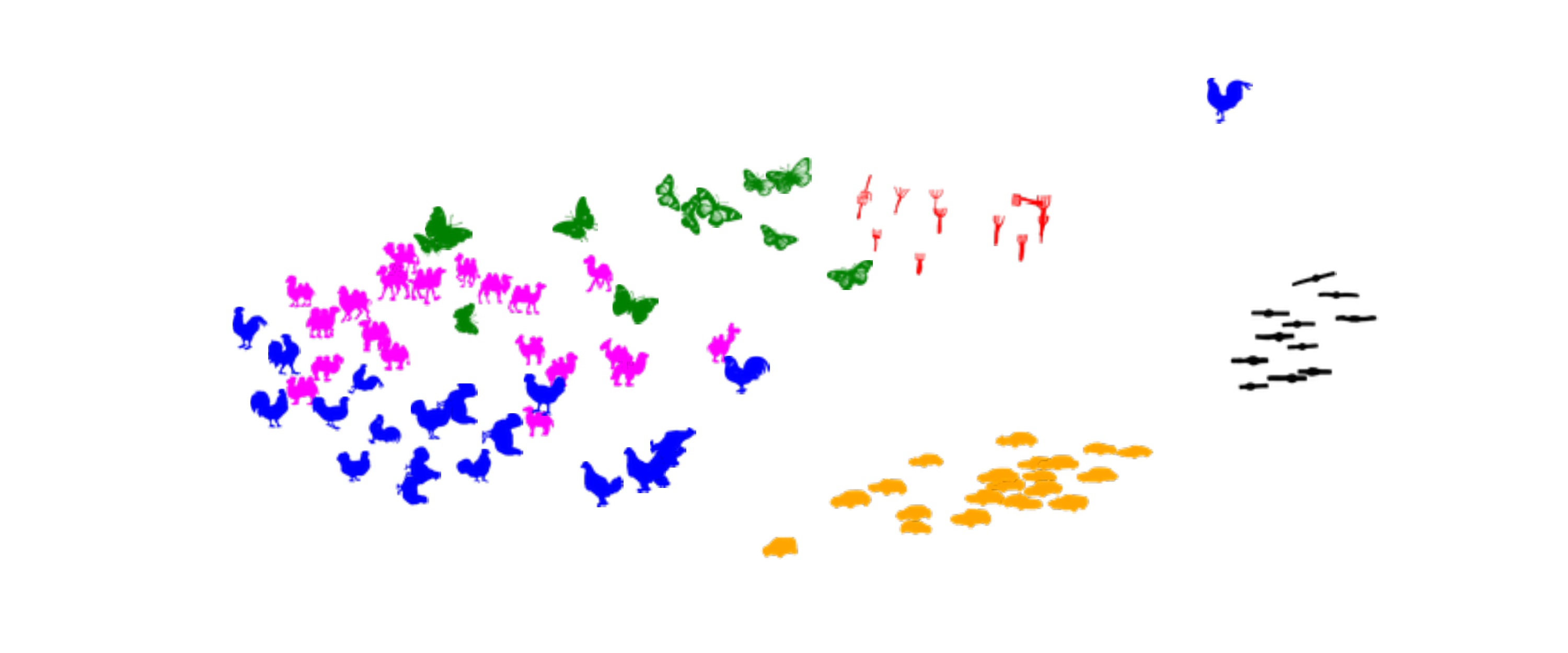}
		\caption*{TMDS}
	\end{subfigure}

	\caption{Embedding of six classes from MPEG7 dataset. It can be seen that TMDS fixes the embedding (note the green class) but yet it is still imperfect (note the blue rooster)}
    \label{fig:mpeg7_imperfect}
\end{figure*}

\begin{figure*}[h]
	\centering
	\begin{subfigure}[b]{0.24\textheight}
		\includegraphics[trim={1.5cm 0cm 1cm 1cm },clip,width=\linewidth]{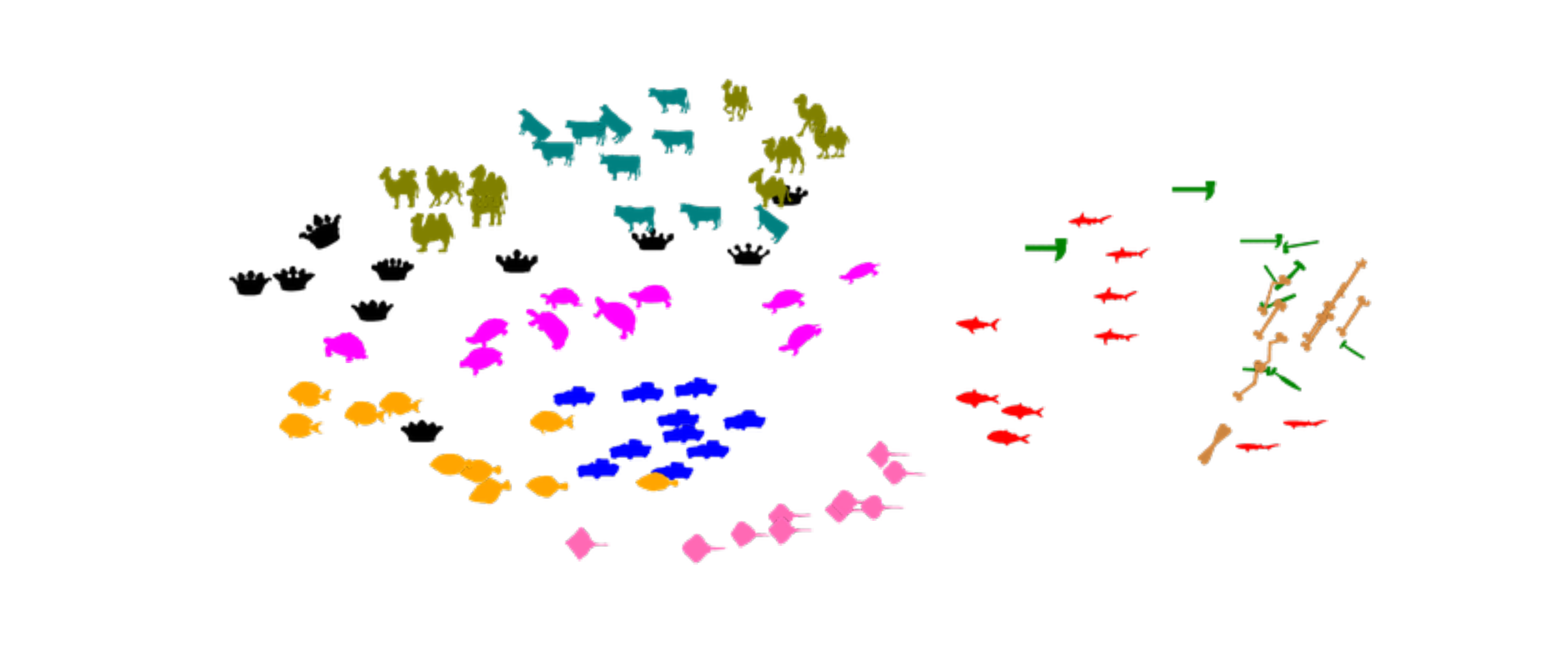}
		\caption*{SMACOF}
	\end{subfigure}
    \begin{subfigure}[b]{0.24\textheight}
		\includegraphics[trim={1.5cm 0cm 1cm 1cm },clip,width=\linewidth]{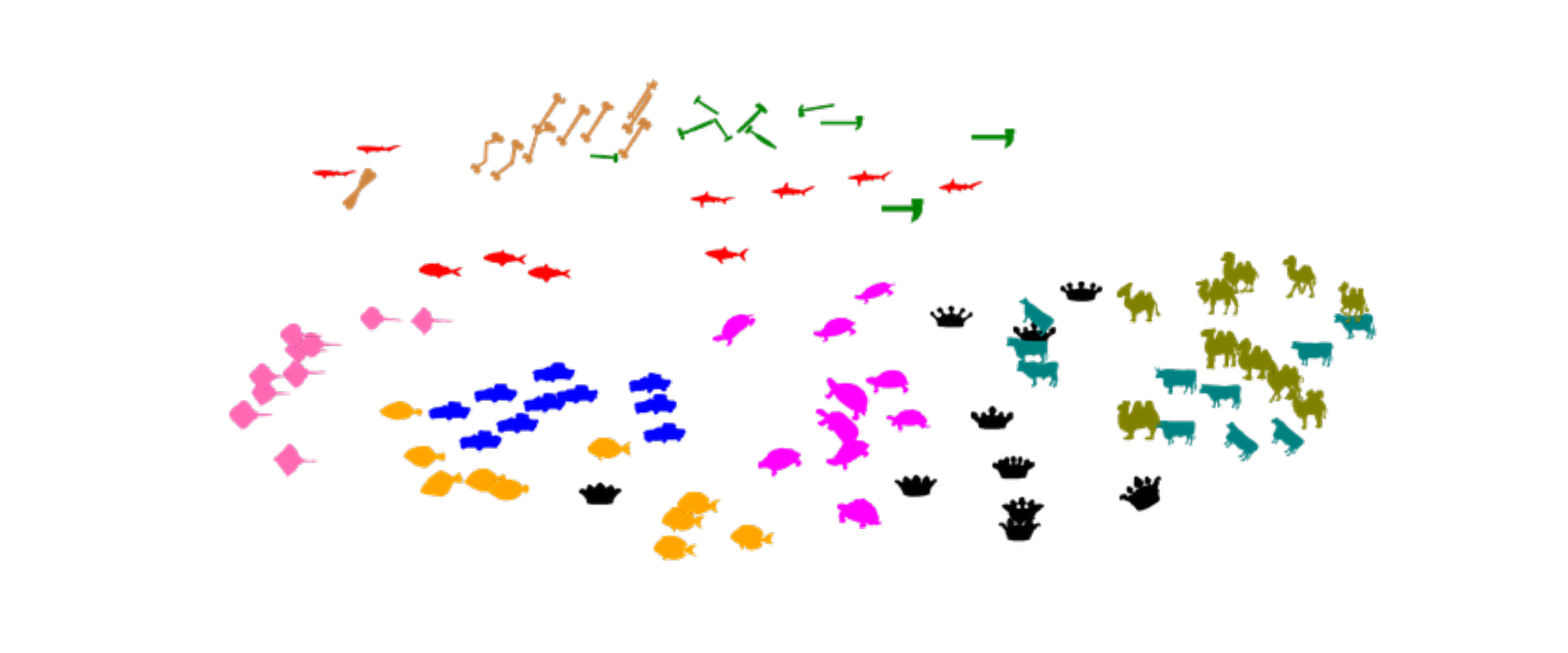}
		\caption*{FG12}
	\end{subfigure}
    \begin{subfigure}[b]{0.24\textheight}
		\includegraphics[trim={1.5cm 0cm 1cm 1cm },clip,width=\linewidth]{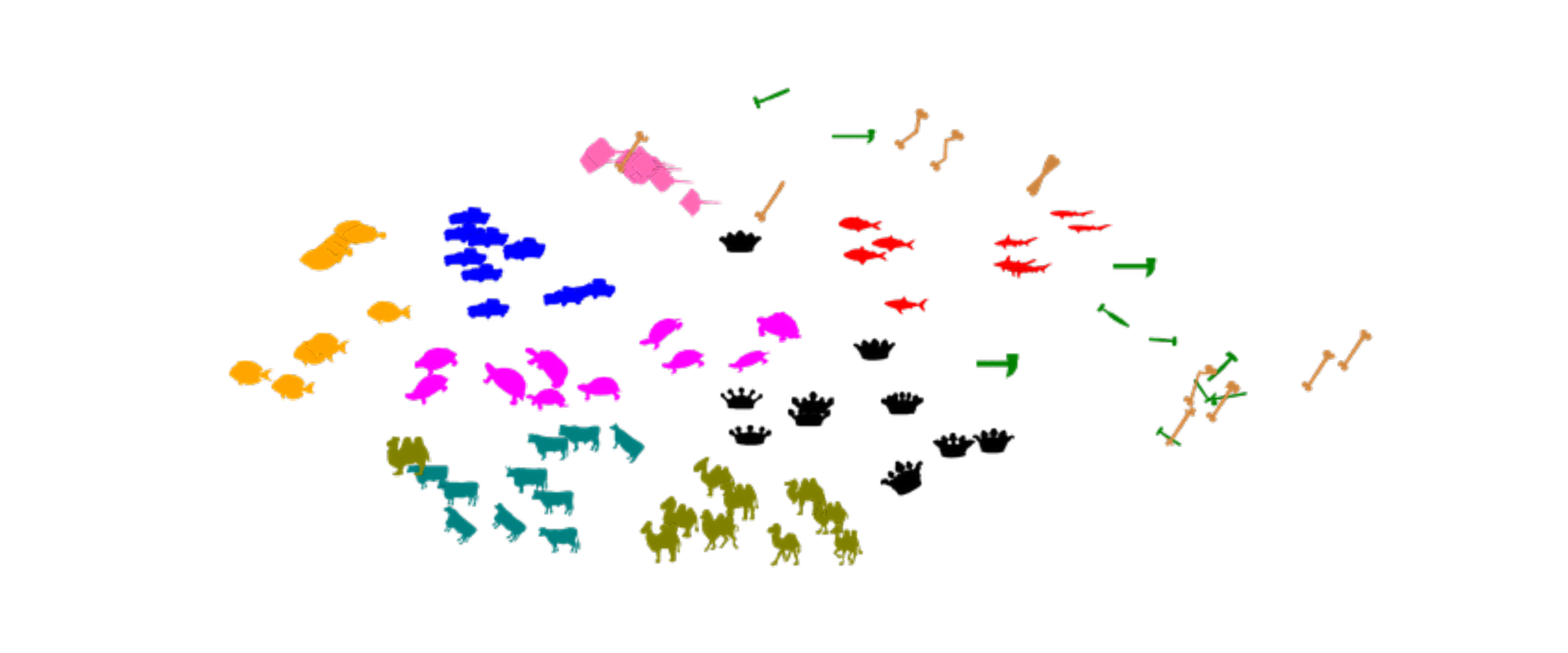}
		\caption*{TMDS}
	\end{subfigure}
	
	\caption{Embedding of 100 random shapes, selected from 10 classes (10 shapes per class) from the 1070db dataset. TMDS improves the embedding compared to SMACOF (note the olive color) and FG12 (note the red and the yellow classes). The silhouette scores improve from 0.14 (SMACOF) and 0.19 (FG12), to 0.27 (TMDS)}
	\label{fig:1070db_10class}
\end{figure*}

\begin{figure*}[h]
	\centering
    \begin{subfigure}[b]{0.36\textheight}
    	\begin{tikzpicture}
		\node(a) at (0,0,0){\includegraphics[trim={1.5cm 1.5cm 1cm 1cm},clip,width=\linewidth]{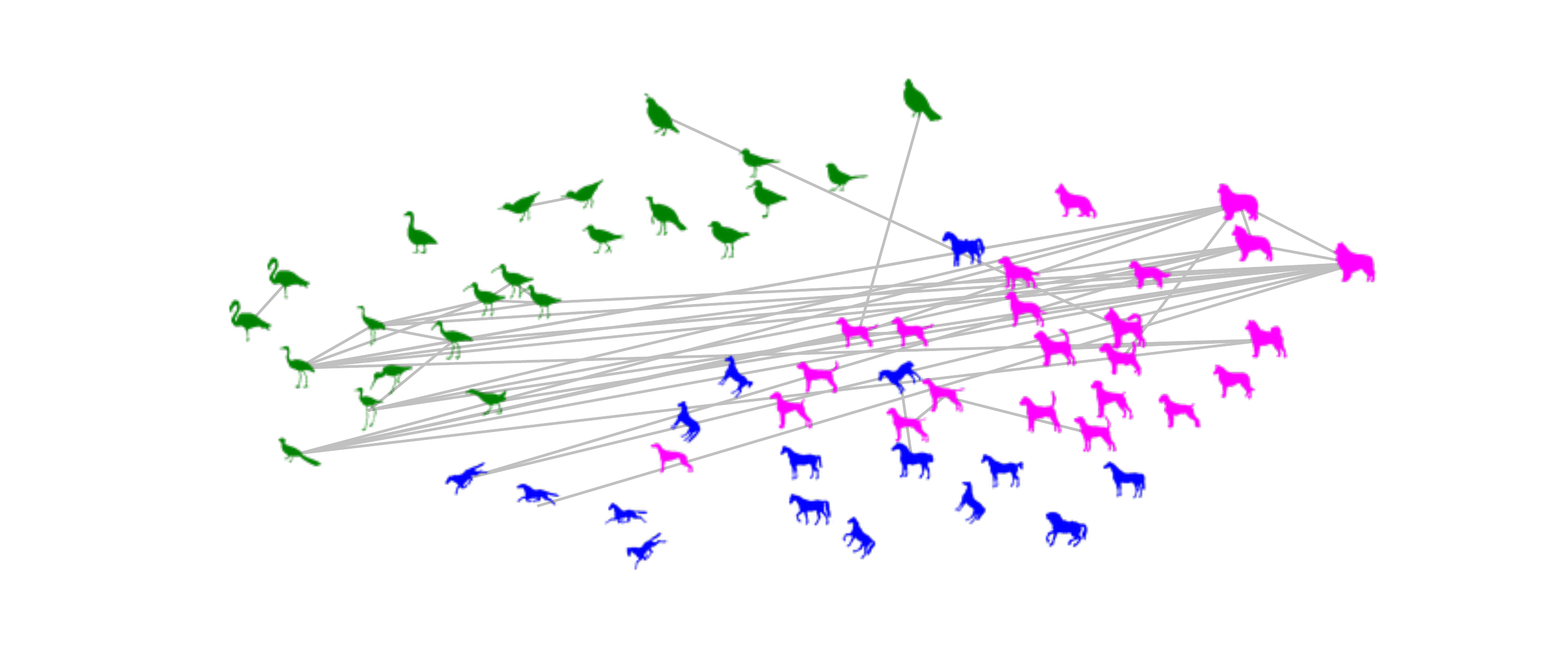}};
        \node at(a.center)[draw, red,line width=1pt,ellipse, minimum width=15pt, minimum height=15pt,xshift=28pt, yshift=12pt]{};
        \node at(a.center)[draw, red,line width=1pt,ellipse, minimum width=15pt, minimum height=15pt,xshift=17pt, yshift=-12pt]{};
        \node at(a.center)[draw, red,line width=1pt,ellipse, minimum width=15pt, minimum height=15pt,xshift=-22pt, yshift=-25pt]{};
        \end{tikzpicture}
		\caption*{SMACOF}
	\end{subfigure}
    \begin{subfigure}[b]{0.36\textheight}
    	\begin{tikzpicture}
		\node(a) at (0,0,0){\includegraphics[trim={1.5cm 1.5cm 1cm 1cm},clip,width=\linewidth]{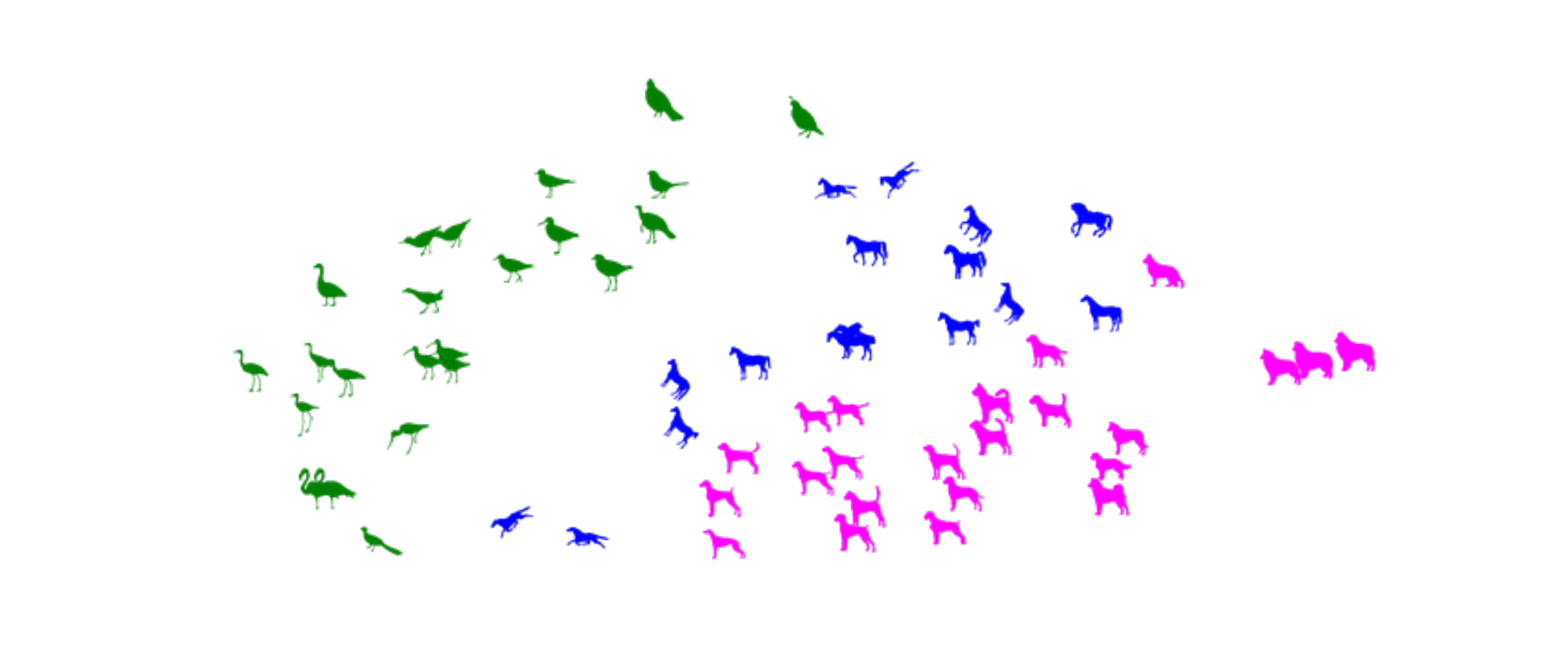}};
        \node at(a.center)[draw, red,line width=1pt,ellipse, minimum width=15pt, minimum height=15pt,xshift=63pt, yshift=8pt]{};
        \end{tikzpicture}
		\caption*{TMDS}
	\end{subfigure}
    \caption{Embedding of three classes from the 1070db dataset. The gray lines present the outliers that were filtered by TMDS (44 out of 1847; that is $2.3\%$ ). TMDS improves the embedding of the magenta and blue classes. The red circles illustrate shapes that are clearly not located close enough to the classes. Note that TMDS does not guarantee a perfect embedding.}
    \label{fig:mpeg7_with_lines}
\end{figure*}

\begin{table*}[h]
\centering
\begin{tabular}{lllllll}
\hline
& \multicolumn{2}{l}{Motion Data} & \multicolumn{2}{l}{MPEG7} & \multicolumn{2}{l}{3D Mesh} \\
\hline
Scoring &  SMACOF    & TMDS &  SMACOF    & TMDS &  SMACOF    & TMDS\\
\hline
Silhouette        & 0.282    & 0.321 & 0.38  & 0.48 & 0.39 & 0.49  \\
Calinski Harabaz  & 167.2    & 218.9 & 60.54 & 90.5 & 50.7 & 59.0 \\
AMI               & 0.64     & 0.66 & 0.78 & 0.85 & 0.72 & 0.79\\
Completeness      & 0.65     & 0.68 & 0.79 & 0.86 & 0.80 & 0.83\\
Homogeneity       & 0.64     & 0.67 & 0.82 & 0.87 & 0.76  & 0.82\\
NMI               & 0.65     & 0.674 & 0.81 & 0.86 & 0.78 & 0.82\\
\hline
\end{tabular}
\caption{TMDS outperforms SAMCOF in the three datasets, using six common measures. }
\label{table:andreas_score}
\end{table*}
\section{Distribution of distances}

\begin{figure*}[t]
	\centering
	\begin{subfigure}{0.2\textheight}
		\includegraphics[trim={1cm 0 1cm 0},clip,width=\linewidth]{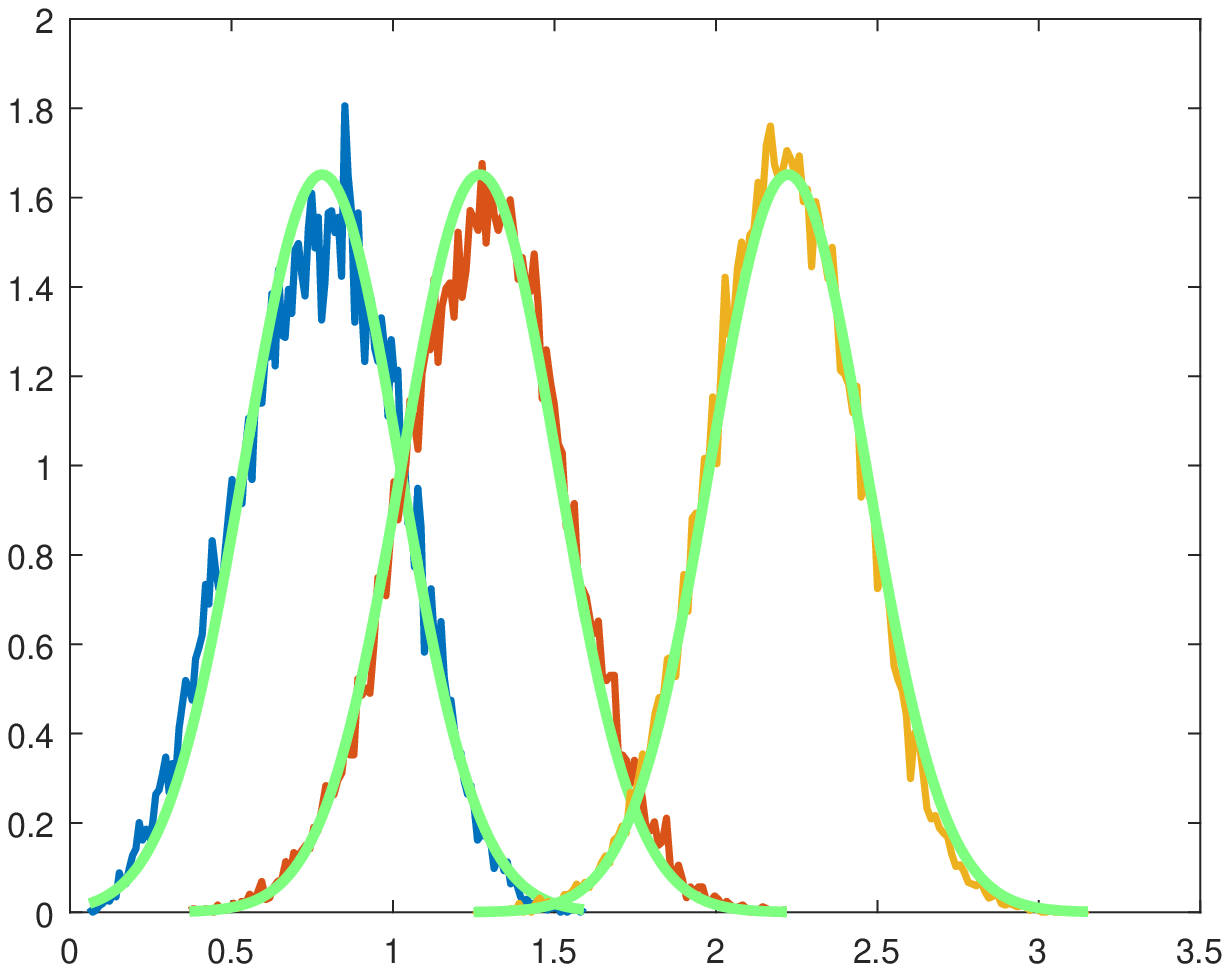}
		\caption{}
	\end{subfigure}
	\begin{subfigure}{0.2\textheight}
		\includegraphics[trim={1cm 0 1cm 0},clip,width=\linewidth]{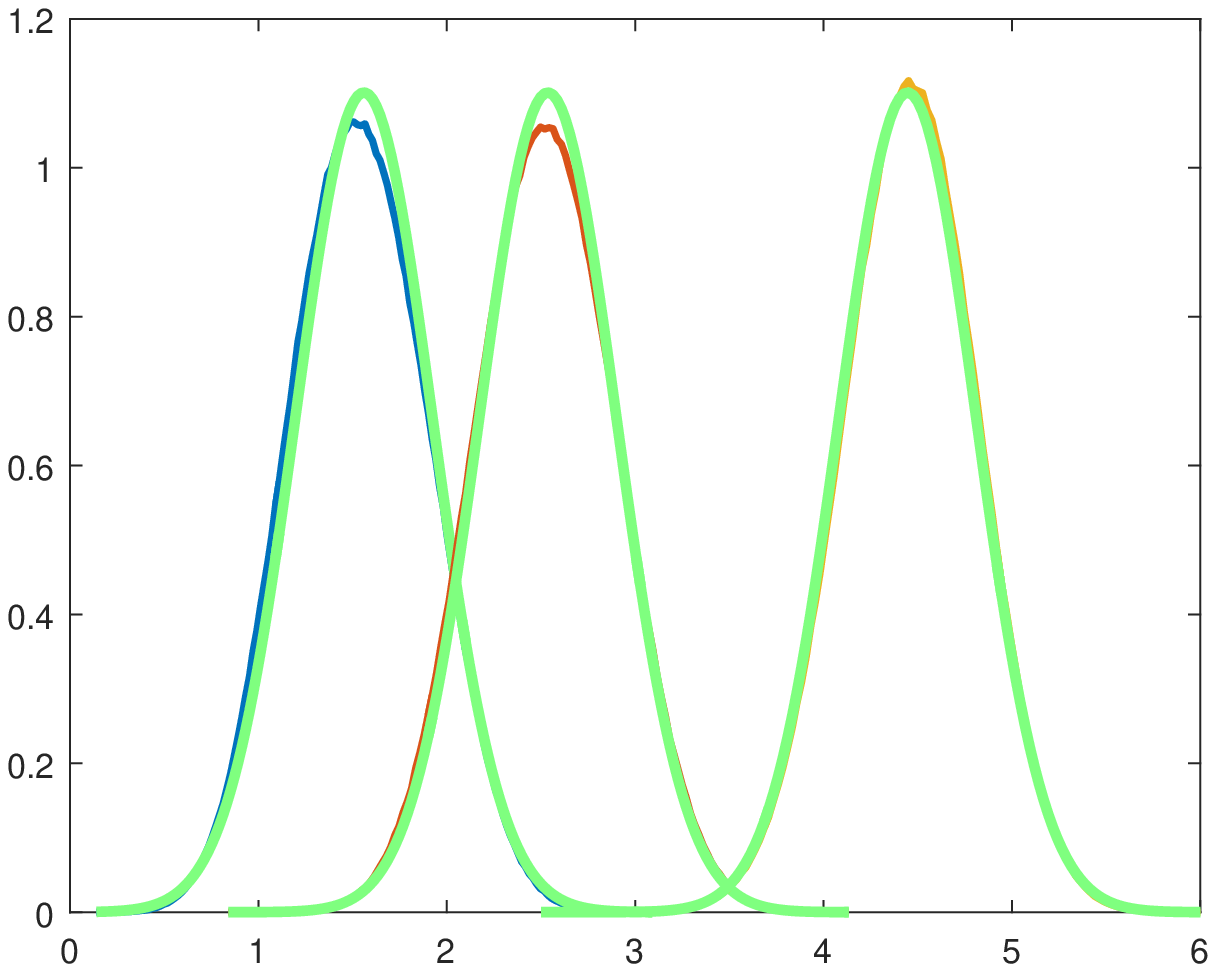}
		\caption{}
	\end{subfigure}
	\begin{subfigure}{0.2\textheight}
		\includegraphics[trim={1cm 0 1cm 0},clip,width=\linewidth]{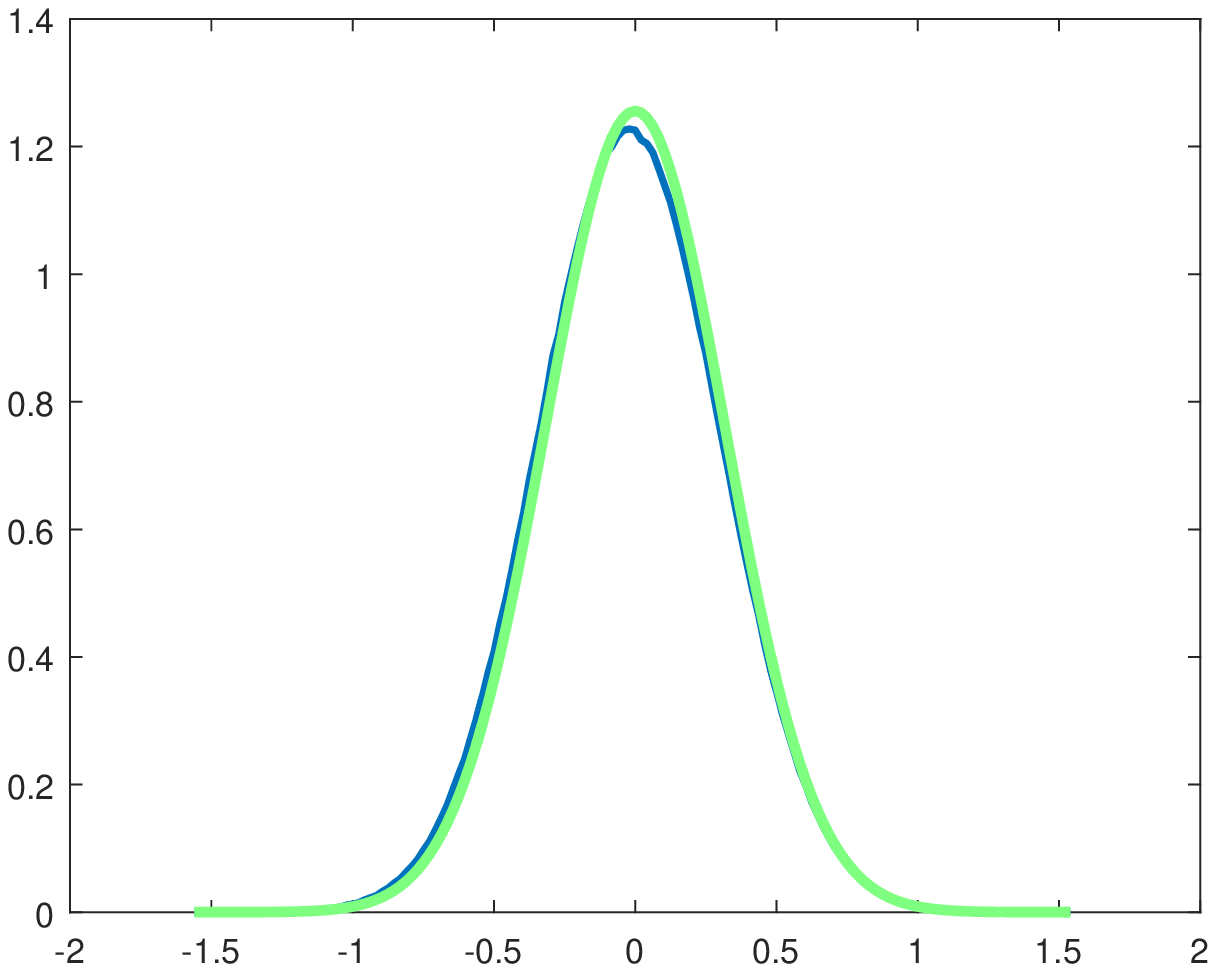}
		\caption{}
	\end{subfigure}

	\caption{The approximate normal distributions of $D(p_1,p_2)$,  $PD$, and $MD$ are displayed in (a), (b) and (c), respectively.
	The green curve stands for the approximated values, while the blue, red and orange plots represent the measured distance distributions for dimensions $6,10$, and $30$, respectively. In (c) all the curves are alike, therefore we display only the one obtained for 6D.
  }
	\label{fig:distributions}
\end{figure*}

In this section, we seek to analyze the probability that a triangle is broken by an erroneous edge, 
assuming that errors have the same distribution as distances. 
Thus, given a triangle formed by $p_1,p_2,p_3$ in a $d$-dimensional unit hypercube, we should learn the distribution of distances of triangle edges to facilitate the estimation of the probability that an erroneous edge breaks the triangle inequality. 

We begin by analyzing the distribution of edge lengths.
Let $ p_1, p_2 \sim U([0, 1]^d) $ be two points in a $d$-dimensional unit hypercube, uniformly randomly sampled.

The distance $D(p_1, p_2) = \sqrt{\sum_{i=1}^d { \left (p_1^{(i)} - p_2^{(i)} \right )}^2}$ has normal distribution
with mean and variance of  $ \sqrt{\frac{d}{6}} $ and $ \frac{7}{120} $, respectively, and the approximation gets better as $d$ increases. 
The proof is provided in \cite{SE_1985698}.

Hereafter, we denote $\mu = \sqrt{\frac{d}{6}}$ and $\sigma^2 = \frac{7}{120}$.

We empirically calculated $Cov\left( D(p_1, p_2),D(p_2, p_3)\right)$ - the covariance between $D(p_1, p_2)$ and $D(p_2, p_3)$, for several dimensions, and found that it has a constant value of $0.008$. This allows us to approximate the mean $E$ and variance $Var$ for $PD = D(p_1, p_2) + D(p_2, p_3) $ and $MD = D(p_1, p_2) - D(p_2, p_3)$, assuming that their distributions are normal:

\begin{fleqn}
	\begin{align*}
	E(PD) &= 2\mu \\
	Var(PD) &= 2\sigma^2 + 2Cov(D(p_1, p_2),D(p_2, p_3))  \\
	&=2\sigma^2 + 0.016 \approxeq 2.27\sigma^2 \\
	E(MD) &= \mu - \mu = 0 \\
	Var(MD) &= 2\sigma^2 - 2Cov(D(p_1, p_2),D(p_2, p_3)) \\
	&=  2\sigma^2 - 0.016 \approxeq  1.73\sigma^2
	\end{align*}
\end{fleqn}

These normal distributions are displayed in Figure \ref{fig:distributions} for dimensions $6,10$, and $30$.
As can be noted, the accuracy of the approximations increases for higher dimensions.

Given the above distributions, we are now ready to estimate the probability of a triangle to break by an erroneous edge.

\begin{theorem}
	Let $p_1,p_2,p_3$ be three points uniformly randomly and independently sampled in a  $d$-dimensional unit hypercube. 
Given the above distance distributions, and let the edge $(p_1, p_3)$ be an outlier distance, the probability that the triangle $p_1,p_2,p_3$ is broken is 	
	$$ 2 \times \Phi(\frac{-\mu}{\sqrt{2.73}\sigma}) + \Phi(\frac{-\mu}{\sqrt{3.27}\sigma}),$$ where $\Phi$ stands for standard normal cumulative distribution function.
\end{theorem}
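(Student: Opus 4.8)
The plan is to treat the outlier distance on the edge $(p_1,p_3)$ as an independent random variable $C$ drawn from the same law as a generic edge length, i.e. $C \sim N(\mu,\sigma^2)$, independent of the two intact edges $A = D(p_1,p_2)$ and $B = D(p_2,p_3)$. This is the natural reading of the evaluation model, in which an outlier replaces a true distance by a random entry of the distance matrix. The triangle then carries the three lengths $A$, $B$, $C$, and the first step is to rewrite the event ``the triangle is broken'' purely in terms of these. The triangle inequality fails exactly when one side exceeds the sum of the other two, so the broken event is the disjoint union of the \emph{too-long} event $C > A+B$ and the \emph{too-short} event $|A-B| > C$ (the latter collecting the two sub-cases $A > B+C$ and $B > A+C$). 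Because $A+B \ge |A-B|$ pointwise, these two events cannot occur simultaneously, so $P(\text{broken}) = P(C > A+B) + P(|A-B| > C)$, and I can evaluate the two terms separately.

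For the too-long term I would substitute $PD = A+B$ and form the single Gaussian $C - PD$. Since $C$ is independent of $A,B$, it is independent of $PD$, so $C - PD$ is normal with mean $\mu - 2\mu = -\mu$ and variance $\sigma^2 + \mathrm{Var}(PD) = \sigma^2 + 2.27\sigma^2 = 3.27\sigma^2$, using the approximation established above. Standardizing gives $P(C > PD) = P(C - PD > 0) = \Phi\!\left(\frac{-\mu}{\sqrt{3.27}\,\sigma}\right)$, which is the third summand of the claimed formula.

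For the too-short term I would use $MD = A - B$ and split $|A-B| > C$ into $MD > C$ (the case $A > B+C$) and $-MD > C$, equivalently $MD + C < 0$ (the case $B > A+C$); as before these are disjoint because $C \ge 0$ fixes the sign of $MD$. Each is handled by one Gaussian: $MD - C$ has mean $-\mu$ and variance $\mathrm{Var}(MD) + \sigma^2 = 1.73\sigma^2 + \sigma^2 = 2.73\sigma^2$, giving $P(MD - C > 0) = \Phi\!\left(\frac{-\mu}{\sqrt{2.73}\,\sigma}\right)$; and $MD + C$ has mean $\mu$ with the same variance $2.73\sigma^2$, giving $P(MD + C < 0) = \Phi\!\left(\frac{-\mu}{\sqrt{2.73}\,\sigma}\right)$ as well (this equality is also immediate from the symmetry of $MD$ about $0$). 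Summing yields $2\,\Phi\!\left(\frac{-\mu}{\sqrt{2.73}\,\sigma}\right)$, and adding the too-long term gives the stated probability.

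The main obstacle, and the only genuinely delicate point, is the too-short analysis: I must fold the absolute value $|A-B|$ into two disjoint Gaussian events without double counting, and the clean factor of $2$ relies on the symmetry $MD \sim -MD$ together with the independence of $C$ from $(A,B)$. A secondary concern is that the whole derivation rests on the normal approximations for $PD$ and $MD$ and on the modeling choice $C \sim N(\mu,\sigma^2)$; in particular the Gaussian tails assign a small probability to $C < 0$, so identifying the too-short event with exactly two disjoint half-lines is only approximate. Since the preceding section already adopts these normal approximations (and reports the empirical covariance $0.008$ behind the variances $2.27\sigma^2$ and $1.73\sigma^2$), I would invoke them directly and present the computation as exact under those assumptions.
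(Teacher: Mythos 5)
Your proposal is correct and takes essentially the same route as the paper's proof: decompose the broken event into the three mutually exclusive triangle-inequality violations, evaluate each as a Gaussian tail probability using the stated laws of $PD$ and $MD$ together with the independence of the outlier edge, and sum. The only difference is that you justify more carefully why the three probabilities simply add (the paper loosely calls the cases ``independent'' where it means disjoint) and you flag the small inconsistency introduced by Gaussian tails allowing negative lengths, but these are refinements of the same argument, not a different approach.
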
	

\begin{proof}
	It was shown above that
	$$D(p_1,p_3) \sim Norm(\mu, \sigma^2 )$$
	$$ D(p_1,p_2) + D(p_2,p_3) \sim Norm(2\mu, 2.27\sigma^2 )$$
	$$ D(p_2,p_3) - D(p_1,p_2) \sim Norm(0, 1.73\sigma^2 ),$$
    $$ D(p_1,p_2) - D(p_2,p_3) \sim Norm(0, 1.73\sigma^2 ) $$
	where $Norm$ stands for normal distribution.
	
	Assuming that $D(p_1,p_3)$ and $D(p_1, p_2)$, as well as  $D(p_1,p_3)$ and $D(p_2, p_3)$ are independent, a triangle breaks if one of the following inequalities is satisfied:
	$$ D(p_1,p_2) + D(p_2,p_3) < D(p_1,p_3)  $$
	$$ D(p_1,p_2) + D(p_1,p_3) < D(p_2,p_3)  $$
	$$ D(p_1,p_3) + D(p_2,p_3) < D(p_1,p_2)  $$
	
	Then, the probabilities for these independent cases are:
	$$ Pr(D(p_1,p_2) + D(p_2,p_3) < D(p_1,p_3)) = 
	\Phi(\frac{-\mu}{\sqrt{3.27}\sigma})  $$
	$$ Pr(D(p_1,p_2) + D(p_1,p_3) < D(p_2,p_3)) = 
	\Phi(\frac{-\mu}{\sqrt{2.73}\sigma}) $$
	$$ Pr(D(p_1,p_3) + D(p_2,p_3) < D(p_1,p_2)) =
	\Phi(\frac{-\mu}{\sqrt{2.73}\sigma}) $$
	
	Thus, the probability for a triangle to break due to an outlier edge is simply their sum:
	$$ 2 \times \Phi(\frac{-\mu}{\sqrt{2.73}\sigma}) + \Phi(\frac{-\mu}{\sqrt{3.27}\sigma})$$
\end{proof}

Note that the assumption that an outlier behaves similarly to a common distance (i.e., same distributions) makes its detection harder.Nevertheless, as we showed above, an outlier breaks a large number of triangles. For example, assuming that triangles are independent, with $d=2$ , $24\%$ of the triangles that are associated with an outlier edge are broken. This number is validated by our empirical evaluations above. Also, note that $\mu$ is a function of $d$, thus as the dimension $d$ increases, $\mu$ also increases, and the probability above decreases.

\section{Conclusions}

We presented a technique to filter distances prior to applying MDS, so that it is more robust to outliers. The technique analyzes the triangles formed by three points and their pairwise distances, and associates distances to broken triangles. Not every outlier is associated with a broken triangle, and an edge associated with a broken triangle is not necessarily an outlier. 
Thus, it is expected to produce both false positives and false negatives. However, as we showed, as long as the portion of outlier edges is reasonable, i.e., $20\%$, the false positives are non-destructive, and the quality of the embedding is high. Most notably, when the number of outliers is particularly small, 
as in reasonable real-world scenarios, the accuracy of our technique is particularly high and the improvement over a direct MDS is significant. We also showed that our technique is useful to distill the data before applying MDS, even when there are no significant outliers. 

In our work, we focused on Euclidean metric, however, the technique can also be applied to other metrics. For example, in psychology and marketing it is common to use the family of Minkowsky distances \cite{zhang2007market,jaworska2009review}:
$$ \left(\sum_{i=1}^n |x_i-y_i|^p\right)^{1/p}.$$
As long as $ p > 1$, our method applies.

Beyond the generalization to other metrics, our technique is applicable to general embedding, and not restricted to a specific stress function or MDS algorithm. As long as the embedding method does not require a full dissimilarity matrix, our method is viable. 

We would like to stress that our method has no parameters. The threshold $\phi$ uses the value of $|E| / 2$, which is parameter-free. However, one can define $\phi$ with a parameter that reflects the expected number of outliers, to refine the accuracy of the method, i.e., to produce less false-positives.

\bibliographystyle{IEEEtran}
\bibliography{mds} 

\begin{thebibliography}{10}
\providecommand{\url}[1]{#1}
\csname url@samestyle\endcsname
\providecommand{\newblock}{\relax}
\providecommand{\bibinfo}[2]{#2}
\providecommand{\BIBentrySTDinterwordspacing}{\spaceskip=0pt\relax}
\providecommand{\BIBentryALTinterwordstretchfactor}{4}
\providecommand{\BIBentryALTinterwordspacing}{\spaceskip=\fontdimen2\font plus
\BIBentryALTinterwordstretchfactor\fontdimen3\font minus
  \fontdimen4\font\relax}
\providecommand{\BIBforeignlanguage}[2]{{%
\expandafter\ifx\csname l@#1\endcsname\relax
\typeout{** WARNING: IEEEtran.bst: No hyphenation pattern has been}%
\typeout{** loaded for the language `#1'. Using the pattern for}%
\typeout{** the default language instead.}%
\else
\language=\csname l@#1\endcsname
\fi
#2}}
\providecommand{\BIBdecl}{\relax}
\BIBdecl

\bibitem{de1988convergence}
J.~De~Leeuw, ``Convergence of the majorization method for multidimensional
  scaling,'' \emph{Journal of classification}, vol.~5, no.~2, pp. 163--180,
  1988.

\bibitem{leeuw2008multidimensional}
J.~d. Leeuw and P.~Mair, ``Multidimensional scaling using majorization: Smacof
  in r,'' 2008.

\bibitem{de2004sparse}
V.~De~Silva and J.~B. Tenenbaum, ``Sparse multidimensional scaling using
  landmark points,'' Technical report, Stanford University, Tech. Rep., 2004.

\bibitem{chan2009efficient}
F.~K. Chan and H.-C. So, ``Efficient weighted multidimensional scaling for
  wireless sensor network localization,'' \emph{IEEE Transactions on Signal
  Processing}, vol.~57, no.~11, pp. 4548--4553, 2009.

\bibitem{spence1989robust}
I.~Spence and S.~Lewandowsky, ``Robust multidimensional scaling,''
  \emph{Psychometrika}, vol.~54, no.~3, pp. 501--513, 1989.

\bibitem{forero2012sparsity}
P.~A. Forero and G.~B. Giannakis, ``Sparsity-exploiting robust multidimensional
  scaling,'' \emph{IEEE Transactions on Signal Processing}, vol.~60, no.~8, pp.
  4118--4134, 2012.

\bibitem{cayton2006robust}
L.~Cayton and S.~Dasgupta, ``Robust euclidean embedding,'' in \emph{Proceedings
  of the 23rd international conference on machine learning}.\hskip 1em plus
  0.5em minus 0.4em\relax ACM, 2006, pp. 169--176.

\bibitem{Krusk64_0}
J.~B. Kruskal, ``Nonmetric multidimensional scaling: a numerical method,'' pp.
  115--129, 1964.

\bibitem{Shep62_0}
R.~N. Shepard, ``The analysis of proximities: multidimensional scaling with an
  unknown distance function. i.'' pp. 125--140, 1962.

\bibitem{Market_0}
L.~A. Neidell, ``The use of nonmetric multidimensional scaling in marketing
  analysis,'' pp. 37--43, 1969.

\bibitem{shepard1980multidimensional}
R.~N. Shepard, ``Multidimensional scaling, tree-fitting, and clustering,''
  \emph{Science}, vol. 210, no. 4468, pp. 390--398, 1980.

\bibitem{borg2005modern}
I.~Borg and P.~J. Groenen, \emph{Modern multidimensional scaling: Theory and
  applications}.\hskip 1em plus 0.5em minus 0.4em\relax Springer Science \&
  Business Media, 2005.

\bibitem{buja2008data}
A.~Buja, D.~F. Swayne, M.~L. Littman, N.~Dean, H.~Hofmann, and L.~Chen, ``Data
  visualization with multidimensional scaling,'' \emph{Journal of Computational
  and Graphical Statistics}, vol.~17, no.~2, pp. 444--472, 2008.

\bibitem{buja2002visualization}
A.~Buja and D.~F. Swayne, ``Visualization methodology for multidimensional
  scaling,'' \emph{Journal of Classification}, vol.~19, no.~1, pp. 7--43, 2002.

\bibitem{zigelman2002texture}
G.~Zigelman, R.~Kimmel, and N.~Kiryati, ``Texture mapping using surface
  flattening via multidimensional scaling,'' \emph{IEEE Transactions on
  Visualization and Computer Graphics}, vol.~8, no.~2, pp. 198--207, 2002.

\bibitem{chen20103d}
Z.~Chen and K.~Tang, ``3d shape classification based on spectral function and
  mds mapping,'' \emph{Journal of Computing and Information Science in
  Engineering}, vol.~10, no.~1, p. 011004, 2010.

\bibitem{pickup2014shrec}
D.~Pickup, X.~Sun, P.~L. Rosin, R.~Martin, Z.~Cheng, Z.~Lian, M.~Aono, A.~B.
  Hamza, A.~Bronstein, M.~Bronstein \emph{et~al.}, ``Shrec’14 track: Shape
  retrieval of non-rigid 3d human models,'' \emph{Proc. 3DOR}, vol.~4, no.~7,
  p.~8, 2014.

\bibitem{Li:2015}
\BIBentryALTinterwordspacing
Y.~Li, H.~Su, C.~R. Qi, N.~Fish, D.~Cohen-Or, and L.~J. Guibas, ``Joint
  embeddings of shapes and images via cnn image purification,'' \emph{ACM
  Trans. Graph.}, vol.~34, no.~6, pp. 234:1--234:12, Oct. 2015. [Online].
  Available: \url{http://doi.acm.org/10.1145/2816795.2818071}
\BIBentrySTDinterwordspacing

\bibitem{sammon1969nonlinear}
J.~W. Sammon, ``A nonlinear mapping for data structure analysis,'' \emph{IEEE
  Transactions on computers}, vol.~18, no.~5, pp. 401--409, 1969.

\bibitem{CitiesDataset}
J.~Burkardt, \emph{CITIES - City Distance Datasets}, 2011.

\bibitem{denoeux2004evclus}
T.~Den{\oe}ux and M.-H. Masson, ``Evclus: evidential clustering of proximity
  data,'' \emph{IEEE Transactions on Systems, Man, and Cybernetics, Part B
  (Cybernetics)}, vol.~34, no.~1, pp. 95--109, 2004.

\bibitem{graepel1999classification}
T.~Graepel, R.~Herbrich, P.~Bollmann-Sdorra, and K.~Obermayer, ``Classification
  on pairwise proximity data,'' \emph{Advances in neural information processing
  systems}, pp. 438--444, 1999.

\bibitem{ling2005using}
H.~Ling and D.~W. Jacobs, ``Using the inner-distance for classification of
  articulated shapes,'' in \emph{Computer Vision and Pattern Recognition, 2005.
  CVPR 2005. IEEE Computer Society Conference on}, vol.~2.\hskip 1em plus 0.5em
  minus 0.4em\relax IEEE, 2005, pp. 719--726.

\bibitem{1070shapes}
Brown-University, \emph{1070 Binary Shape Databases},
  \url{http://vision.lems.brown.edu/content/available-software-and-databases},,
  2005.

\bibitem{Chen2009ABF}
X.~Chen, A.~Golovinskiy, and T.~Funkhouser, ``A benchmark for {3D} mesh
  segmentation,'' \emph{ACM Transactions on Graphics}, vol.~28, no.~3, 2009.

\bibitem{Aristidou:2015_CGF}
A.~Aristidou, P.~Charalambous, and Y.~Chrysanthou, ``Emotion analysis and
  classification: Understanding the performers emotions using the {LMA}
  entities,'' \emph{Computer Graphics Forum}, vol.~34, no.~6, p. 262–276,
  September 2015.

\bibitem{kleiman2015shed}
Y.~Kleiman, O.~van Kaick, O.~Sorkine-Hornung, and D.~Cohen-Or, ``Shed: shape
  edit distance for fine-grained shape similarity,'' \emph{ACM Transactions on
  Graphics (TOG)}, vol.~34, no.~6, p. 235, 2015.

\bibitem{rousseeuw1987silhouettes}
P.~J. Rousseeuw, ``Silhouettes: a graphical aid to the interpretation and
  validation of cluster analysis,'' \emph{Journal of computational and applied
  mathematics}, vol.~20, pp. 53--65, 1987.

\bibitem{calinski1974dendrite}
T.~Cali{\'n}ski and J.~Harabasz, ``A dendrite method for cluster analysis,''
  \emph{Communications in Statistics-theory and Methods}, vol.~3, no.~1, pp.
  1--27, 1974.

\bibitem{vinh2010information}
N.~X. Vinh, J.~Epps, and J.~Bailey, ``Information theoretic measures for
  clusterings comparison: Variants, properties, normalization and correction
  for chance,'' \emph{Journal of Machine Learning Research}, vol.~11, no. Oct,
  pp. 2837--2854, 2010.

\bibitem{strehl2002cluster}
A.~Strehl and J.~Ghosh, ``Cluster ensembles---a knowledge reuse framework for
  combining multiple partitions,'' \emph{Journal of machine learning research},
  vol.~3, no. Dec, pp. 583--617, 2002.

\bibitem{rosenberg2007v}
A.~Rosenberg and J.~Hirschberg, ``V-measure: A conditional entropy-based
  external cluster evaluation measure.'' in \emph{EMNLP-CoNLL}, vol.~7, 2007,
  pp. 410--420.

\bibitem{SE_1985698}
Henry, \emph{How is the distance of two random points in a unit hypercube
  distributed?}, 2016.

\bibitem{zhang2007market}
Y.~Zhang, J.~Jiao, and Y.~Ma, ``Market segmentation for product family
  positioning based on fuzzy clustering,'' \emph{Journal of Engineering
  Design}, vol.~18, no.~3, pp. 227--241, 2007.

\bibitem{jaworska2009review}
N.~Jaworska and A.~Chupetlovska-Anastasova, ``A review of multidimensional
  scaling (mds) and its utility in various psychological domains,''
  \emph{Tutorials in Quantitative Methods for Psychology}, vol.~5, no.~1, pp.
  1--10, 2009.

\end{thebibliography}

\end{document}